\documentclass[runngingheads]{llncs}
\usepackage[T1]{fontenc}
\usepackage[utf8]{inputenc}
\usepackage{amsmath,amssymb}
\usepackage{cleveref}
\usepackage{graphicx} 
\usepackage{caption}
\usepackage{xcolor}
\usepackage[normalem]{ulem}
\usepackage{tikz}
\usetikzlibrary{positioning, arrows.meta, patterns}
\usepackage{wrapfig}
\usepackage [linesnumbered,ruled,vlined]{algorithm2e}

\usepackage{etoolbox}
\AtBeginEnvironment{thebibliography}{\def\url#1{}}

\newcommand\recht\operatorname
\newcommand{\tOR}{\mathtt{OR}}
\newcommand{\tAND}{\mathtt{AND}}
\newcommand{\tBE}{\mathtt{BE}}
\newcommand{\R}{{\it Root}}
\newcommand{\ch}{{\it Inp}}
\newcommand{\outp}{{\it Outp}}
\newcommand{\BB}{\mathbb{B}}
\newcommand{\tzero}{0}
\newcommand{\tone}{1}
\newcommand{\BE}[1]{\recht{BE}_{#1}}
\newcommand{\struc}[1]{{\Phi}_{#1}}

\newcommand{\X}{X}
\newcommand{\U}{\mathsf{U}}

\newcommand{\AC}[1]{{\bf{AC#1}}}
\newcommand{\ACo}{{\bf AC-o}}
\newcommand{\ACu}{{\bf AC-u}}
\newcommand{\ACm}{{\bf AC-m}}
\newcommand{\V}{\mathsf{V}}
\newcommand{\iV}{\V^{{\textrm{i}}}}
\newcommand{\cV}{\V^{{\textrm{c}}}}
\newcommand{\E}{\mathsf{E}}

\newcommand{\lock}{\sf LF}
\newcommand{\electricity}{\sf EF}
\newcommand{\alert}{{\sf AF}}
\newcommand{\public}{{\sf PF}}
\newcommand{\operator}{{\sf OF}}
\newcommand{\ExElectricity}{{\sf B_{EF}}}
\newcommand{\ExPublic}{{\sf B_{PF}}}
\newcommand{\ExOperator}{{\sf B_{OF}}}
\newcommand{\doorbellcm}{{\sf doorbell}}
\newcommand{\fElectricity}{\sf electricity fails}
\newcommand{\fPublic}{\sf public fails}
\newcommand{\fOperator}{\sf operator fails}
\newcommand{\fAlert}{\sf alert fails}
\newcommand{\fLock}{\sf lock fails}

\newcommand{\inodes}{\U^{{\textrm{i}} }}
\newcommand{\cnodes}{\U^{\textrm{c}}}

\newcommand{\hlbox}[1]{%
  \smallskip\begin{center}
  \fboxrule1pt\fboxsep3pt\fcolorbox{black!45}{black!8}{%
  \begin{minipage}{.96\linewidth}#1\end{minipage}}
  \end{center}\smallskip}

\newcommand{\GCmar}[1]{\marginremark{blue}{white}{\tiny{[GC]~ #1}}}

\newcommand{\MSinl}[1]{}
\newcommand{\colorpar}[3]{}
\newcommand{\marginremark}[3]{}
\newcommand{\MSmar}[1]{}
\newcommand{\YD}[1]{}
\newcommand{\MLZ}[1]{}
\newcommand{\gc}[1]{}

\newcommand{\myparagraph}[1]{\medskip\noindent{\bf #1}}

\usepackage{xparse}

\NewDocumentCommand{\mylabelparagraph}{m o}{%
  \medskip
  \noindent
  \textbf{#1}%
  \IfValueT{#2}{\label{#2}}%
}

\begin{document}
%
\title{Actual causality in fault trees}


\author{Georgiana Caltais\inst{1}\orcidID{0000-0002-8653-2299}\\ 
Milan {Lopuha\"a-Zwakenberg}\inst{1}\orcidID{0000-0001-5687-854X}\\ 
Mari\"elle {Stoelinga}\inst{1,2}\orcidID{0000-0001-6793-8165}}

\authorrunning{G. Caltais et al.}

\institute{University of Twente, The Netherlands 
\and
Radboud University, The Netherlands\\
\email{\{g.g.c.caltais,m.a.lopuhaa,m.i.a.stoelinga\}@utwente.nl}}

\maketitle              
\begin{abstract}
Fault trees are a widely used as effective risk models for complex systems, answering the question \emph{what can go wrong?}, especially through minimal cut set analysis. We study fault trees from the perspective of Halpern \& Pearl's theory of actual causality. This allows us to use fault trees to answer the question \emph{why has it gone wrong?}, which is fundamental to failure diagnostics. We give a complete classification of each of the different notions of actual causality in terms of the fault tree's graph structure and logical structure, and show how minimal cut sets give rise to actual causes. Furthermore, we discuss the complexity of computing causality in fault trees, and develop algorithms to do so.
\keywords{System Failures \and Fault Trees \and Actual Causality.}
\end{abstract}
\section{Introduction} \label{sec:intro}

\begin{wrapfigure}[17]{r}{0.5\textwidth}
    \centering
\vspace{-6em}
\includegraphics[width=0.4\textwidth]{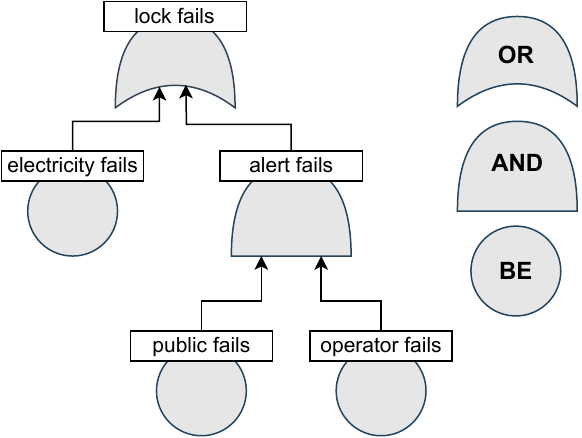}
\vspace{-0.5em}
    \caption{FT modeling a \emph{visdeurbel}, a system supporting migrating fish in the waterways of Utrecht: public observers and operators can both ring a door bell to open the lock when they see a fish on an underwater camera. The lock fails to open (OR-gate) if either electricity fails (basic event, BE) or if the alert fails (AND-gate), which happens if both the public and the operator fail to act.}
    \label{fig:FT-doorbell-expl}
\end{wrapfigure}

The \emph{visdeurbel} (“fish doorbell”) is a citizen-science initiative designed to support migrating fish as they pass through the waterways of Utrecht, a major Dutch city \cite{visdeurbel2023,smithsonian2023visdeurbel}. An underwater camera provides a live stream of fish in front of the lock. The lock operator can be alerted in two ways. Public observers may ring the virtual doorbell when they see fish, or the operator can watch the camera feed directly. The system aids fish migration and raises public awareness of ecology.
This example illustrates how different components and failures (electric, public and operator failures) contribute to a system-level failure (the lock failing to open).


Fault trees (FTs) are a structured, hierarchical model to analyse failure propagation in complex systems~\cite{ruijters2015fault,stoelinga2025concise}.
FTs enable a top-down, deductive approach to failure analysis, breaking down system failures into combinations of component failures using logical gates (e.g., AND, OR). 
In FT-terminology, the inputs of such a gate are often called the \emph{immediate causes} of the higher-level cause labeled by the gate \cite{Vesely1981FaultTreeHandbook}.
Widely used in safety-critical industries such as aerospace \cite{bozzano2017formal}, nuclear energy \cite{guohua2020review}, and cybersecurity \cite{nagaraju2017survey}, FTs enable structured reasoning about system reliability, risk assessment, and failure propagation. They serve as a powerful design tool, helping engineers identify weak points and prioritise mitigation strategies.
An FT for the ``fish doorbell''  is given in Fig.~\ref{fig:FT-doorbell-expl}.

As a risk model, FTs are primarily used to analyze potential failures: \emph{what can go wrong?} This is done via minimal cut set (MCS) analysis: an MCS is a set of basic failure events that need to happen simultaneously in order for the system to fail. After finding a FT's MCSs, these are then analyzed in terms of size, likelihood, and common causes to determine the system's overall reliability.

\noindent \textbf{Actual causality in the Halpern-Pearl framework.}
 Causality, on the other hand, asks the question \emph{why has it gone wrong?} after the fact. In computer science, the framework of actual causality (AC) by Halpern and Pearl \cite{HalpernP01} has been especially popular. In AC, a system is modeled as a \emph{causal model}: a directed acyclic graph of variables, with functions describing each variable in terms of its input. The system's context is given by values of the initial variables. Within such a system one can perform \emph{interventions} using \emph{do-calculus}~\cite{DBLP:conf/uai/Pearl12} to forcibly set certain variables. This allows one to go beyond correlations and define causal relationships based on \emph{counterfactuals}, 
 describing what would have 
 happened if some conditions were different.
This has been applied successfully to problems such as fault localization \cite{baah2010causal}, debugging \cite{degano1999causality}, and failure reasoning \cite{kushnir2016causality}.

\noindent \textbf{Incident analysis and FTs.} 
While FTs are mostly used to prevent failures through risk assessment, they are also an important tool in \emph{incident analysis} or \emph{diagnostics}: when a system fails, understand what has happened and why, to prevent its recurrence. A formal framework of causality would be an important asset towards systematic incident analysis. While the use of FTs in diagnoses has been well-studied with systematic methods \cite{Hurdle2007SystemFaultDiagnostics,LiuLiLidiagnostic08,Papadopoulos2003ModelbasedSM,QianZhengCaoDiesel05},
\emph{\textbf{a systematic application of actual causality to fault trees is missing from the current literature.}} 
So far, the connection between AC and FTs has been limited  to generating FTs from trace data \cite{DBLP:journals/ijccbs/Leitner-FischerL13}, and translating \emph{attack trees} ($\approx$FTs for cybersecurity) to causal models, without investigating their actual causes \cite{DBLP:conf/gramsec/IbrahimRSAP20}.\MLZ{Georgiana, please check}\GCmar{Checked, looks good!}

\noindent \textbf{Contributions.} Our main contribution is to apply the AC framework to FTs.
We consider the most common type of FTs, namely static, coherent FTs.
These can be phrased as directed acyclic graphs of Boolean variables, that are functions of their inputs via their role as AND/OR-gates. Thus FTs can directly be studied from an AC perspective. This by itself is not surprising, but the fact that all functions are Boolean and non-decreasing leads to a deeper understanding than would be possible in regular causal models.

First, it gives us concise classifications of AC. For the three main causality definitions in the AC literature~\cite{ActualCausalityHalpern} (\emph{original}: \ACo{}, \emph{updated}: \ACu{}, \emph{modified}: \ACm{}), we classify the actual causes of event failure in FTs, given a context, allowing for a straightforward causal analysis of FTs (Thms. \ref{thm:maino}, \ref{thm:mainu} \& \ref{thm:mainm}). 
These classifications do not only take into account the Boolean nature of FTs, but also their graph structure. By contrast, FT analysis typically only considers the underlying Boolean functions. This leads to the surprising fact that \emph{\textbf{equivalent FTs can behave differently as causal models.}} From an AC perspective, this is a feature rather than a bug: in AC, intermediate gates are not just dummy variables, but represent real-world events or subsystems that can be targets for interventions. For FTs, \ACo{}, \ACu{}, \ACm{} form a spectrum for defining causality, from a larger emphasis on the graph structure to a complete emphasis on the Boolean nature. As {\ACm} looks only at the Boolean nature of the FT, the relation between {\ACm} and MCSs is stronger. 
In fact, our results allow us to classify MCSs in terms of \ACm{} (Cor.~\ref{cor:mcs}).

Second, we look at the relation between minimal cut sets and \ACo{} and \ACu{}. As MCSs are \emph{potential} pathways to failure, and AC are events that contributed to failure in a specific scenario, we cannot expect them to be the same. In fact, MCSs are typically large in robust systems to prevent a single-point-of-failure, but ACs are typically small: in fact, it is known for {\ACo}, and we show for {\ACu}, that actual causes are only singletons. Nevertheless, there is a clear relation between MCSs and AC. We show that if we take an MCSs as context, then all of its elements are ACs (Thm. \ref{thm:mcs}). Furthermore, for {\ACo} and {\ACu}, in a given context, we show that every element of every MCS that has happened, is an AC (Thm. \ref{thm:mcs2}). If the FT is a (graph-theoretic) tree rather than a DAG,
or if it is in disjunctive normal form, then the converse also holds.

Third, we determine the hardness of determining causes (Thm.\ref{thm:comp}), showing that this is less complex than the general case for \ACu{} and \ACm{}, and leverage our classification of actual causality in fault trees to develop algorithms that find all causes for \ACo{} and \ACm{} (Sec. \ref{sec:algo}). Generalizing these algorithms to \ACu{} does not lead to an appreciable (theoretical) speedup over naïve algorithms; we leave the development of fast algorithms for \ACu{} to future work.

\hlbox{
Summarized, our contributions are: \vspace{-0.5em}
\begin{enumerate}
\item A translation from FTs to causal models, allowing us to apply the notions of actual causality to FTs (Def.~\ref{def:ftcm}).
\item For FTs, a classification of {\ACo}, {\ACu} and {\ACm} in terms of their graph structure and Boolean structure (Thms. \ref{thm:maino}, \ref{thm:mainu}, \ref{thm:mainm}).
\item We show that MCS elements in FTs become actual causes with the MCS as context (Thm.~\ref{thm:mcs}).
\item We show that, given a context, all MCSs that happened give rise to actual causes in \ACo{} and \ACu{} (Thm.~\ref{thm:mcs2}). This characterizes MCSs for tree-shaped FTs and FTs in disjunctive normal form.
\item We show that determining AC in FTs is NP-complete for \ACo{} and \ACm{}, and polynomial for \ACu{} (Thm. 10).
\item We give algorithms for finding AC in FTs for \ACo{} and \ACm{} (Sec. \ref{sec:algo}).
\item Proofs and illuminating examples for all these statements.
\end{enumerate}}

\section{Fault trees}\label{sec:fault-trees}

\GCmar{Define paths in FTs.}\MLZ{Not necessary, I would say, since FTs are DAGs.}


In a directed graph $(V,E)$, a \emph{root} or \emph{sink} is a vertex without outgoing edges; a \emph{leaf} (\emph{source}) is a vertex without ingoing edges.
The \emph{inputs} of a vertex $v$ are its predecessors: $\ch(v) = \{w \in V \mid (w,v) \in E\}$.





\myparagraph{Fault trees.} 
A FT is a systematic, graphical tool for analyzing why systems fail. 
FT analysis proceeds by breaking down the system-level failure into its causes, and these causes into subcauses, until the root causes are found. These are called \emph{basic events} (BEs), represented by the leaves of the tree; see Fig.~\ref{fig:FT-doorbell-expl}.
Higher-level failures are connected using an \emph{AND-gate} when all subcauses must occur for the failure to propagate, or by an \emph{OR-gate} when a single subcause is sufficient. Formally, a FT is a directed acyclic graph, whose non-leaf nodes are labeled with AND- and OR-gates.

\label{p:FT}

\begin{definition}
A \emph{FT} is a triple $T = (V,E,\gamma)$, where $(V,E)$ is a directed acyclic graph with a unique root $\R_{T}$, and $\gamma\colon V \rightarrow \{\tBE,\tOR,\tAND\}$ satisfies $\gamma(v) = \tBE$ iff $v$ is a source.
\end{definition}

\begin{wrapfigure}[9]{r}{0.4\linewidth}
\centering
\vspace{-3em}
\begin{tabular}{lll}
\hline
AC & Actual causality & p\pageref{p:AC} \\
\ACm{} & modified AC & p\pageref{p:ACm}\\
\ACo{} & original AC & p\pageref{p:ACo}\\
\ACu{} & updated AC & p\pageref{p:ACu}\\
BE & basic event & p\pageref{p:BE}\\
CM & causal model & p\pageref{p:CM}\\
FT & fault tree & p\pageref{p:FT}\\
MCS & minimal cut set & p\pageref{p:MCS}\\
\hline
\end{tabular}
\caption{Abbreviations used.}

\vspace{-3em}
\end{wrapfigure}

The FT is called \emph{tree-shaped} if the graph $(V,E)$ is a (directed) tree, i.e., no vertex has two successors. In other cases, we call it \emph{DAG-shaped.} 
The nodes of a FT are usually called {\em events}: the root $\R_T$ of the tree is called the {\em top level event}, the leaves are the {\em basic events} and all other nodes are {\em intermediate events}. 
We write $\BE{T} = \{v \in V \mid \gamma(v) = \tBE\}$ for the set of basic events. We omit the subscript if $T$ is clear from the context. \label{p:BE}

The FT in Fig.~\ref{fig:FT-doorbell-expl} features
three basic events {\fElectricity} ($\mathsf{EF}$), {\fPublic} ($\mathsf{PF}$) and {\fOperator} ($\mathsf{OF}$),
one intermediate event {\fAlert} ($\mathsf{AF}$),
and the top event {\fLock} ($\mathsf{LF}$). 

\myparagraph{Status vectors.}
A {\em status vector} indicates for each basic event its status, i.e., whether that BE has failed. Write $\BB := \{0,1\}$; a status vector for $T$ is
a  $\vec u \in \BB^{\BE{}}$, where $u_v=1$ indicates that BE $v$ has failed and 
$u_v=0$ that $v$ is operational. We often identify a status vector with its set of basic events: Given a set $C\subseteq \BE{}$, its
  status vector ${\vec u}^C$ is given by 
$u^C_{v}=1$ for $v\in C$ and $u^C_v=0$ for $v\not\in C$.
Given a $b \in \mathbb{B}$, we write
$\vec u[v\leftarrow b]$
for the vector that equals $\vec u$, except that it sets $v$ to $b$: $(\vec u[v\leftarrow b])_v=b$
and $(\vec u[v\leftarrow b])_{v'}=(\vec u)_{v'}$ for $v\neq v'$.


\myparagraph{Structure function.}
The semantics of a FT is defined by its structure function. Given a status vector $\vec u$ and a node $v$, $\struc{T}(\vec u,v)$ indicates whether the node $v$ fails on status vector $\vec u$.

\begin{definition}\label{def:struct-func}
The \emph{structure function} of a FT $T$ is a function $\struc{T}\colon\BB^{\BE{T}}\times V \rightarrow \BB$ defined recursively as
\begin{align*}
\struc{T}(\vec u,v) = \begin{cases}
\vec u_v & \textrm{ if $\gamma(v) = \tBE$}\\
\bigvee_{w \in \ch(v)} \struc{T}(\vec u,w) & \textrm{ if $\gamma(v) = \tOR$}\\
\bigwedge_{w \in \ch(v)} \struc{T}(\vec u,w) & \textrm{ if $\gamma(v) = \tAND$}
\end{cases}
\end{align*}
We abbreviate 
$\struc{T}(\vec u) =
\struc{T}(\vec u, \R_T)$ and $\struc{T}(C,v) = \struc{T}({\vec u}^C,v)$ for a set of basic events $C$.
Two FTs $T$ and $T'$ are {\em equivalent} if they have the same structure function, i.e. 
$\struc{T}(\vec u,v) =
\struc{T'}(\vec u,v)$ for all $\vec u,v$. 
\end{definition}
The structure function of the FT in Fig.~ \ref{fig:FT-doorbell-expl} is given by $\Phi(\vec{u}) = u_{\mathsf{EF}} \vee (u_{\mathsf{PF}} \wedge u_{\mathsf{OF}})$.
In particular, $\struc{}(0,1,0)=0$
and $\struc{}(1,1,0)=1$, also written as
 $\struc{}(\{\mathsf{PF}\})=0$
and $\struc{}(\{\mathsf{EF},\mathsf{PF}\})=1$.

\myparagraph{Cut sets.} A cut set is a set of basic events that causes the top event to fail; a cut set is minimal if no proper subset is a cut set. 
Minimal cut sets are a key tool in FT analysis, as they constitute  a minimal cause for the FT to fail. \label{p:MCS}

\begin{definition}
    A \emph{cut set} is a subset $C \subseteq \BE{T}$ such that $\struc{T}(C,\R) = \tone$. A \emph{minimal cut set} (MCS) furthermore satisfies $\struc{T}(C',\R) = \tzero$ for all $C' \subset C$.
\end{definition}
The following are cut sets of the FT in Fig.~\ref{fig:FT-doorbell-expl}; The first two sets are MCSs. 
\[
\{\electricity\}~~~~ \{\public, \operator\}~~~~ \{\electricity, \public\}~~~~\{\electricity, \operator\}~~~~\{\electricity, \public, \operator\}
\]

\myparagraph{Coherence.}
Since (standard) FTs do not contain any negations, they are {\em coherent}: a cut set remains a cut set if we add more elements, because a failed system remains failed if more failures occur. 
Formally, the structure function is monotonous: 
if $C\subset C'$, then $\struc{}(C,v)\leq 
\struc{}(C',v)$ for all events $v$.

A BE $e$ is critical for a status vector if flipping its value from $0$ to $1$ causes the system to transition from non-failure to failure.
\begin{definition}
 Basic event $e$ is {\em critical} for status vector $\vec{u}$ if 
 $\struc{}(\vec u) = 0$ and 
$\struc{}(\vec u[e\leftarrow 1]) = 1.$
A basic event $e$ is {\em relevant} if there is a vector in which $e$ is critical. Otherwise, it is {\em irrelevant}.
\end{definition}

A basic event $e$ is irrelevant iff the structure function is constant in $u_{e}$ iff $e$ is not an element of any MCS.

\myparagraph{Causality in FTs.}
In FT analysis, each MCS is seen as a potential root cause. All elements of a given MCS are seen as causes, since we need each of them to occur for a top level failure.

\section{Preliminaries on causal models}\label{sec:AC-prelim}

 Halpern and Pearl~\cite{HalpernP01}
 define actual causality (AC)\label{p:AC} in terms of 
 Causal Models (CMs), composed of variables and their corresponding values. The dependencies between variables are expressed through structural equations encoding the system's behavior. We first recap CMs and associated concepts as in~\cite{HalpernP01}.

\subsection{Causal models and networks} 

A \emph{causal model} is defined based on a signature $ {S}$ consisting of a pair $( {\cnodes},  {\inodes})$, where $ {\cnodes}$ is a set of \emph{exogenous variables}, $ {\inodes}$ a set of \emph{endogenous variables}.\label{p:CM}
For notational convenience, we assume all variables to be Boolean. 


Intuitively, exogenous variables represent a \emph{context} ($\cnodes$), or external factors: conditions outside the model’s control that influence the system, but are not explained by the model itself. Endogenous variables are the \emph{internal} variables ($\inodes$) of the model.
Take the fish doorbell example of the introduction.
For instance, we let $\ExElectricity \in \cnodes$
(``Electricity Failure'') 
denote an exogenous variable representing the external failure of supply of electrical power. Its value determines the endogenous variable $\electricity \in \inodes$, which captures the availability of electricity. Thus, $\ExElectricity$ serves as the contextual factor governing the state of $\electricity$, and may be thought of as reflecting the environment’s provision (or interruption) of power.
\begin{wrapfigure}[14]{r}{0.5\linewidth}
    \centering
\vspace{-2em}
\includegraphics[width=0.4\textwidth]{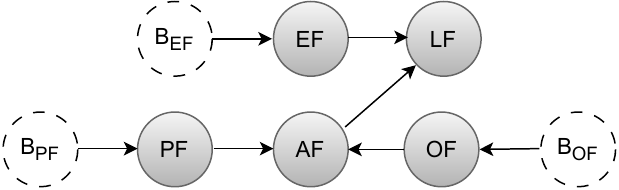}
\caption{Causal network for $M_{\doorbellcm}$ (see Example \ref{eg:CM-fish-doorbell}). Exogenous variables (dashed):
$\ExElectricity$ electricity availability,
$\ExPublic$ public's engagement,
$\ExOperator$ operator's availability.
Endogenous variables (grey):
$\electricity$ electrical supply failure,
$\public$ public's alert failure,
$\operator$ operator's alert failure,
$\alert$ combined alert failure, 
$\lock$ lock failure.
}
\label{fig:doorbell-causal-network}
\end{wrapfigure}
The values of endogenous variables $X \in \inodes$ are determined by structural equations $F_X$, based on the exogenous  and other endogenous variables.


\begin{definition}
A \emph{causal model}  over a signature $ {S} = ( {\cnodes},  {\inodes})$ is a tuple $M = ( {S},  {F})$, where $ {F}$ assigns to each endogenous variable $X \in  {\inodes}$ a function
\[
F_X\colon \BB^{\cnodes}\times\BB^{\inodes\setminus\{X\}}\rightarrow\BB.
\]
\end{definition}

Each $F_X$ captures how the value of $X$ is determined by the other variables in $ {\cnodes} \cup  {\inodes}$.  
The functions $F_X$ define a so-called set of \emph{structural equations}; causal models are also referred to as \emph{structural equation models (SEMs)}.

\begin{example}[CM ``fish doorbell'']\label{eg:CM-fish-doorbell}
The CM for the ``fish doorbell'' scenario captures how an electrical supply failure or an alerting failure can enable a failure of the lock.
The set of endogenous variables are the same events present in Fig.~\ref{fig:FT-doorbell-expl}: $\inodes = \{\mathsf{EF},\mathsf{PF},\mathsf{OF},\mathsf{AF},\mathsf{LF}\}$. We further introduce $\ExElectricity \in \cnodes$, representing the external supply of electrical power, which affects $\electricity$.
$\ExPublic \in \cnodes$ corresponds to the level of public engagement (whether there are enough online observers to notice fish) which influences $\public$.
Furthermore, $\ExOperator \in \cnodes$, representing the operator's availability, determines $\operator$. The corresponding CM $M_{\doorbellcm}$ and its structural equations follow the fault tree's gate logic and are given by:
\begin{align}
\electricity = \ExElectricity \quad \quad \public = \ExPublic \quad \quad \operator = \ExOperator \quad \quad \alert = \public \land \operator \quad \quad \lock = \electricity \lor \alert \nonumber
\end{align}
\end{example}
\myparagraph{Causal networks.}
A CM induces a \emph{causal network}: a directed edge from a
variable $Y \in \inodes \cup \cnodes$ to a
variable $X \in \inodes$ 
\MSmar{Conventions for int / ext/ all variables? X, U, etc?}
exists whenever the value of $F_X$ depends on the value of $Y$. 
Exogenous variables in $\cnodes$ have no incoming edges as they are the `inputs' of the network.
AC~\cite{HalpernP01} is typically restricted to \emph{acyclic} causal networks, ensuring that, for a given context, the structural equations have a unique solution. 

Causal networks provide an intuitive and informative means of understanding the causal pathways within a CM. The causal network corresponding to Example \ref{eg:CM-fish-doorbell} is provided in Fig.~\ref{fig:doorbell-causal-network}. 

\subsection{Reasoning in causal models}
Given a signature $S = ({\cnodes}, {\inodes})$, an expression $X = x$, where $X \in {\inodes}$ and $x \in \BB$, is referred to as a \emph{primitive event}. A \emph{basic causal formula} has the structure $[Y_1 \leftarrow y_1, \ldots, Y_k \leftarrow y_k] \varphi$, 
where $\varphi$ is a Boolean combination of primitive events, and each $Y_i \in \inodes$ is a distinct endogenous variable. This formula is abbreviated as $[\vec{Y} \leftarrow \vec{y}] \varphi$ and, intuitively, it states that $\varphi$ holds when $Y_i$ is set to $y_i$.

\myparagraph{Context and interventions.}
A \emph{context} refers to an assignment $\vec{u}$ of values to all exogenous variables in ${\cnodes}$. We denote by $(M, \vec{u})$ the CM $M$ under the specific assignment $\vec{u}$ for $\cnodes$.
An 
intervention modifies a CM $M = ({S}, {F})$ by fixing an endogenous\footnote{Interventions are often restricted to endogenous variables, as exogenous variables are considered given \cite{ActualCausalityHalpern}. Mathematically, this restriction is not necessary.} variable $X \in \inodes$ 
to a particular value $x$. 
Interventions are considered as external effects on the system, not governed by the structural equations. 
The result is a new model, written $M_{X \leftarrow x}$, which is identical to $M$ except that the structural equation  $F_X$
is replaced with the constant assignment $X = x$.
When applying multiple interventions, such as $X_1 \leftarrow x_1, \ldots, X_n \leftarrow x_n$, we write $\vec{X} \leftarrow \vec{x}$ and denote the modified model by $M_{\vec{X} \leftarrow \vec{x}}$.

\myparagraph{Satisfaction.} We write $(M, \vec{u}) \models \varphi$ to indicate that the causal formula\MSmar{Say that $\varphi$ is a boolean formula over endogenous variable}\GCmar{this is stated in the first paragraph of the section} $\varphi$ 
holds in the CM $M$ under context $\vec{u}$. The satisfaction relation $\models$ is defined inductively. Specifically, $(M, \vec{u}) \models X = x$ iff the variable $X$ takes the value $x$ in the unique solution to the structural equations of $M$ under context $\vec{u}$. Uniqueness is guaranteed since~\cite{HalpernP01} considers acyclic models.
Conjunctions and negations are interpreted in the standard logical way. Furthermore, $(M, \vec{u}) \models [\vec{Y} \leftarrow \vec{y}] \varphi$ holds iff $(M_{\vec{Y} \leftarrow \vec{y}}, \vec{u}) \models \varphi$.
Potential \emph{actual causes} are of the form $X_1 = x_1 \land \cdots \land X_k = x_k$, with $X_i \in \inodes$, which can be abbreviated as $\vec{X} = \vec{x}$.

\begin{example}[CM reasoning]\label{eg:models}    
Consider again the CM $M_{\doorbellcm}$ in Example~\ref{eg:CM-fish-doorbell}. A context $\vec u$ sets the exogenous variables. Assume $\vec u$ sets $\ExElectricity = 0$, $\ExPublic = 1$ and $\ExOperator = 0$: the public is not engaged, electricity is available and the operator is  available. Setting endogenous variables (e.g., $\alert \leftarrow 1$) corresponds to interventions that override those variables. The following hold:
\begin{align*}
(M_\doorbellcm, {\vec u}) &\models {\lock} = 0, &  (M_\doorbellcm, {\vec u})  &\models {\alert} = \wedge {\electricity} = 0,\\  
(M_\doorbellcm, {\vec u})  &\models [\alert \leftarrow 1]({\lock} = 1).
\end{align*}
In the last equation, the intervention $[\alert \leftarrow 1]$ changes the behavior of the lock from a non-failing one (${\lock} = 0$) to a failing one (${\lock} = 1$).
\end{example}

\section{Fault trees as causal models}\label{sec:FTasCM}


Following the examples above, we embed FTs in CMs as follows.

\begin{definition} \label{def:ftcm}
Let $T = (V,E,\gamma)$ be a FT.
Its CM $M_T$ is given by:
\begin{itemize}
    \item For the signature $S=(\cnodes,\inodes)$, 
    we associate  to each FT node $v$ an endogenous variable $X_v$, i.e., we set $\inodes= \{X_v \mid v\in V\}$. 
    Further, we associate to each basic event $e$ an exogenous variable $Y_e$, i.e., 
     $\cnodes= \{ Y_e \mid e\in \tBE\}$.

    \item The equations of $M_T$ follow the FT logic:
\begin{align*}
X_v = Y_v &\quad \textrm{ if $\gamma(v) = \tBE$}, & \quad \quad \quad \quad \\
X_v = \bigvee_{w \in \ch(v)} X_w &\quad \textrm{ if $\gamma(v) = \tOR$},\\
X_v = \bigwedge_{w \in \ch(v)} X_w &\quad\textrm{ if $\gamma(v) = \tAND$}.
\end{align*}
\end{itemize}
\end{definition}

\begin{wrapfigure}[5]{r}{0.3\linewidth}
    \centering
    \vspace{-12em}
\includegraphics[width=0.3\textwidth]{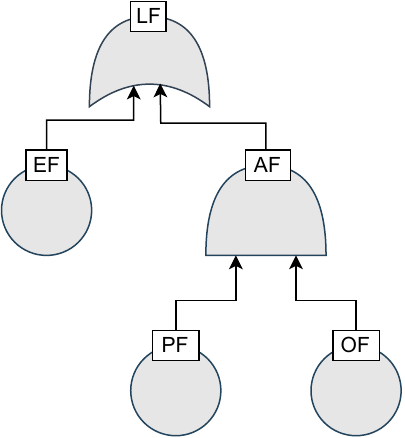}
\vspace{-0.75em}
\caption{Fish doorbell failure FT.}
\label{fig:FT-doorbell-copy}
\end{wrapfigure}

As basic events appear twice in $M_T$, 
     we could have taken $\cnodes=\varnothing$. 
     However, by considering basic events as endogenous variables, each context in $M_T$ corresponds to a status vector in $T$.
Further, we note that FT analysts are merely interested in causal reasoning about failure of the top event, i.e., $X_{\R} = 1$.

\begin{example}[CM from FT.]
The CM associated to the lock failure FT in Fig.~\ref{fig:FT-doorbell-copy} is given by:
\begin{align*}
\cnodes              &= \{Y_{\electricity}, Y_{\public}, Y_{\operator}\} & X_{\electricity} &= Y_{\electricity} & X_{\alert} &= X_{\public} \land X_{\operator}\\
\inodes                &= \{X_{\electricity}, X_{\public}, X_{\operator}, X_{\alert}, X_{\lock}\} & X_{\operator} &= Y_{\operator} & X_{\lock} &= X_{\electricity} \lor X_{\alert} \\
&& X_{\public} &= Y_{\public}
\end{align*}
\end{example}

This construction guarantees that, given a FT status vector $\vec u$, each variable in the CM gets assigned the correct value, i.e., the value given by the structure function applied to $\vec u$.
In other words, the structure function is the unique solution to the CM equations.

\begin{lemma}
Let $T$ be a FT and $M_T$ be the CM associated to $T$. 
Then for all FT nodes $v$ and all
context $\vec u$ we have:
$
    (M_T,\vec u)\models X_v = \struc{T}(\vec u,v).
$
\end{lemma}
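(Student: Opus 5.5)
The plan is to argue by well-founded induction over the DAG $(V,E)$. Since the graph is acyclic, each node $v$ has a finite \emph{height} $h(v)$: the length of a longest directed path ending in $v$. Sources have height $0$, and every input $w \in \ch(v)$ satisfies $h(w) < h(v)$. The same acyclicity makes the causal network of $M_T$ acyclic, so $(M_T,\vec u)$ has a unique solution. Write $x_v$ for the value that solution gives to $X_v$. By definition of $\models$, the claim $(M_T,\vec u)\models X_v = \struc{T}(\vec u,v)$ is exactly the statement $x_v = \struc{T}(\vec u,v)$, and I will prove this by induction on $h(v)$.

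\emph{Base case.} If $h(v)=0$, then $v$ is a source, so $\gamma(v)=\tBE$ by the definition of a FT. The structural equation of Def.~\ref{def:ftcm} gives $x_v = Y_v = \vec u_v$, where the context $\vec u$ is identified with a status vector. By Def.~\ref{def:struct-func} this equals $\struc{T}(\vec u,v)$.

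\emph{Inductive step.} Let $h(v)>0$. Then $v$ is not a source, so $\gamma(v)\in\{\tOR,\tAND\}$ and $\ch(v)\neq\varnothing$. The solution satisfies the equation for $X_v$, so in the $\tOR$ case $x_v = \bigvee_{w\in\ch(v)} x_w$. Each $w\in\ch(v)$ has smaller height, so the induction hypothesis gives $x_w = \struc{T}(\vec u,w)$. Hence $x_v = \bigvee_{w\in\ch(v)}\struc{T}(\vec u,w) = \struc{T}(\vec u,v)$ by the recursive clause of Def.~\ref{def:struct-func}. The $\tAND$ case is the same with $\bigwedge$ in place of $\bigvee$.

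There is no real obstacle here. The only point needing care is that the relation $\models$ refers to \emph{the} unique solution of the equations. This requires checking that $M_T$ is acyclic, which is immediate because its causal network consists of the edges $E$ together with the edges $Y_e \to X_e$. An equivalent way to see the whole argument is to note that $v\mapsto\struc{T}(\vec u,v)$ itself satisfies every structural equation, so by uniqueness it must be the solution.
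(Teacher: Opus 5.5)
Your proof is correct and follows the paper's own reasoning. The paper treats the lemma as immediate from the construction, noting that the structure function is the unique solution of the structural equations of $M_T$; in the appendix it is covered by $M_T = M_{G_T}$ and Lemma~\ref{lem:obv2}, both stated without proof. Your induction on height spells this out, and your closing uniqueness remark is the paper's one-line justification.
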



The following result 
phrases the coherency of FTs
in terms of CMs. 
Intuitively, it states that setting a variable $X \leftarrow 0$ can only decrease the value of other variables, and setting $X \leftarrow 1$ can only increase the value of other variables. Its proof is straightforward induction.

\begin{theorem} \label{thm:mono} 
Let $T$ be a FT and let $M_T$ be its associated CM. Then for any context $\vec{u}$, and for any two 
variables $X,Y \in \inodes$ we have:
\begin{enumerate}
\item $(M_T,\vec{u}) \models Y = 0$ implies $(M_T,\vec{u}) \models [X \leftarrow 0]Y = 0$;
\item $(M_T,\vec{u}) \models [X \leftarrow 1]Y = 0$ implies $(M_T,\vec{u}) \models Y = 0$;
\item $(M_T,\vec{u}) \models Y = 1$ implies $(M_T,\vec{u}) \models [X \leftarrow 1]Y = 1$;
\item $(M_T,\vec{u}) \models [X \leftarrow 0]Y = 1$ implies $(M_T,\vec{u}) \models Y = 1$.
\end{enumerate}
\end{theorem}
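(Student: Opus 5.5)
Statements 2 and 4 are the contrapositives of statements 3 and 1 (since all variables are Boolean, $Y \neq 0$ iff $Y = 1$). So I only need to prove 1 and 3. Both follow from a single monotonicity claim, which I prove by induction along a topological order of the DAG $(V,E)$.

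Fix a context $\vec u$ and a variable $X = X_w$. Write $x_v$ for the value of $X_v$ in the unique solution of $(M_T,\vec u)$, and $x^0_v$ (resp. $x^1_v$) for its value in the unique solution of $M_{X\leftarrow 0}$ (resp. $M_{X\leftarrow 1}$) under $\vec u$. These models are still acyclic: the intervention only removes incoming dependencies of $X_w$. Hence the solutions exist, are unique, and can be computed node by node in topological order.

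The claim is that $x^0_v \le x_v \le x^1_v$ for every node $v$. I prove it by induction over a topological order, splitting into cases on $v$.
\begin{itemize}
\item If $v = w$, then $x^0_w = 0 \le x_w \le 1 = x^1_w$ trivially.
\item If $v \neq w$ and $\gamma(v) = \tBE$, the equation $X_v = Y_v$ is untouched by the intervention, so all three values equal $u_v$.
\item If $v \neq w$ and $\gamma(v) \in \{\tOR,\tAND\}$, the equation is unchanged. Its value is $\bigvee$ or $\bigwedge$ of the values of the inputs $\ch(v)$, all of which precede $v$ in the topological order. By the induction hypothesis $x^0_{w'} \le x_{w'} \le x^1_{w'}$ for each input $w'$. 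Since $\vee$ and $\wedge$ are monotone in each argument, the inequality carries over to $v$.
\end{itemize}

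Given the claim, statement 1 follows: if $x_v = 0$ then $x^0_v = 0$. Statement 3 follows likewise: if $x_v = 1$ then $x^1_v = 1$. Applying this with $Y = X_v$ yields the theorem.

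There is no real obstacle here. The only point to check is that the intervened models remain acyclic, so that the satisfaction relation is well defined and the inductive computation along the topological order is justified. An intervention only deletes incoming edges of $X$, so acyclicity is preserved.
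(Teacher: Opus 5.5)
Your proof is correct and is essentially the paper's argument. Both run an induction over the DAG using that $\vee$ and $\wedge$ are nondecreasing, obtain $x^0_v \le x_v \le x^1_v$, and read statements 1 and 4 off the first inequality and 2 and 3 off the second. The only cosmetic difference is in how the sandwich is reached: the paper proves that $x \mapsto \struc{G_T}(\vec{u},v)[X_w \leftarrow x]$ is nondecreasing and then uses that intervening with the actual value of $X_w$ changes nothing, whereas you compare the actual solution with the two intervened ones directly, which absorbs that identity into your case $v = w$.
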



\section{Actual causality}\label{sec:actual-causality}

Several notions of Halpern and Pearl's actual causality (AC) have been defined over time. In what follows, we present the original {\ACo} definition from~\cite{HalpernP01}, followed by two refinements: the updated {\ACu} and the modified {\ACm} definitions.
All three versions are included in Definition 2.2.1 of~\cite{ActualCausalityHalpern}.

\subsection{The original definition {\ACo}}\label{sec:ACo}

Each definition of actual causality consists of three conditions, typically referred to as $\AC{1}$, $\AC{2}$, and $\AC{3}$ that must hold for a conjunction of primitive events $\vec X = \vec x$ to be an actual cause for the effect $\varphi$ in a causal model $M$ under the context $\vec{u}$.
Intuitively, $\AC{1}$ states that both the cause $\vec X = \vec x$ and the effect $\varphi$ hold for  $M$ in context $\vec{u}$.
Note that $\AC{2}$ uses a partition $(\vec{Z}, \vec{W})$ of the endogenous variables $\inodes$.
$\AC{2}(a^o)$ represents the \emph{counterfactual test} under \emph{contingency} $\vec{W}$: if the cause were absent, the effect should also disappear, assuming the interventions in $\vec{W}$ that modify $M$. 
Contingencies are particularly useful when causes are redundant, as they allow testing whether the counterfactual still holds after disabling one potential cause. This isolates the causal contribution of each factor: $\vec{X} = \vec{x}$ becomes pivotal under $\vec{W}$.
The variables in $\vec Z$ can be viewed as constituting the \emph{causal path} to $\varphi$~\cite{ActualCausalityHalpern}: intuitively, intervening on some variable in $\vec X$ propagates changes through variables in $\vec Z$, ultimately altering the value of $\varphi$. Interventions in $\vec W$ may indirectly modify the values of variables in $\vec Z$. According to $\AC{2}(b^0)$, these changes must leave $\varphi$ unaffected, even when certain variables in $\vec Z$ are held at their original values. $\AC{2}(b^0)$ corresponds to a \emph{sufficiency} condition.
Finally, $\AC{3}$ expresses \emph{minimality}.
\label{p:ACo}
\begin{definition}[H\&P Actual Causality: \ACo]\label{def:AC-o}
A conjunction of primitive events 
    $\vec X = \vec x$ over a vector of internal variables $\vec X$
    is an {\em actual cause} of $\varphi$ in $({M},\vec u)$ if the following three conditions hold:
    \begin{description}\itemsep6pt
        \item[\AC{1}] $({M},\vec u)\models (\vec X = \vec x)$ and $({M},\vec u)\models \varphi$.
        \item[\AC{2}] There exists a partition $(\vec Z, \vec W)$ of $\inodes$ with $\vec X \subseteq \vec Z$, and a setting $\vec w$ of the variables in $W$ such that both of the following conditions hold:
        \begin{itemize}\itemsep6pt
            \item ($a^o$) There exists a setting $\vec{x}'$ of the variables in $\vec{X}$ such that\\
              $({M},\vec u)\models[\vec X \leftarrow \vec x', \vec W \leftarrow \vec w]\neg \varphi$;
\item ($b^o$) If $\vec{z}^*$ is such that $({M}, \vec{u}) \models \vec{Z} = \vec{z}^*$, then for all subsets $\vec{Z}'$ of $\vec{Z} - \vec{X}$, we have\\
              $({M},\vec u)\models[\vec X\leftarrow \vec x, \vec W \leftarrow \vec w, \vec Z'\leftarrow \vec z^*]\varphi$.
         \end{itemize}
        \item[\AC{3}] $\vec X$ is minimal: no proper subset of $\vec X$ satisfies conditions \AC{1} and \AC{2}.
    \end{description}
\end{definition}


Without loss of generality, in our examples, the endogenous variables $(\in \inodes)$ occurring in $\varphi$ are excluded from $Z$.

\begin{example}[{\ACo} causes]\label{eg:aco-cause}
Consider $M^-_\doorbellcm$; a variation of  $M_\doorbellcm$, without the variable $\alert$, but encoding the same behavior of lock failure, defined as (see Fig.~\ref{fig:doorbell-causal-network-simpl}):
\begin{align}
\electricity &= \ExElectricity; & \public &= \ExPublic; & \operator &= \ExOperator; & \lock &= \electricity \lor (\public \land \operator). \label{eq:doorbellcm-}
\end{align}
 Assume the context $\vec u = (1,1,0)$, i.e., it sets $\ExElectricity = 1$, $\ExPublic = 1$ and $\ExOperator = 0$.
 We will show that $\public = 1$ (public's alert failure) is an actual cause of the effect $\varphi$ defined as $\lock = 1$ (lock failure) in $(M^-_\doorbellcm, \vec{u})$.
\end{example}

\begin{wrapfigure}[7]{r}{0.4\linewidth}
\centering
\vspace{-3.5em}
\includegraphics[width=\linewidth]{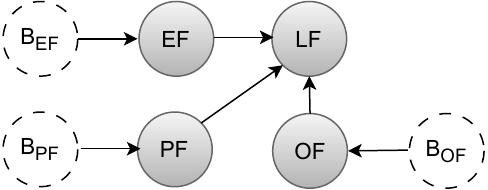}
\caption{Causal network for $M^-_{\doorbellcm}$.}
\label{fig:doorbell-causal-network-simpl}
\end{wrapfigure}
$\AC{1}$ requires that both the cause $\vec X = \vec x$ and the effect $\varphi$ hold in the actual world $\vec u$.
$\AC{1}$ is satisfied, as:
\begin{align}\label{eq:ACO-AC1}
(M^-_\doorbellcm, \vec{u}) &\models \public = 1,\\
(M^-_\doorbellcm, \vec{u}) &\models \lock = 1. \nonumber
\end{align}

$\AC{2}(a^o)$ corresponds to a \emph{necessity condition}. Namely, setting the causal variables $\vec X$ to different values $\vec x'$ changes the satisfiability of $\varphi$ from true to false. $\AC{2}(a^o)$ is sometimes referred to as the \emph{counterfactual test} and $W$ as the \emph{contingent variables}.

In $(M^-_\doorbellcm, \vec{u})$, consider $\vec{W} = \{\electricity, \operator\}$ and the setting $\{\electricity \leftarrow 0$ and $\operator \leftarrow 1\}$ (i.e., $\vec w = (0,1)$).
In this setting, the value of $\public$ is pivotal:
Let $x'$ be the setting $\public \leftarrow 0$ (i.e., $\vec x' = (0)$). 
Then ${\AC{2}(a^o)}$ and  ${\AC{2}(b^o)}$ hold 
(note $Z' = \varnothing$):
\begin{equation}\label{eq:ACO-AC2a}
(M^-_\doorbellcm, \vec{u}) \models [\public \leftarrow 0, \electricity \leftarrow 0, \operator \leftarrow 1]  (\lock = 0)
\end{equation}
While
\begin{equation}\label{eq:ACO-AC2b}
         (M^-_\doorbellcm, \vec{u})  \models [\public \leftarrow 1, \electricity \leftarrow 0, \operator \leftarrow 1](\lock = 1) 
\end{equation}

$\AC{2}(b^o)$ corresponds to a \emph{sufficiency condition}.
It guarantees that it is $\vec{X} \leftarrow \vec{x}$ (acting through the directed paths involving the variables in $\vec{Z}$) that is genuinely responsible for $\varphi$.

$\AC{3}$ is a minimality condition that ensures the identified set of causal variables $\vec X$ is the smallest set witnessing $\varphi$ in $(M^-_\doorbellcm, \vec u)$.
$\AC{3}$ trivially holds in our example, as the singleton set $\{\public\}$ satisfies \AC{1}-\AC{2}. Hence, also based on (\ref{eq:ACO-AC1}), (\ref{eq:ACO-AC2a}) and (\ref{eq:ACO-AC2b}), it follows that $\public = 1$ is an actual cause of $\lock = 1$. \vspace{1em}

Now consider the CM $M_{\doorbellcm}$ from Example \ref{eg:CM-fish-doorbell}, which includes the additional variable $\alert = \public \land \operator$ (see Fig.~\ref{fig:doorbell-causal-network}).
$M_{\doorbellcm}$ and $M^-_{\doorbellcm}$ are logically equivalent.
However, in $M_{\doorbellcm}$, $\public = 1$ is \emph{not} an actual cause of $\lock = 1$ in the given context $\vec u = (1,1,0)$. 
The reason is that changing $\public$ from 0 to 1 in setting $\vec{W}$ is no longer pivotal.
There are two cases to consider:
If ${\alert} \in \vec{W}$, then
$\lock$ will take the value of $\alert$, 
irrespective of $\public$.
(In fact, $\alert=0$ is an actual cause 
for $\lock = 0$).
Otherwise, if $\alert \in \vec{Z}$, with $\alert = 0$ in the actual world, $\AC{2}(b)^0$ again fails. The choice of variables (what is treated as primitive and what is derived) crucially affects what qualifies as actual cause.

Interestingly, actual causes under {\ACo} are always singletons.

\begin{theorem}\emph{\cite[Thm.~2.2.3(d)]{ActualCausalityHalpern}} \label{thm:singleton}
    If $\vec X = \vec x$ is an actual cause of $\varphi$ in $(M,\vec u)$ according to Definition~\ref{def:AC-o}), then $|\vec X| = 1$.
\end{theorem}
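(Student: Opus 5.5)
We argue by contradiction against \AC{3}. Suppose $\vec X = \vec x$ satisfies \AC{1}--\AC{3} with $|\vec X| \ge 2$. We aim to exhibit a single conjunct $X_i = x_i$ that already satisfies \AC{1} and \AC{2}; this would contradict minimality. Condition \AC{1} is inherited trivially by any subconjunction, so all the work is in \AC{2}. Fix the witnessing partition $(\vec Z, \vec W)$, the setting $\vec w$, and the counterfactual setting $\vec x'$ from \AC{2}$(a^o)$. Write $\vec X = (X_1,\dots,X_k)$.

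The key device is to interpolate between $\vec x$ and $\vec x'$ one coordinate at a time. For $j=0,\dots,k$, let $\vec x^{(j)}$ agree with $\vec x'$ on $X_1,\dots,X_j$ and with $\vec x$ on the remaining coordinates. By \AC{2}$(b^o)$ with $\vec Z'=\varnothing$, $\varphi$ holds under $[\vec X\leftarrow \vec x^{(0)}, \vec W\leftarrow\vec w]$. By \AC{2}$(a^o)$, $\neg\varphi$ holds under $[\vec X\leftarrow \vec x^{(k)}, \vec W\leftarrow\vec w]$. Hence there is a least index $i$ such that $\varphi$ holds at step $i-1$ and fails at step $i$. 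Now take as candidate cause the single conjunct $X_i = x_i$, with new partition $\vec Z^\ast = \vec Z \setminus (\vec X\setminus\{X_i\})$ and $\vec W^\ast = \vec W \cup (\vec X\setminus\{X_i\})$. The new contingency $\vec w^\ast$ is $\vec w$ together with the values of $\vec x^{(i)}$ on the other coordinates of $\vec X$. Then \AC{2}$(a^o)$ for $X_i$, taking $x_i' $ as the counterfactual value, is exactly the failure of $\varphi$ at step $i$.

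The main obstacle is \AC{2}$(b^o)$ for the new candidate. We need $\varphi$ to hold under $[X_i\leftarrow x_i, \vec W^\ast\leftarrow \vec w^\ast, \vec Z''\leftarrow \vec z^*]$ for every $\vec Z''\subseteq \vec Z^\ast\setminus\{X_i\}$. For $\vec Z''=\varnothing$ this is the fact that $\varphi$ holds at step $i-1$. For nonempty $\vec Z''$, however, the original \AC{2}$(b^o)$ only guarantees $\varphi$ when all of $\vec X$ is held at $\vec x$, not when $X_1,\dots,X_{i-1}$ are at their primed values. So the naive interpolation argument does not directly close.

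I would first try to reorder or choose the interpolation so that this problem disappears. One option is to use the freedom to pick the path $\vec x\to\vec x'$ so that the switched coordinates lie off the relevant causal paths. Failing that, I would fall back on the argument of \cite[Thm.~2.2.3(d)]{ActualCausalityHalpern}, which this statement cites. That argument is essentially the one above, with a case split on whether, for some $\vec Z''$, resetting to $\vec z^*$ destroys $\varphi$. In that case, the variables in $\vec Z''$ (or the set $\vec X$ minus $X_1$) give a smaller cause directly, again contradicting \AC{3}.

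Since the result is quoted from Halpern's book, in the paper it suffices to invoke that reference. The plan above is how I would reconstruct it. The delicate point is the sufficiency clause $(b^o)$ under the enlarged contingency set, which I expect to be the main obstacle.
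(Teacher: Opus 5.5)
\textbf{Comparison with the paper.} Citing \cite[Thm.~2.2.3(d)]{ActualCausalityHalpern} is exactly what the paper does; it gives no argument of its own.

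\textbf{The gap.} Your reconstruction does not close. You correctly locate the obstacle: \AC{2}$(b^o)$ does not transfer to a single conjunct once the other coordinates of $\vec X$ sit at primed values. But you never resolve it, and the fallback you sketch misplaces the key step. The variables of $\vec Z''$ are not part of $\vec X$, so they cannot give a smaller cause contradicting \AC{3}. For $\vec X$ minus $X_1$, which is the right candidate, you never say under which contingency it satisfies \AC{2}, nor do you verify $(a^o)$ and $(b^o)$ for it.

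\textbf{The missing idea.} A violating $\vec Z''$ can be moved, together with $X_1$ and at actual values, into the contingency set. \ACo{} allows this because $(b^o)$ only ever tests the full setting $\vec W\leftarrow\vec w$. Concretely, write $\vec X=(X_1,\vec X_2)$ and $\vec x'=(x_1',\vec x_2')$; no interpolation or minimal index is needed. Split on whether $[X_1\leftarrow x_1,\vec X_2\leftarrow\vec x_2',\vec W\leftarrow\vec w,\vec Z''\leftarrow\vec z^*]\varphi$ holds for all $\vec Z''\subseteq\vec Z-\vec X$.

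\emph{Case 1: it holds for all such $\vec Z''$.} Then $X_1=x_1$ satisfies \AC{2} with contingency $\vec W\cup\vec X_2$ set to $(\vec w,\vec x_2')$. Here $(a^o)$ is the original $(a^o)$, and $(b^o)$ is the case hypothesis.

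\emph{Case 2: it fails for some $\vec Z''$.} Then $\vec X_2=\vec x_2$ satisfies \AC{2} with contingency $\vec W\cup\{X_1\}\cup\vec Z''$ set to $(\vec w,x_1,\vec z^*)$ and path set $\vec Z-\{X_1\}-\vec Z''$. Here $(a^o)$ is the failure itself. For $(b^o)$, take any $\vec Z'''\subseteq\vec Z-\vec X-\vec Z''$. The required intervention is literally $[\vec X\leftarrow\vec x,\vec W\leftarrow\vec w,(\vec Z''\cup\vec Z''')\leftarrow\vec z^*]$, and $\varphi$ holds under it by the original $(b^o)$, since $\vec Z''\cup\vec Z'''\subseteq\vec Z-\vec X$.

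In both cases a proper subconjunction satisfies \AC{1} and \AC{2}, contradicting \AC{3}. Case 2 relies on $(b^o)$ never releasing contingency variables. Under $(b^u)$ this step fails, which is consistent with \ACu{} admitting multi-element causes in general.
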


\subsection{The updated definition {\ACu}}

In the CM $M^-_\doorbellcm$ of Example~\ref{eg:aco-cause}, $\public$ is identified as a cause of $\lock$. However, this may be regarded as a partial form of causality, since it results in a lock failure ($\lock = 1$) only in conjunction with an operator failure ($\operator = 1$) that did not occur. To better capture such cases, a so-called \emph{updated actual causality} {\ACu} was proposed in~\cite{ActualCausalityHalpern}.\label{p:ACu}
 {\ACu} redefines {$\AC{2}(b^o)$} in Definition~\ref{def:AC-o} to:\\[1ex]
{$\AC{2}(b^u)$} For all subsets $\vec{W}' \subseteq \vec{W}$ and $\vec{Z}' \subseteq \vec{Z}-\vec{X}$, we have
\begin{equation}\label{eq:ACU}
({M}, \vec{u}) \models [ \vec{X} \leftarrow \vec{x}, \vec{W}' \leftarrow \vec{w}, \vec{Z}' \leftarrow \vec{z}^* ]\varphi
\end{equation}
where $\vec{z}^*$ denotes the actual values of the variables in $\vec{Z}'$.

\begin{example}[\ACu~Causes]\label{eg:ACU}
    According to {\ACu}, $\public = 1$ is no longer a cause of $\lock = 1$ in $M^-_\doorbellcm$~(\ref{eq:doorbellcm-}). Recall that for the counterfactual test to hold, we need to consider ${\electricity} \in \vec{W}$. Furthermore, the original value of $\operator$ is $0$. This leads to the violation of~(\ref{eq:ACU}) when taking $\vec{Z}' = \varnothing$ and $\vec{W}' = \{\electricity\}$, since $(M^-_\doorbellcm,\vec u) \models [\public \leftarrow 1, \electricity \leftarrow 0](\lock = 0)$.
\end{example}

Unlike {\ACo}, {\ACu} allows for multiple element causes \cite{ActualCausalityHalpern}. It also subsumes {\ACo}:
\begin{theorem} \label{thm:acu-aco} \emph{\cite[Thm.~2.2.3(c)]{ActualCausalityHalpern}}
    If $ X =  x$ is part of an actual cause of $\varphi$ in $({M},\vec u)$ according to the updated definition {\ACu}, then $ X =  x$ is an actual cause according to the original definition {\ACo}.
\end{theorem}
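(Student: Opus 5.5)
Since the statement is quoted from \cite[Thm.~2.2.3(c)]{ActualCausalityHalpern}, the plan is to reprove it directly from Definition~\ref{def:AC-o} and the {\ACu} variant. Suppose $\vec X = \vec x$ is an {\ACu}-cause of $\varphi$ in $(M,\vec u)$ and $X = x$ is one of its conjuncts. Write $\vec X = (X, \vec X_1)$ with $\vec x = (x,\vec x_1)$, and fix the {\ACu} witness $(\vec Z,\vec W,\vec w,\vec x')$. We must show that the singleton $X = x$ satisfies \AC{1}, \AC{2} with $(a^o),(b^o)$, and \AC{3}.

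Condition \AC{1} holds immediately. Condition \AC{3} is automatic for a singleton, because the empty conjunction cannot satisfy $\AC{2}(a^o)$: by \AC{1}, $\varphi$ holds under intervention $\vec W\leftarrow\vec w$ alone, via $\AC{2}(b^u)$ with $\vec Z'=\varnothing$ and $\vec X\leftarrow\vec x$.

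The construction for \AC{2} moves the rest of the cause into the contingency. By \AC{3} for {\ACu}, the proper subset $\vec X_1$ (or the subset $X$ alone) fails \AC{2}. I would use minimality in the following form: consider the new partition $\vec Z_0 = \vec Z\setminus \vec X_1$, $\vec W_0 = \vec W\cup \vec X_1$. For the setting of $\vec X_1$ there are two natural candidates, $\vec x_1'$ (the counterfactual values) and $\vec x_1$ (the actual values).

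\begin{itemize}
\item \textbf{Option 1: $\vec W_0\leftarrow(\vec w,\vec x_1')$.} Then $(a^o)$ holds with $X\leftarrow x'$, because the full intervention $[\vec X\leftarrow\vec x',\vec W\leftarrow\vec w]$ gives $\neg\varphi$. If $(b^o)$ also holds, which requires $[X\leftarrow x,\vec X_1\leftarrow\vec x_1',\vec W\leftarrow \vec w,\vec Z'\leftarrow\vec z^*]\varphi$ for all $\vec Z'$, we are done.
\item \textbf{Option 2: some $\vec Z'$ fails in Option 1.} Then I argue that $\vec X_1 = \vec x_1$ itself satisfies \AC{2}, contradicting {\ACu} minimality. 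The witness takes $\vec X_1$ as the cause and $X$ into $\vec W$ set to $x$, keeping $\vec x_1'$ as the counterfactual setting. That counterfactual must yield $\neg\varphi$, and the failing $\vec Z'$ is exactly what witnesses this, after absorbing $\vec Z'$ into $\vec W$ at the values $\vec z^*$. Meanwhile $(b^u)$ for this new witness follows from the original $(b^u)$, since it quantifies over all subsets $\vec W'\subseteq\vec W$ and $\vec Z'$, together with $\vec X\leftarrow\vec x$.
\end{itemize}

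Option 2 is where the \emph{updated} clause is essential. It lets us freeze arbitrary parts of $\vec W$ and $\vec Z$ at their values, so sufficiency survives after reshuffling variables between $\vec Z$ and $\vec W$.

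The main obstacle is making the Option~2 step airtight. The difficulty is that $\vec z^*$-values placed into $\vec W$ must be consistent with $(b^u)$ for the smaller cause. Since $\vec z^*$ are actual values, freezing them corresponds to a $\vec Z'$ in the original $(b^u)$, and I expect this to work out. If it does not, I would fall back on citing \cite{ActualCausalityHalpern} directly, as the statement is quoted from there.
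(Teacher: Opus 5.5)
The paper gives no proof of this statement; it imports it from \cite[Thm.~2.2.3(c)]{ActualCausalityHalpern}. Your fallback of citing it is therefore exactly what the paper does. Your direct argument, however, has a genuine gap in Option~2, and it is not the one you flag. Absorbing the failing $\vec Z'$ into the contingency at values $\vec z^*$ is harmless. The real problem is that $\AC{2}(b^u)$ for your new witness of $\vec X_1=\vec x_1$, with contingency $\vec W\cup\{X\}\cup\vec Z'$, quantifies over \emph{all} subsets of that contingency, including those that omit $X$. In such a sub-intervention $X$ is no longer fixed. It is recomputed from its parents, and if it depends on variables of $\vec W$ that are now set to $\vec w$, it can take a value other than $x$. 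The original $\AC{2}(b^u)$ always contains $X\leftarrow x$ as part of $\vec X\leftarrow\vec x$, so it says nothing about these cases, and the claimed contradiction with \AC{3} does not follow.

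This is not a technicality; your dichotomy genuinely fails on a small model. Take an exogenous $U$ with context $U=0$, and endogenous $W=U$, $A=\neg W$, $B=\neg W$, $Y=\neg W\vee(A\wedge B)$.
\begin{itemize}
\item \emph{The cause.} $A=1\wedge B=1$ is an \ACu{} cause of $Y=1$ with witness $W\leftarrow 1$, $\vec x'=(0,0)$. Indeed $[A\leftarrow 0,B\leftarrow 0,W\leftarrow 1]Y=0$, while $[A\leftarrow 1,B\leftarrow 1,W\leftarrow 1]Y=1$ and $[A\leftarrow 1,B\leftarrow 1]Y=1$.
\item \emph{Minimality.} Any witness for $A=1$ alone needs $W\leftarrow 1$ to achieve $(a)$. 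The sub-contingency $\{W\}$ then leaves $B$ free, so $B=0$ and $Y=0$. The case of $B$ is symmetric.
\item \emph{Option~1 fails} for $X=A$, since $[A\leftarrow 1,B\leftarrow 0,W\leftarrow 1]Y=0$.
\item \emph{Option~2 fails too.} Its witness for $B$ is the contingency $\{W,A\}\leftarrow(1,1)$, and it violates $(b^u)$ on the sub-contingency $\{W\}$: $[B\leftarrow 1,W\leftarrow 1]$ gives $A=0$, hence $Y=0$.
\item \emph{Yet $A=1$ is an \ACo{} cause}, via the contingency $\{W,B\}\leftarrow(1,1)$. This holds the other conjunct at its \emph{actual} value, a choice your two options never consider.
\end{itemize}

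A correct direct proof must therefore range over other settings of $\vec X_1$. It must also control sub-contingencies in which conjuncts become free. Failing that, you should simply cite the source, as the paper does.

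A minor point concerns \AC{3} for a singleton. It holds because, for \emph{any} witness, $(a^o)$ and $(b^o)$ with $\vec Z'=\varnothing$ contradict each other. It does not follow from properties of the particular \ACu{} witness, and $(a^o)$ alone can be satisfied by the empty conjunction.
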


\subsection{The modified definition {\ACm}}

To make the definition of causality more straightforward, a simplified form of {\AC{2}} was proposed~\cite{ActualCausalityHalpern}.
The corresponding notion of causality is referred to as the {\emph{modified actual causality}} {\ACm}.
The core idea is to restrict attention to variable assignments that correspond to the actual situation we are analysing. Rather than considering hypothetical values for the contingency variables in ${\vec{W}}$, the modified definition only looks at the specific values these variables take in the considered context $\vec{u}$. An important consequence of this simplification is that it makes $\AC{2}(b^o)$ and ${\AC{2}}(b^u)$ subsumed by the modified {\AC{2}$^m$} below, whenever $\vec{W}$ is held at its actual values. As a result, there is no longer a need for these additional conditions, since the only contingencies we need to consider are those that align with the actual context. Hence, in the modified definition {\ACm}, condition \label{p:ACm} {\AC{2}} simply becomes:\\[1ex]
{\AC{2}}$^m$ There is a set $\vec{W} \subseteq \inodes$ and a setting $\vec{x}'$ of the variables in $\vec{X}$ such that 
\begin{equation}\label{eq:ACM}
({M}, \vec{u}) \models [\vec{X} \leftarrow \vec{x}', \, \vec{W} \leftarrow \vec{w}^*] \lnot \varphi
\end{equation}
with $\vec{w}^*$ the values of the variables in $\vec{W}$ in the actual context.

Note that {\ACm} subsumes both {\ACo} and {\ACu}:
\begin{theorem} \label{thm:acum-acu-aco}
\emph{\cite[Thm.~2.2.3(a+b)]{ActualCausalityHalpern}} The following hold:
\begin{itemize}
     \item[(a)]
    If $ X =  x$ is part of an actual cause of $\varphi$ in $({M},\vec u)$ according to the modified definition {\ACm}, then $ X =  x$ is an actual cause according to the original definition {\ACo}.
    \item[(b)]
    If $ X =  x$ is part of an actual cause of $\varphi$ in $({M},\vec u)$ according to the modified definition {\ACm}, then $ X =  x$ is an actual cause according to the updated definition {\ACu}.
\end{itemize}
\end{theorem}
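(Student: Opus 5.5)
The plan is to take an {\ACm} cause $\vec X = \vec x$ with $X = x$ as one of its conjuncts, and to build from its {\AC{2}}$^m$ witness a contingency that makes the singleton $X = x$ satisfy \AC{2} under both {\ACo} and {\ACu}. Minimality \AC{3} of $\vec X$ will supply the sufficiency condition. Since $\AC{2}(b^u)$ with $\vec W' = \vec W$ is exactly $\AC{2}(b^o)$, it suffices to establish $\AC{2}(a^o)$ and $\AC{2}(b^u)$. This gives (a) and (b) at once.

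\emph{Construction.} Let $\vec W \subseteq \inodes$ and $\vec x'$ witness {\AC{2}}$^m$, so that $(M,\vec u) \models [\vec X \leftarrow \vec x', \vec W \leftarrow \vec w^*]\neg\varphi$. We may assume $\vec W$ is disjoint from $\vec X$. Write $\vec X = (X, \vec R)$ and $\vec x' = (x', \vec r')$. Define the contingency set $\vec W'' = \vec W \cup \vec R$, with setting $\vec w''$ that puts $\vec W$ at $\vec w^*$ and $\vec R$ at $\vec r'$. Let $\vec Z = \inodes \setminus \vec W''$, which contains $X$. Then $\AC{2}(a^o)$ for $X = x$ is literally the {\AC{2}}$^m$ statement. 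Condition \AC{1} for $X = x$ follows from \AC{1} for $\vec X$.

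\emph{Sufficiency via minimality.} This is the key step. Suppose $\AC{2}(b^u)$ fails. Then there are $\vec W_1 \subseteq \vec W''$ and $\vec Z' \subseteq \vec Z \setminus \{X\}$ with
\[
(M,\vec u) \models [X \leftarrow x, \vec W_1 \leftarrow \vec w'', \vec Z' \leftarrow \vec z^*]\neg\varphi .
\]
In this intervention every variable is set to its \emph{actual} value except those in $\vec R_1 := \vec W_1 \cap \vec R$, which are set to $\vec r'$. Note that $X$, $\vec Z'$ and $\vec W_1 \cap \vec W$ are all at their actual values.

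There are two cases.
\begin{itemize}
\item If $\vec R_1 = \varnothing$, then every intervened variable is held at its actual value. In an acyclic model such an intervention reproduces the actual solution, so $\varphi$ holds. This contradicts the displayed formula.
\item Otherwise $\vec R_1 = \vec r'|_{\vec R_1}$ satisfies \AC{1}, because it is a subconjunction of $\vec X = \vec x$. It also satisfies {\AC{2}}$^m$, using the actual-value set $\{X\} \cup (\vec W_1 \cap \vec W) \cup \vec Z'$ and the setting $\vec r'|_{\vec R_1}$. But $\vec R_1$ is a proper subset of $\vec X$, since it omits $X$. This contradicts \AC{3} for $\vec X$.
\end{itemize}
Hence $\AC{2}(b^u)$, and therefore $\AC{2}(b^o)$, holds for $X = x$ with the partition $(\vec Z, \vec W'')$.

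\emph{Minimality of the singleton.} Finally, \AC{3} for $X = x$ amounts to showing that the empty conjunction cannot satisfy \AC{2}. With $\vec X = \varnothing$, condition $(a)$ would require $[\vec W \leftarrow \vec w]\neg\varphi$. Condition $(b)$ with $\vec Z' = \varnothing$ (and $\vec W' = \vec W$) would require $[\vec W \leftarrow \vec w]\varphi$. These are incompatible.

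The main obstacle is the sufficiency step. It is the only place where the structure of {\ACm} matters: contingencies are pinned to actual values, so a failure of sufficiency can be re-read as an {\AC{2}}$^m$ witness for a strictly smaller cause. That argument depends on the observation that interventions to actual values leave the actual world unchanged in acyclic models.
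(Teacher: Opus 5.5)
Your proof is correct and takes essentially the same route as the proof of Halpern's Thm.~2.2.3, which the paper cites rather than reproves: you move the remaining conjuncts into the contingency at their counterfactual values, read $\AC{2}(a^o)$ directly off the \AC{2}$^m$ witness, and derive $\AC{2}(b^u)$ (hence $\AC{2}(b^o)$) from the \AC{3}-minimality of the \ACm{} cause, handling the all-actual-values case by uniqueness of solutions in acyclic models. One small slip: in your second case the smaller candidate cause must be $\vec R_1 = \vec x|_{\vec R_1}$ (its actual values, so that \AC{1} holds), with $\vec r'|_{\vec R_1}$ serving only as its counterfactual setting in \AC{2}$^m$; as written, ``$\vec R_1 = \vec r'|_{\vec R_1}$ satisfies \AC{1}'' is not right.
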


The complexity of computing ACs for each of the definitions {\ACo}, {\ACu} and {\ACm} is analysed in~\cite{ActualCausalityHalpern,AC-complexity}. The results show that computing AC is hard; e.g., {\ACo} is NP-complete  in binary models (where all variables are binary). In binary models, SAT solving has been used to compute {\ACm} \cite{pretschner2019efficient}.

\section{Actual causality in fault trees} \label{sec:ACinFT}

Our main results are the classifications of {\ACo}, {\ACu}, and {\ACm} for FTs: Given a FT $T$ and a context $\vec{u}$ for its causal model $M_T$ (so $\vec{u}$ is a status vector), we give necessary and sufficient conditions for any $\vec{X} = \vec{x}$ to be an actual cause for $X_{\R} = 1$. We only consider $X_{\R} = 1$ because our results transfer to any variable $X_v = 1$ by considering the sub-FT rooted at $v$; and we are more interested in what causes failure than in what causes non-failure. The results for $X_{\R} = 0$ can be obtained by applying our results to the dual FT \cite{dhillon1981engineering}.

\subsection{\ACo{} in FTs} \label{ssec:ACo}

The classification of \ACo{} for FTs is as follows:

\begin{theorem}[Classification of \ACo{} for FTs] \label{thm:maino}
Let $T$ be a FT, let $M_T$ be its causal model, and let $\vec{u}$ be a context. Let $\vec{X}$ be a set of variables in $M_T$, and let $\vec{x}$ be a possible value for $\vec{X}$. Then $\vec{X} = \vec{x}$ 
satisfies {\ACo} for $X_{\R}=1$ iff:
\begin{enumerate}
    \item There exists a node $v \in V$ such that $\vec{X} = \{X_v\}$, and $\vec{x} = 1$;
    \item There is a path $v = v_0,\ldots,v_n = \R$ 
in the directed graph $(V,E)$ such that:
\begin{enumerate}
    \item 
    $(M_T,\vec{u}) \models X_{v_i} = 1$
    for all $0 \leq i \leq n$, 
    \item For all $i,j > 0$, if $\ch(v_i) \cap \ch(v_j) \neq \varnothing$, then $\gamma(v_i) = \gamma(v_j)$.
\end{enumerate}
\end{enumerate}
\end{theorem}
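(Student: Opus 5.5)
By Theorem~\ref{thm:singleton}, every \ACo{} cause is a single conjunct $X_v = x$. So the plan is to prove the two directions separately for singletons, using the monotonicity of Theorem~\ref{thm:mono} throughout. Theorem~\ref{thm:mono} also holds in any intervened model $M_{\vec W\leftarrow\vec w}$, since that model is again a network of constants and AND/OR gates. \AC{3} is automatic for singletons.

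\emph{Sufficiency.} Given the path $v=v_0,\dots,v_n=\R$, take $\vec Z=\{X_{v_0},\dots,X_{v_n}\}$ and let $\vec W$ be all other variables. Define $\vec w$ as follows.
\begin{itemize}
\item A node $u\notin\{v_0,\dots,v_n\}$ that is an input of some $v_i$ with $i>0$ is set to $0$ if $\gamma(v_i)=\tOR$ and to $1$ if $\gamma(v_i)=\tAND$. Condition 2(b) makes this well defined.
\item Every other node of $\vec W$ keeps its actual value.
\end{itemize}
An input of $v_i$ that lies on the path must be some $v_k$ with $k<i$, by acyclicity. For \AC{2}$(a^o)$ with $X_v\leftarrow 0$, I show $X_{v_i}=0$ by induction on $i$. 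At an OR-gate every input is either a $\vec W$-node set to $0$ or an earlier path node. At an AND-gate, $v_{i-1}=0$ already suffices. For \AC{2}$(b^o)$, with $X_v\leftarrow 1$ and any $\vec Z'\subseteq\vec Z$ fixed to its actual values (all $1$ by 2(a)), the same induction gives $X_{v_i}=1$. At an AND-gate all inputs are $\vec W$-nodes set to $1$ or earlier path nodes equal to $1$; at an OR-gate $v_{i-1}=1$ suffices. \AC{1} is condition 2(a) for $v_0$ and $v_n$.

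\emph{Necessity.} Suppose $X_v=x$ is a cause with witnesses $(\vec Z,\vec W,\vec w)$ and $x'$. If $x=0$, then \AC{2}$(a^o)$ gives $[X_v\leftarrow 1,\vec W\leftarrow\vec w]\,X_\R=0$, while \AC{2}$(b^o)$ with $\vec Z'=\varnothing$ gives $[X_v\leftarrow 0,\vec W\leftarrow\vec w]\,X_\R=1$. This contradicts Theorem~\ref{thm:mono}(4) in $M_{\vec W\leftarrow\vec w}$, so $x=1$.

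For the path, let $\vec Z_0\subseteq\vec Z\setminus\{X_v\}$ be the variables with actual value $0$. Compare two worlds, both with $\vec W\leftarrow\vec w$ and $\vec Z_0\leftarrow 0$:
\begin{itemize}
\item the ``$1$-world'', with $X_v\leftarrow 1$, where $X_\R=1$ by \AC{2}$(b^o)$;
\item the ``$0$-world'', with $X_v\leftarrow 0$, where $X_\R=0$ by \AC{2}$(a^o)$ and Theorem~\ref{thm:mono}(1).
\end{itemize}
Let $D$ be the set of nodes whose value differs between the two worlds. $D$ contains $\R$, contains no fixed node except $v$, and every node of $D$ other than $v$ has an input in $D$. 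So there is a path from $v$ to $\R$ inside $D$. Its nodes lie in $\vec Z\setminus\vec Z_0$ and hence have actual value $1$, which gives 2(a).

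\emph{Main obstacle: condition 2(b).} It need not hold for an arbitrary path in $D$, so I will choose the path carefully. The key observation is as follows. Suppose $u$ is a common input of path nodes $v_i$ with $\gamma(v_i)=\tAND$ and $v_j$ with $\gamma(v_j)=\tOR$. Since $v_i\in D$ is $1$ in the $1$-world, $u=1$ there. Since $v_j\in D$ is $0$ in the $0$-world, $u=0$ there. Hence $u\in D$, so $u$ is reachable from $v$ within $D$.

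I would then take, among all $v$-to-$\R$ paths inside $D$, one that is extremal. My first choice is a path minimising the sequence of topological positions, or alternatively one with the fewest distinct gate-type conflicts. Given a conflict at $u$, I would reroute the path through $u$, replacing its prefix by a $D$-path ending at $u$ followed by the edge into the later of $v_i,v_j$. I would then argue that this strictly improves the measure, contradicting extremality. Making this rerouting argument terminate and correctly handle the indices $i,j>0$ (as opposed to $v_0$) is the step I expect to require the most care.
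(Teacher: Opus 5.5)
Your sufficiency direction is correct. So are the reduction to a singleton with $x=1$, the difference set $D$ (which yields 2(a)), and your key observation that a common input $u$ of an $\tAND$-gate $v_i$ and an $\tOR$-gate $v_j$ ($i,j>0$) on a path in $D$ must itself lie in $D$. Taking $\vec Z$ to be the whole path even handles chords $v_k\to v_i$ with $k<i-1$ cleanly.

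The gap is condition 2(b), which is the real content of the necessity direction. You name two extremality measures but show that neither improves under your rerouting. The ``fewest conflicts'' measure need not decrease at all. The new prefix ending at $u$ is an arbitrary $D$-path, and its gates may share children, with opposite labels, with gates on the retained suffix. As written, necessity is therefore not proved.

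The missing idea is to choose the path locally, from the root down. This removes any need for a rerouting or termination argument. Write $a\preceq b$ if there is a directed path from $a$ to $b$. Put $v_n=\R$, and as long as $v_i\neq v$, let $v_{i-1}$ be a $\preceq$-minimal element of $\ch(v_i)\cap D$. This set is nonempty by your argument, so the descent ends at $v$.

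Now suppose $u\in\ch(v_i)\cap\ch(v_j)$ with $0<i<j$ and $\gamma(v_i)\neq\gamma(v_j)$. By your observation $u\in D$, so $u\in\ch(v_j)\cap D$. But $u\to v_i\to\cdots\to v_{j-1}$ is a directed path of length at least one, so $u\prec v_{j-1}$. This contradicts the minimality of $v_{j-1}$, which proves 2(b).

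This is your first measure made precise. The rerouting keeps $v_j,\dots,v_n$ and puts $u$, which precedes $v_{j-1}$ in every topological order, in place of $v_{j-1}$. It therefore strictly decreases the sequence of topological positions compared lexicographically, provided the sequence is read from the root end. The restriction $i,j>0$ needs no special care: it is used only to ensure that $v_i,v_j$ are non-intervened gates, so that your observation applies.

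This minimal-child choice is exactly the device in the paper's proof. There, one descends from $\R$, at each level picking a $\preceq$-minimal child that still evaluates to $1$ under the intervention, and re-establishes \AC{2} for that child with an enlarged $\vec W$. Once this step is supplied, your single global set $D$ gives a noticeably cleaner argument than that recursion.
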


Condition 1 requires that causes are singletons (by Theorem \ref{thm:singleton}), and that only events $X_v = 1$ can cause $X_{\R} = 1$. The latter is a consequence of Theorem \ref{thm:mono}, by the lack of negation in FTs: an event $X_v = 0$ will never \emph{cause} a later variable to be equal to 1. 



\begin{example} \label{ex:aco}
Consider again the lock failure FT, with context $\vec{u} = Y_{\mathsf{EF}}Y_{\mathsf{PF}}Y_{\mathsf{OF}} = (1,1,0)$. We picture this FT again as $T_1$ in Fig.~\ref{fig:aco-example}, with the resulting variable values inscribed in each gate. The unique path from $\mathsf{EF}$ to the root $\mathsf{LF}$ satisfies condition 2 from Theorem \ref{thm:maino}: all values are 1, and condition 2(b) holds because there are no shared inputs. It follows that $X_{\mathsf{EF}} = 1$ is an actual cause for $X_{\mathsf{LF}} = 1$ under \ACo{}. By contrast, $X_{\mathsf{PF}} = 1$ does not satisfy \ACo{} for $X_{\mathsf{LF}} = 1$, because the unique path $\mathsf{PF} \rightarrow \mathsf{LF}$ contains a $0$. FT $T_2$ has the same structure function as $T_1$; here, however, the unique path $\mathsf{PF} \rightarrow \mathsf{LF}$ contains only 1s, so here $X_{\mathsf{PF}} = 1$ does satisfy \ACo{} for $X_{\mathsf{LF}} = 1$.
\end{example}

\begin{wrapfigure}[18]{r}{0.3\linewidth}
    \centering
    \includegraphics[width=0.8\linewidth]{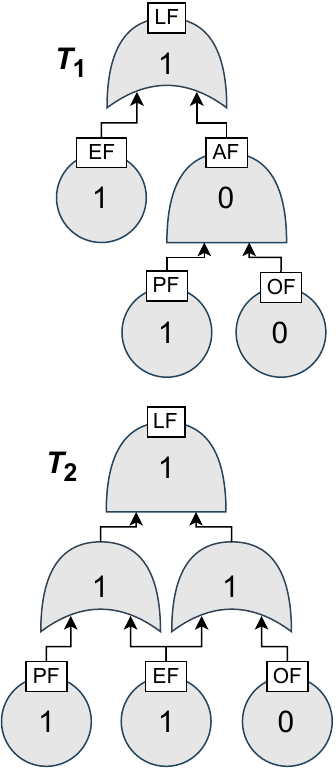}
    \caption{Two FTs with the same structure function ($T_1 =$ Fig.~\ref{fig:FT-doorbell-expl}). With context $\vec{u} = Y_{\mathsf{EF}}Y_{\mathsf{PF}}Y_{\mathsf{OF}} = (1,1,0)$, $X_{\mathsf{PF}} = 1$ satisfies \ACo{} for $X_{\mathsf{LF}} = 1$ in $T_1$, but not in $T_2$.}
    \label{fig:aco-example}
\end{wrapfigure}

Example \ref{ex:aco} points to an interesting property of causality in FTs. The two FTs of Fig.~\ref{fig:aco-example} have identical structure functions, hence identical minimal cut sets; thus in FT analysis they would be considered equivalent. However, under the context $\vec{u} = (1,1,0)$ we see that $X_{\mathsf{EF}}=1$ is a cause of system failure in one FT, but not in the other: \emph{\textbf{Equivalent FTs may behave differently as causal models}}. From the causality perspective, this is a feature, not a bug: 
Endogenous variables in CMs are not simply functions of their inputs, but represent real-world events that can be interacted with independently of their inputs. For FTs, this may capture a form of the epistemic uncertainty that we might have modelled our system incorrectly: the possibility of an intervention setting an intermediate gate represents the possibility that an intermediate gate might fail despite its children not failing, because in modelling our system we missed a failure cause.

\begin{wrapfigure}[19]{r}{0.3\linewidth}
\centering
\vspace{5.5em}
\includegraphics[width=4cm]{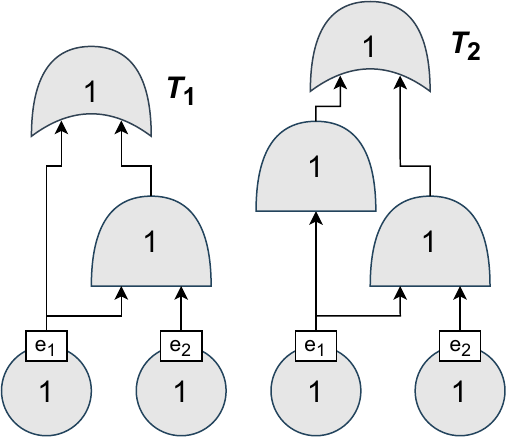}
\caption{$X_{e_2} = 1$ satisfies \ACo{} for $X_{\R}=1$ in $T_2$, but not in $T_1$.} \label{fig:irrelevant}
\end{wrapfigure}
Theorem \ref{thm:maino}.2b is there to exclude some pathological cases.This condition does not hold, for excample, in $T_1$ of Fig.~\ref{fig:irrelevant}: the path $e_2 \rightarrow \R$ has two inputs that share the input $e_1$, but have different labels. Therefore, $X_{e_2} =1$ is not an actual cause (for \ACo{}) for $X_{\R} = 1$. 
This is understood from the structure function $X_{e_1} \vee (X_{e_1} \wedge X_{e_2}) \equiv X_{e_1}$. The BE $e_2$ is irrelevant as it does not appear in the structure function; this is reflected by the fact that it cannot be an actual cause. Indeed, $e_2$ does not satisfy Theorem \ref{thm:maino}.

Precisely because such a construction makes events irrelevant, one rarely sees paths in which AND-gates and OR-gates share children: Of the 54 real-world FTs in the benchmark of \cite{basgoze2022bdds,garagiola2024coyan,lopuhaa2025fault}, only 10 have any such paths. Thus Theorem \ref{thm:maino}.2b is mathematically necessary, but plays a limited role in practice: Having a path $v \rightarrow \R$ of only 1-valued nodes is often enough to imply \ACo{}. Note that it is also possible to construct FTs in which events that are irrelevant from the FT perspective can still be actual causes, such as in $T_2$ of Fig.~\ref{fig:irrelevant}. The key here is again that changing the graph structure impacts how causality behaves, even if it leaves the structure function unchanged.

\subsection{\ACu{} in FTs}

The classification of \ACu{} for FTs is similar to that of \ACo{}. As before, we only consider causes of $X_{\R} = 1$. Unlike in general causal models \cite[Ex.~2.8.2]{ActualCausalityHalpern}, the result below states that \ACu{} in FTs only admits singleton causes.

\begin{theorem}[Classication of \ACu{} for FTs] \label{thm:mainu}
Let $T$ be a FT, let $M_T$ be its causal model, and let $\vec{u}$ be a context. Let $\vec{X}$ be a set of variables in $M_T$, and let $\vec{x}$ be a possible value for $\vec{X}$. Then $\vec{X} = \vec{x}$ 
satisfies {\ACu} for $X_{\R}=1$ if and only if the following holds:
\begin{enumerate}
    \item There exists a node $v \in V$ such that $\vec{X} = \{X_v\}$, and $\vec{x} = 1$;
    \item There is a path $v = v_0,\ldots,v_n = \R$ 
in the directed graph $(V,E)$ such that:
\begin{enumerate}
    \item 
    $(M_T,\vec{u}) \models X_{v_i} = 1$
    for all $0 \leq i \leq n$, 
    \item For all $i,j > 0$, if $\ch(v_i) \cap \ch(v_j) \neq \varnothing$, then $\gamma(v_i) = \gamma(v_j)$.
    \item $(M_T,\vec{u}) \models [\vec{D} \leftarrow \vec{0}]X_{\R}=1$, where \[\vec{D} = \bigcup_{i>0\colon \gamma(v_i) = \tOR}(\ch(v_i)\setminus\{v_{i-1}\}).\] 
\end{enumerate}
\end{enumerate}
\end{theorem}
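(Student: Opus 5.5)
The plan is to prove the two directions separately. Throughout, the workhorses are monotonicity (Theorem~\ref{thm:mono}, extended by induction to simultaneous interventions), the classification of \ACo{} (Theorem~\ref{thm:maino}), and the fact that every conjunct of an \ACu{} cause satisfies \ACo{} (Theorem~\ref{thm:acu-aco}).

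\emph{Sufficiency.} Fix a path $v=v_0,\ldots,v_n=\R$ satisfying 2(a)--(c). Let $\vec{A}$ be the set of inputs of AND-gates $v_i$ ($i>0$) other than $v_{i-1}$.
\begin{itemize}
\item Choose $\vec W=\vec D\cup\vec A$ with setting $\vec w$ equal to $0$ on $\vec D$ and $1$ on $\vec A$, and let $\vec Z$ be all remaining variables.
\item Condition 2(b) makes this well defined. A node lying in both $\vec D$ and $\vec A$ would be a shared input of an OR-gate and an AND-gate on the path. The path nodes themselves are not in $\vec W$ by acyclicity.
\item For \AC{2}$(a^o)$, intervene with $X_v\leftarrow 0$ and $\vec W\leftarrow\vec w$. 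By induction along the path each $v_i$ evaluates to $0$: an OR-gate has all its other inputs forced to $0$, and an AND-gate has its path input equal to $0$. Hence $X_\R=0$.
\item For \AC{2}$(b^u)$, take any $\vec W'\subseteq\vec W$ and $\vec Z'\subseteq\vec Z-\{X_v\}$. The model $[X_v\leftarrow 1,\vec W'\leftarrow\vec w,\vec Z'\leftarrow\vec z^*]$ dominates $[\vec D\leftarrow\vec 0]$ pointwise, for three reasons. Setting $X_v\leftarrow 1$ is its actual value. Setting $\vec A$ to $1$ only increases values. Under $[\vec D\leftarrow \vec 0]$ every variable is at most its actual value, so resetting $\vec Z'$ to actual values only increases values, and leaving part of $\vec D$ unset does the same.
\item Monotonicity and 2(c) then give $X_\R=1$.
\item \AC{1} is 2(a), and \AC{3} is trivial for a singleton.
\end{itemize}

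\emph{Necessity.} Let $\vec X=\vec x$ satisfy \ACu{}. Each conjunct $X_w=x_w$ satisfies \ACo{} by Theorem~\ref{thm:acu-aco}, so by Theorem~\ref{thm:maino} we have $x_w=1$, and there is a path satisfying 2(a),(b).
\begin{itemize}
\item For singleton-ness, fix the witness $(\vec Z,\vec W,\vec w,\vec x')$. By monotonicity we may take $\vec x'=\vec 0$.
\item Consider the set $P$ of nodes that are $1$ under $[\vec X\leftarrow\vec 1,\vec W\leftarrow\vec w]$ but $0$ under $[\vec X\leftarrow\vec 0,\vec W\leftarrow\vec w]$. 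The root is in $P$ by \AC{2}$(a^o)$ and $(b^u)$.
\item Every node of $P$ that is not in $\vec X$ is a gate with an input in $P$, since otherwise its value would not change. Tracing back from $\R$ therefore yields a path inside $P$ from some $X_v\in\vec X$ to $\R$.
\item Next show that $X_v$ alone satisfies \AC{2}$^u$ with the same $\vec W,\vec w$, putting the other elements of $\vec X$ into $\vec Z$. For \AC{2}$(a^o)$ we must show that setting only $X_v\leftarrow 0$ kills the root. Here I would use \AC{2}$(b^u)$ of the original cause, with $\vec Z'$ ranging over the remaining elements of $\vec X$, to control their influence. If this fails for the chosen $v$, I would instead pick $v$ minimal along $P$, or pass to the $\ACu$-witness restricted to a smaller $\vec X$ and contradict \AC{3}.
\end{itemize}

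\emph{Obtaining 2(c).} Along the path inside $P$, every other input $d$ of an OR-gate $v_i$ evaluates to $0$ under $[\vec X\leftarrow\vec 0,\vec W\leftarrow\vec w]$.
\begin{itemize}
\item If $d\in\vec W$, then $w_d=0$.
\item If $d\in\vec Z$, then $d$ is a function of $\vec W$-values and basic events. We then compare the model $[\vec D\leftarrow\vec 0]$ with $[X_v\leftarrow 1,\vec W'\leftarrow\vec w,\vec Z'\leftarrow\vec z^*]$ for a suitable choice: $\vec W'=$ the zero-set part of $\vec W$, and $\vec Z'=$ the nodes that are $1$ in the actual world but not below $\vec D$.
\end{itemize}
The aim is to show that the latter model is pointwise dominated by the former, so that \AC{2}$(b^u)$ yields $[\vec D\leftarrow\vec 0]X_\R=1$.

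I expect this comparison to be the main obstacle. Off-path $\vec Z$-variables in $\vec D$ are not directly set to $0$ by the witness. The argument must show that whatever forced them to $0$ can be reproduced by an admissible $\vec W'$ and $\vec Z'$ without pushing anything above its value under $[\vec D\leftarrow\vec 0]$. My first attempt is to choose $\vec W'=\{W\in\vec W: w_W=0\}$, which is exactly what Example~\ref{eg:ACU} does, and to argue by induction on depth that every node is then at most its value under $[\vec D\leftarrow\vec 0]$.
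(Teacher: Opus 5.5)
Your sufficiency direction is correct and follows the paper's construction: off-path inputs as contingency, frozen at $0$ under OR-gates and $1$ under AND-gates, then a pointwise comparison with $[\vec D\leftarrow\vec 0]$. One slip: acyclicity does not keep path nodes out of $\vec D\cup\vec A$, because shortcut edges $v_j\to v_i$ with $j<i-1$ are allowed. With $j=0$ this would even put $X_v$ itself into $\vec W$. Keeping path nodes in $\vec Z$ repairs this without changing either argument.

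The necessity direction, however, has two genuine gaps. The first is singleton-ness. Keeping $(\vec W,\vec w)$ and moving $\vec X\setminus\{X_v\}$ into $\vec Z$ already fails for $\R=\tOR(c_1,c_2)$ with both BEs failed. There $\vec X=(X_{c_1},X_{c_2})$ satisfies \AC{1} and \AC{2} (with $(b^u)$) for $\vec W=\varnothing$, but no single $X_{c_i}=1$ passes $(a^o)$ with $\vec W=\varnothing$. What violates \AC{3} is $X_{c_1}=1$ with $c_2$ moved into the contingency at value $0$, and your fallbacks do not supply such a mechanism. The missing idea (Theorem~\ref{thm:singletonu_app}) is to rebuild the contingency. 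Let $\vec W_0$ (resp.\ $\vec W_1$) be the part of $\vec W$ frozen at $0$ (resp.\ $1$). The map $g(\vec z)=[\vec X\leftarrow\vec z,\vec W_0\leftarrow\vec 0]X_{\R}$ is monotone, with $g(\vec 1)=1$ by $(b^u)$ with $\vec W'=\vec W_0$. Also $g(\vec 0)=0$ by $(a^o)$, since unfreezing $\vec W_1$ can only lower values. Hence there are $c$ and $\vec z$ with $z_c=0$, $g(\vec z)=0$ and $g(\vec z[c\leftarrow 1])=1$. Then $X_c=1$ satisfies \AC{2} with contingency $\vec W_0\cup(\vec X\setminus\{X_c\})$, the latter frozen at $\vec z$, and with $\vec W_1$ moved into $\vec Z$. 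Condition $(b^u)$ holds because every frozen value is at most the actual one; the paper first reduces to causes made of BEs, so that an unfrozen element of $\vec X$ returns to $1$. Hence dropping contingency settings or resetting $\vec Z$-nodes to actual values can only raise $X_{\R}$.

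The second gap is 2(c), and here the target itself is the problem. You need a single path satisfying (a)--(c). You take (a),(b) from Theorem~\ref{thm:maino} but (c) from an arbitrary back-trace through $P$, and such a trace can violate (b): e.g.\ $v\to g\to\R$ with $g=\tAND(v,t)$, $\R=\tOR(g,v)$ and all BEs failed. The paper avoids this by always stepping to a $\preceq$-minimal input $k$ and adding the remaining inputs to $\vec W$. More seriously, the dominance you aim for breaks at $v$ itself. Your model forces $X_v\leftarrow 1$, while $[\vec D\leftarrow\vec 0]$ can switch $v$ off when a node of $\vec D$ lies below $v$.

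No argument can fix this, because 2(c) as stated is not necessary. Take $\R=\tOR(v,d)$ and $v=\tAND(d,e)$ with BEs $d,e$, and $\vec u=(1,1)$.
\begin{itemize}
\item $X_v=1$ satisfies \ACu{} for $X_{\R}=1$ with $\vec W=\{X_d\}$, $w_d=0$. Indeed $[X_v\leftarrow 0,X_d\leftarrow 0]X_{\R}=0$, and $(b^u)$ holds since $X_{\R}=X_v\vee X_d$ is $1$ once $X_v\leftarrow 1$.
\item Yet the only path is $v\to\R$, with $\vec D=\{d\}$. Under $[X_d\leftarrow 0]$, $v$ turns off and so $X_{\R}=0$.
\end{itemize}
What $(b^u)$ really yields is $[X_v\leftarrow 1,\vec D\leftarrow\vec 0]X_{\R}=1$. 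This holds once $\vec W$ has been enlarged so that $\vec D\subseteq\vec W_0$ and $(b^u)$ is reduced to its worst case as in Lemma~\ref{lem:ac-u_app}. It is also all that the paper's ``completely analogous'' argument gives. It coincides with 2(c) whenever no node of $\vec D$ has a directed path to $v$ (e.g.\ $v$ a BE outside $\vec D$). Your sufficiency proof goes through verbatim with this weaker condition.
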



Unlike in general causal models \cite[Ex.~2.8.2]{ActualCausalityHalpern}, Theorem \ref{thm:mainu} states that \ACu{} in FTs only admits singleton causes. Once again, the culprit is the coherency of Theorem \ref{thm:mono}: the difference between \ACo{} and \ACu{} is that eq. \eqref{eq:ACU} must hold for all subsets $\vec{W}' \subseteq \vec{W}$, rather than just $\vec{W}$ itself. However, due to coherency, in FTs it turns out that it just has to hold for the worst-case $\vec{W}'$ consisting only of those variables that have value $0$ in $\vec{w}$. The fact that eq. \eqref{eq:ACU} only has to hold for one $\vec{W}'$ aligns the definition of \ACu{} closer to \ACo{}, and this turns out to be sufficient for the proof of Theorem \ref{thm:singleton} to also hold for \ACu{} in FTs. Other than that, the difference between \ACo{} and \ACu{} for FTs lies in condition 2c. 

\begin{figure}[t]
    \centering
    \includegraphics[width=0.7\linewidth]{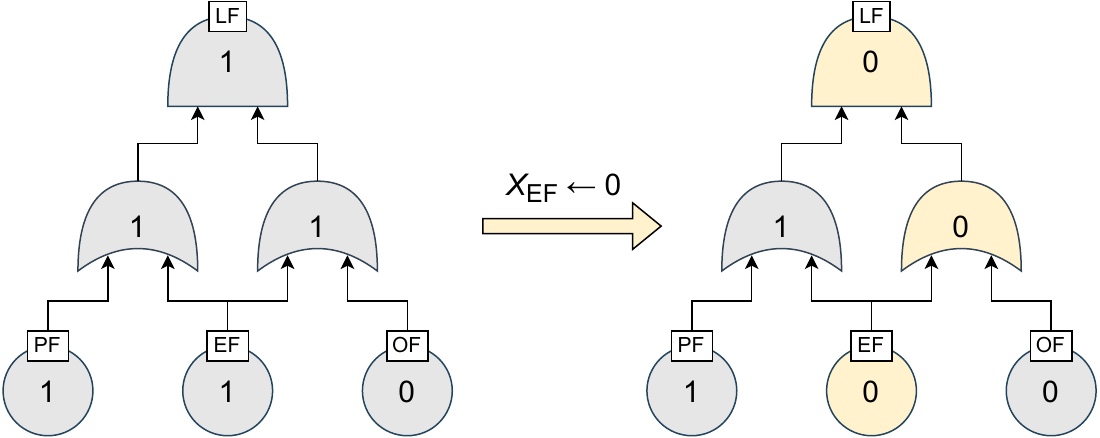}
    \caption{$T_2$ of Fig.~\ref{fig:aco-example} with context $\vec{u} =  (1,1,0)$ and intervention $X_{\mathsf{EF}} \leftarrow 0$.}
    \label{fig:ACu_example}
\end{figure}

\begin{example} \label{ex:ACu}
Consider the FT of Fig.~\ref{fig:ACu_example} with context vector $\vec{u} = Y_{\mathsf{EF}}Y_{\mathsf{PF}}Y_{\mathsf{OF}} = (1,1,0)$. Then $X_{\mathsf{PF}} = 1$ satisfies Theorem \ref{thm:maino}, so $X_{\mathsf{PF}} = 1$ is a cause of $X_{\mathsf{LF}} = 1$ under \ACo{}. However, the path $\mathsf{PF} \rightarrow \R$ does not satisfy Theorem \ref{thm:mainu}.2c: if we set $X_{\mathsf{EF}} = 0$ by intervention, $X_{\mathsf{LF}}$ becomes $0$. It follows that $X_{\mathsf{PF}} = 1$ is \emph{not} an actual cause of $X_{\mathsf{LF}} = 1$ under \ACu{}.
\end{example}

Example \ref{ex:ACu} illustrates how \ACu{} brings causality in FTs closer to the structure function: Under \ACu{}, this FT now behaves identically to the FT of Fig.~\ref{fig:FT-doorbell-copy} to which it is equivalent. However, it is still possible for two equivalent FTs to behave differently under \ACu{}: in $T_1$ of Fig.~\ref{fig:irrelevant} with $\vec{u} = (1,1)$, the event $X_{e_2} = 1$ is not a cause of $X_{\R} = 1$, but it is a cause in the equivalent FT $T_2$.

\subsection{\ACm{} in FTs} \label{ssec:acm}

For \ACm{}, we get the following classification. Interestingly, it looks completely different to those of \ACo{} and \ACu{}. The most striking difference is that it does not refer to the graphical structure of the FT; instead \ACm{} can be characterised purely in terms of the structure function. 

We only consider causes that consist of BEs. This is for notational convenience; one can always consider non-BE causes by changing these nodes into BEs and removing their inputs.

\begin{theorem}[Classication of \ACm{} for FTs] \label{thm:mainm}
Let $T$ be a FT, let $M_T$ be its causal model, and let $\vec{u}$ be a context. Let $\vec{X}$ be a set of variables in $M_T$, and let $\vec{x}$ be a possible value for $\vec{X}$. Suppose the variables in $\vec{X}$ represent a set $C \subseteq V$ such that $C \subseteq \BE{T}$. Then $\vec{X} = \vec{x}$ satisfies {\ACm} for $X_{\R} = 1$ if and only if the following are satisfied:
\begin{enumerate}
\item $(M_T,\vec{u}) \models X_{\R}=  1$;
\item $\vec{x} = \vec{1}$, and $u_v = 1$ for all $v \in C$;
\item If $D = \{v \in \BE{T} \mid u_v = 1\}$, then $\struc{T}(D \setminus C,\R) = 0$;
\item $C$ is minimal w.r.t. property 3.
\end{enumerate}
\end{theorem}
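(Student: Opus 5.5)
The plan is to show that, for causes made of basic events, condition \AC{2}$^m$ reduces to property 3. Conditions 1 and 2 and minimality then follow by routine arguments. Throughout, write $D$ for the set of failed BEs in $\vec u$, so $\vec u = \vec u^D$, and recall that $(M_T,\vec u)\models X_v = \struc{T}(\vec u,v)$ for every node $v$.

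\emph{AC1 and the value $\vec x = \vec 1$.} Condition \AC{1} says exactly that $X_{\R}=1$ holds, which is item 1, and that $\vec X = \vec x$ holds in the actual world. For $v\in C$ a BE, this means $x_v = u_v$.

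Next I show that any \AC{2}$^m$ witness can be normalised. Suppose some $v\in C$ has $x_v=0$. Then, by parts 2 and 4 of Theorem~\ref{thm:mono}, setting $X_v\leftarrow x'_v$ with $x'_v = 1$ cannot help falsify $X_{\R}=1$. So in any witness $(\vec x',\vec W)$ we may take $x'_v = 0 = x_v$ for that $v$. Then the same witness with $v$ removed from $C$ still satisfies \AC{2}$^m$, which contradicts \AC{3}. Hence every cause has $\vec x=\vec 1$ and $u_v=1$ on $C$, giving item 2. By the same monotonicity, the best counterfactual setting is always $\vec x' = \vec 0$.

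\emph{Key step: \AC{2}$^m$ with $\vec x'=\vec 0$ is equivalent to item 3.} For $(\Leftarrow)$, take $\vec W=\varnothing$. The intervention $[\vec X\leftarrow\vec 0]$ on BEs yields exactly the status vector $\vec u^{D\setminus C}$, so the root value is $\struc{T}(D\setminus C,\R)=0$.

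For $(\Rightarrow)$, the task is to show that fixing variables in $\vec W$ to their actual values $\vec w^*$ never helps make the root $0$. I would prove, by induction along a topological order, that for every node $y$
\[
(M_T,\vec u)\models [\vec X\leftarrow\vec 0,\vec W\leftarrow\vec w^*]X_y = 1
\quad\text{whenever}\quad
\struc{T}(D\setminus C,y)=1 .
\]
If $y\in\vec W$, its forced value is its actual value $\struc{T}(D,y)$, and this is $\ge \struc{T}(D\setminus C,y)$ by coherence. If $y\notin\vec W$, its value is computed from its inputs, which dominate their $D\setminus C$ values by the induction hypothesis, and the gate functions are monotone. (BEs in $C$ are fixed to $0$, so the claim holds for them trivially.) Applying this at the root, if \AC{2}$^m$ holds then the root is $0$ under the intervention, so $\struc{T}(D\setminus C,\R)=0$. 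This comparison argument is the main obstacle and should be written carefully. It is essentially Theorem~\ref{thm:mono}, generalised to simultaneous interventions on several variables, each to values that dominate the baseline.

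\emph{Minimality.} By the equivalence just shown, \AC{1} together with \AC{2}$^m$ for a subset $C'\subseteq C$ holds iff $C'$ satisfies item 3; elements of $C'$ automatically satisfy item 2. So \AC{3} is exactly item 4. The converse direction is then assembled from the same equivalences.
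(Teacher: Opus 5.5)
Your proposal is correct and follows the paper's proof. The paper likewise takes $W=\varnothing$ for the converse, discards elements with $x_v=0$ by monotonicity, and uses the comparison $\struc{G_T}(\vec u,w)[X_C\leftarrow\vec 0]\le\struc{G_T}(\vec u,w)[X_C\leftarrow\vec 0,X_W\leftarrow x^*_W]$, which your topological induction makes explicit.

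You are slightly more careful than the paper on one point: you note that \AC{3} in the converse direction needs this forward equivalence for every proper subset $C'\subseteq C$, whereas the paper only cites $W=\varnothing$. One small correction: when $C=\{v\}$ with $x_v=0$, your removal step leaves the empty set, so the contradiction is with \AC{1} rather than \AC{3}.
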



\begin{example}
Consider again the ``fish doorbell'' FT in Fig.~\ref{fig:FT-doorbell-copy}, with $\vec{u} = Y_{\mathsf{EF}}Y_{\mathsf{PF}}Y_{\mathsf{OF}} = (1,1,1)$; this corresponds to the set of BEs $D = \{\mathsf{EF},\mathsf{PF},\mathsf{OF}\}$. Consider $\vec{X} = (Y_{\mathsf{PF}},Y_{\mathsf{OF}})$ and $\vec{x} = (1,1)$; thus $\vec{X}$ corresponds to the set of BEs $C = \{\mathsf{PF},\mathsf{OF}\}$. Clearly, $\vec{X} = \vec{x}$ satisfies conditions 1,2 of Theorem \ref{thm:mainm}. Furthermore, $\struc{T}(D\setminus C,\R) = \struc{T}(\{\mathsf{EF}\},\R) = 0$, so 3 is satisfied. Finally, one can check that taking $C = \{\mathsf{PF}\}$ or $C = \{\mathsf{OF}\}$ does not satisfy 3, so 4 is satisfied as well. Hence $\vec{X} = \vec{x}$ is an actual cause of $X_{\mathsf{AF}} = 1$ under \ACm{}.
\end{example}

The fact that \ACm{} so closely reflects the structure function also allows us to classify MCSs in terms of \ACm{}:

\begin{corollary} \label{cor:mcs}
Let $T$ be an FT and lets $M_T$ be its causal model. A subset $D \subseteq \BE{T}$ is an MCS iff the following hold:
\begin{enumerate}
\item $\struc{T}(D,\R) = 1$;
\item The only variable sets in the causal model $M_T$, under context $\vec{u}^D$, that satisfy {\ACm} for $X_{\R} = 1$ are singletons.
\end{enumerate}
\end{corollary}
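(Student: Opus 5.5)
We derive Corollary~\ref{cor:mcs} directly from Theorem~\ref{thm:mainm}, applied with context $\vec{u} = \vec{u}^D$. Then the set of failed basic events is exactly $D$. By Theorem~\ref{thm:mainm}, a set $C \subseteq \BE{T}$ (with $\vec{x} = \vec{1}$) satisfies \ACm{} for $X_{\R}=1$ iff four things hold: $\struc{T}(D,\R) = 1$; $C \subseteq D$; $\struc{T}(D\setminus C,\R) = 0$; and $C$ is minimal with this property. As in the statement of Theorem~\ref{thm:mainm}, we restrict attention to causes consisting of basic events. So condition 2 of the corollary says: if $\struc{T}(D,\R)=1$, then every minimal $C\subseteq D$ with $\struc{T}(D\setminus C,\R)=0$ is a singleton.

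($\Rightarrow$) Let $D$ be an MCS. Condition 1 holds by definition. Take any $C$ satisfying \ACm{}. Then $C \neq \varnothing$, since $\struc{T}(D,\R)=1$. Pick $e \in C$. Because $D \setminus \{e\}$ is a proper subset of the MCS $D$, we have $\struc{T}(D\setminus\{e\},\R) = 0$. So $\{e\}$ already satisfies property 3 of Theorem~\ref{thm:mainm}, and minimality forces $C = \{e\}$.

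($\Leftarrow$) Assume conditions 1 and 2. Since $D$ is a cut set, it suffices to show $\struc{T}(D',\R)=0$ for every $D' \subsetneq D$. By coherence it is enough to check the maximal proper subsets $D\setminus\{e\}$, $e \in D$. Suppose for contradiction that $\struc{T}(D\setminus\{e\},\R)=1$ for some $e\in D$. Now take $C = D$, which satisfies $\struc{T}(D\setminus D,\R) = \struc{T}(\varnothing,\R) = 0$. This holds because every FT gate is an AND or OR of its inputs, so by induction every node has value $0$ on the empty status vector. Hence the family of subsets $C\subseteq D$ with $\struc{T}(D\setminus C,\R)=0$ is nonempty.

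Choose a subset $C$ in this family that is minimal, and choose it to contain $e$. This is the main obstacle: a minimal such $C$ need not contain $e$. We handle it as follows. By condition 2, every minimal such $C$ is a singleton $\{f\}$ with $\struc{T}(D\setminus\{f\},\R)=0$. Consider the \emph{down-closed} structure: for every $f \in D$, either $\{f\}$ is in the family, or no subset containing $f$ is minimal. Start from $C_0 = D$ and remove elements one at a time while staying in the family. The process ends at a minimal set, which must be a singleton $\{f\}$.

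It remains to reach a contradiction from $\struc{T}(D\setminus\{e\},\R)=1$. The cleanest route is to compare the two sets $D\setminus\{e\}$ and $D\setminus\{f\}$ via monotonicity. We first try a counting argument: let $S = \{f\in D : \struc{T}(D\setminus\{f\},\R)=0\}$. If $S = D$, then $D$ is an MCS by coherence. Otherwise pick $e\notin S$. We then show that $D\setminus S$ (or a minimal subset of it) gives a non-singleton minimal set in the family, for instance by taking a minimal $C\subseteq D\setminus\{f\}$ over suitable $f$. If this fails, we fall back on a direct counterexample check. If the check shows the corollary needs refinement, we note that condition 2 may have to be read as ``every singleton $\{e\}$, $e\in D$, satisfies \ACm{}'', and the forward direction above goes through unchanged under that reading.
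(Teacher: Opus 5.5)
Your ($\Rightarrow$) direction is fine under the convention of Theorem~\ref{thm:mainm} that only causes made of basic events count. That restriction is genuinely needed: if gate variables are allowed, take two OR-gates $g_1,g_2$, each with sole input $a$, feeding an OR root. For the MCS $D=\{a\}$ this gives the two-element \ACm{} cause $X_{g_1}=X_{g_2}=1$.

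The genuine gap is the ($\Leftarrow$) direction, which you never prove. You cannot in general pick a minimal $C$ containing $e$. The ``down-closed structure'' merely restates condition~2. The claim that $D\setminus S$ yields a non-singleton minimal member of the family is unfounded.

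No such argument can work, because the converse is false as the corollary is worded. Take the fish-doorbell FT of Fig.~\ref{fig:FT-doorbell-expl}, with $\struc{T}(\vec u)=u_{\mathsf{EF}}\vee(u_{\mathsf{PF}}\wedge u_{\mathsf{OF}})$, and let $D=\{\mathsf{EF},\mathsf{PF}\}$. Then:
\begin{itemize}
\item $\struc{T}(D,\R)=1$;
\item $C=\{\mathsf{EF}\}$ satisfies $\struc{T}(\{\mathsf{PF}\},\R)=0$;
\item $\struc{T}(\{\mathsf{EF}\},\R)=1$ rules out $C=\{\mathsf{PF}\}$.
\end{itemize}
So by Theorem~\ref{thm:mainm} the only \ACm{} cause is $X_{\mathsf{EF}}=1$. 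Admitting gate variables only adds the singleton $X_{\mathsf{LF}}=1$. Both conditions of the corollary hold, yet $D$ is not an MCS. In your notation $S=\{\mathsf{EF}\}$, and $D\setminus S=\{\mathsf{PF}\}$ is not even in the family, which is exactly where your counting argument breaks.

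So instead of hedging, you should exhibit this counterexample and commit to a corrected condition~2: the \ACm{} causes under $\vec{u}^D$ are \emph{exactly} the singletons $X_e=1$ with $e\in D$. Under that reading both directions are one-liners.
\begin{itemize}
\item For ($\Rightarrow$) you also need that every $\{e\}$ with $e\in D$ is a cause, not just $e\in C$. This holds since $\struc{T}(D\setminus\{e\},\R)=0$, and $\varnothing$ fails property~3 because $\struc{T}(D,\R)=1$.
\item For ($\Leftarrow$), each $\{e\}$ being a cause gives $\struc{T}(D\setminus\{e\},\R)=0$ for all $e\in D$. Your own coherence reduction to the maximal proper subsets then finishes.
\end{itemize}
The paper supplies no proof beyond pointing to Theorem~\ref{thm:mainm}. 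That theorem yields the ($\Rightarrow$) direction and this corrected converse, but not the literal converse.
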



\section{From minimal cut sets to actual causality} \label{sec:mcs}

It may be striking that our classification of actual causality in FTs of Theorems \ref{thm:maino}, \ref{thm:mainu} \& \ref{thm:mainm} has nothing to do with MCSs, as these are understood to be potential causes of system failures. In this section, we show that being part of an MCS is a necessary, but generally not sufficient, condition for being an actual cause.

\subsection{Fault trees without given context}

First, we consider FTs without a given context $\vec{u}$. In this setting, the pertinent question is whether it is possible for a BE to ever be an actual cause. It turns out that being a \emph{relevant} BE is a sufficient condition; for \ACm{} it is also necessary. Recall that a BE $v$ is relevant if it is part of some MCS, or equivalently, if the structure function is non-constant in $X_v$.

\begin{theorem} \label{thm:mcs}
Let $T$ be a FT, and let $v$ be a basic event. 
\begin{enumerate}
    \item If there is an MCS $D$ such that $v \in D$, then $\vec{u}^D$ is a context under which $X_v = 1$ satisfies {\ACo}, {\ACu}, and {\ACm} for $X_{\R} = 1$. 
    \item If there exists a context under which $X_v = 1$ satisfies \ACm{} for $X_{\R} = 1$, then $v$ is relevant.
\end{enumerate}
\end{theorem}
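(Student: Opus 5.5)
The plan is to prove part 1 for \ACm{} first, via Theorem~\ref{thm:mainm}, and then transfer it to \ACo{} and \ACu{} using Theorem~\ref{thm:acum-acu-aco}. Part 2 then reduces to a direct use of conditions 2–4 of Theorem~\ref{thm:mainm}.

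\textbf{Part 1.} Let $D$ be an MCS with $v\in D$, take the context $\vec u = \vec u^D$, and let $C=\{v\}$ with $\vec X = (X_v)$ and $\vec x = 1$. We check the four conditions of Theorem~\ref{thm:mainm}.
\begin{enumerate}
\item Condition 1 holds because $D$ is a cut set, so $\struc{T}(D,\R)=1$, and by the lemma identifying the CM solution with the structure function this gives $(M_T,\vec u^D)\models X_{\R}=1$.
\item Condition 2 holds since $v\in D$ gives $u_v=1$.
\item For condition 3, the set of failed BEs in $\vec u^D$ is exactly $D$. Since $D\setminus\{v\}$ is a proper subset of the MCS $D$, minimality gives $\struc{T}(D\setminus\{v\},\R)=0$.
\item Condition 4 holds because the only proper subset of $C$ is $\varnothing$, and $\struc{T}(D\setminus\varnothing,\R)=\struc{T}(D,\R)=1$, so $\varnothing$ fails property 3.
\end{enumerate}
Hence $X_v=1$ satisfies \ACm{} for $X_{\R}=1$ under $\vec u^D$. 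Because the cause is the singleton $\{X_v\}$, it is in particular ``part of'' an \ACm{} cause. Theorem~\ref{thm:acum-acu-aco}(a) and (b) then give that $X_v=1$ is an actual cause under \ACo{} and \ACu{} as well.

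\textbf{Part 2.} I read the hypothesis as saying that $X_v=1$ is (part of) an \ACm{} cause, with $v$ a BE. Let $C\ni v$ be that cause and let $D$ be the set of failed BEs in the context.
\begin{enumerate}
\item Condition 3 of Theorem~\ref{thm:mainm} gives $\struc{T}(D\setminus C,\R)=0$.
\item Condition 4 applied to $C\setminus\{v\}$ says that $C\setminus\{v\}$ does not satisfy property 3. Hence $\struc{T}((D\setminus C)\cup\{v\},\R)=1$.
\item Put $\vec u' = \vec u^{D\setminus C}$. Then $v\notin D\setminus C$, $\struc{T}(\vec u')=0$ and $\struc{T}(\vec u'[v\leftarrow 1])=1$.
\end{enumerate}
So $v$ is critical for $\vec u'$, and therefore relevant.

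\textbf{Main obstacle.} The delicate point is the case where $C=\{v\}$ in part 2. There minimality condition 4 concerns the empty set, and $D\setminus\varnothing=D$ yields $\struc{T}(D,\R)=1$ by condition 1, which is exactly what is needed, so the argument still goes through. A second minor point is checking that the applications of Theorem~\ref{thm:acum-acu-aco} are legitimate for singleton causes.
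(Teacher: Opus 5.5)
Your proof is correct, but part 1 takes a different route from the paper's.

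\textbf{Part 1.} The paper checks \ACo{} and \ACu{} directly from the definitions, with the empty contingency $W=\varnothing$:
\begin{itemize}
\item AC2(a) is just $\struc{T}(D\setminus\{v\},\R)=0$, by minimality of $D$;
\item AC2(b) holds trivially, since with $W$ empty, resetting variables of $Z$ to their actual values changes nothing;
\item minimality is automatic for a singleton.
\end{itemize}
Only \ACm{} is read off from Theorem~\ref{thm:mainm}. You instead establish \ACm{} first via the classification, and then obtain \ACo{} and \ACu{} from the general subsumption result, Theorem~\ref{thm:acum-acu-aco}. Your route is shorter and reuses existing results. The paper's is self-contained and does not depend on the imported theorem. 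The point you flagged about applying that theorem to singletons is harmless in either case, because \ACu{} causes in FTs are singletons (Theorem~\ref{thm:mainu}). So ``part of'' and ``is'' an \ACu{} cause coincide here.

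\textbf{Part 2.} Here your argument is tighter than the paper's. The paper only notes that $D$ is a cut set and says to ``take $D$ minimal''. That step tacitly needs your observation $\struc{T}(D\setminus\{v\},\R)=0$ (condition 3) to guarantee that a minimal cut set inside $D$ still contains $v$. Your direct criticality argument for $\vec u^{D\setminus C}$ makes this explicit. One caveat: your extension to $v$ being part of a larger cause $C$ needs $C\subseteq\BE{T}$ for Theorem~\ref{thm:mainm} to apply. This does not affect correctness, since the statement itself only requires $C=\{v\}$.
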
 


Theorem \ref{thm:mcs}.2 does not hold for \ACo{} and \ACu{}: In $T_2$ of Fig.~\ref{fig:irrelevant} $e_2$ is not relevant, but it satisfies \ACo{} and \ACu{} for $\vec{u} = (1,1)$.

\subsection{Fault trees with given context}

Now we consider FTs with a given context $\vec{u} = \vec{u}^D$. The question is now to what extent MCS $C$ that have happened, i.e. that satisfy $C \subseteq D$, give rise to actual causes. For \ACm{}, this does not have a neat description: in fact, Theorem \ref{thm:mainm}.3 states that \ACm{} is best expressed in terms of maximal sets of events that still result in nonfailure, rather than minimal sets of events that result in failure. For \ACo{} and \ACu{}, however, we find that for every MCS $C 
\subseteq D$, every $v \in C$ is an actual cause of top event failure. This sufficient condition is not necessary: In $T_2$ of Fig.~\ref{fig:aco-example}, $D = \{\mathsf{EF},\mathsf{OF}\}$, and $\mathsf{OF}$ is not part of any MCS present in $C$; however, $X_{\mathsf{OF}} = 1$ still satisfies \ACo{} for $X_{\R} = 1$. To state when the condition is necessary, we recap the definition of FTs in disjunctive normal form (DNF). A DNF FT consists of a top OR-gate, a second layer of AND-gates, and a third layer of BEs; the ANDs represent its MCSs.

\begin{definition}
An FT $T$ is of \emph{disjunctive normal form (DNF)} if:
\begin{enumerate}
\item $\gamma(\R_T) = \tOR$
\item $\gamma(v) = \tAND$ for $v \in \ch(\R_T)$
\item $\gamma(w) = \tBE$ for all $w \in \bigcup_{v \in \ch(\R_T)} \ch(v)$
\item For each $v \neq v' \in \ch(\R_T)$, one has $\ch(v) \not \subseteq \ch(v')$.
\end{enumerate}
\end{definition}

We then get the following result:

\begin{theorem} \label{thm:mcs2}
Let $T$ be a FT, and let $\vec{u}$ be a context of $M_T$. Let $v$ be a BE. Consider the following statements:
\begin{enumerate}
\item $\{v \in \BE{T} \mid u_v = 1\}$ contains an MCS containing $v$;
\item $X_v = 1$ satisfies {\ACu} for $X_{\R} = 1$;
\item $X_v = 1$ satisfies {\ACo} for $X_{\R} = 1$.
\end{enumerate}
Then 1 implies 2, and 2 implies 3. If $T$ is tree-shaped or if $T$ is of disjunctive normal form, then all three are equivalent.
\end{theorem}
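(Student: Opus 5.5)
We prove the three implications separately. For $1\Rightarrow 2$ we verify the definition of \ACu{} directly with an explicit partition, rather than building a path for Theorem~\ref{thm:mainu}. The implication $2\Rightarrow 3$ is immediate from Theorem~\ref{thm:acu-aco}, applied to the singleton cause $X_v=1$. For the converse in the two special cases we read an MCS off the path given by Theorem~\ref{thm:maino}.

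\emph{$1\Rightarrow 2$.} Let $D=\{w\mid u_w=1\}$, and let $C\subseteq D$ be an MCS with $v\in C$. Call a node $z$ \emph{$v$-critical} if $\struc{T}(C,z)=1$ and $\struc{T}(C\setminus\{v\},z)=0$. By minimality of $C$ the root is $v$-critical, and so is $v$ itself. By monotonicity, every other node $y$ satisfies $\struc{T}(C,y)=\struc{T}(C\setminus\{v\},y)$.

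We take the following partition and setting.
\begin{itemize}
\item $\vec Z$ consists of the $v$-critical variables.
\item $\vec W$ consists of all remaining variables.
\item For nodes $y$ in $\vec W$ we set $w_y=\struc{T}(C,y)$. In particular, BEs in $D\setminus C$ are set to $0$.
\end{itemize}

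For \AC{2}($a$) we intervene with $X_v\leftarrow 0$ and $\vec W\leftarrow\vec w$. An induction over the DAG shows that every variable then takes its value under $\vec u^{C\setminus\{v\}}$. The $\vec W$-nodes are forced to exactly these values, and the $\vec Z$-nodes follow their equations, so the root becomes $0$.

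For \AC{2}($b^u$) take any $\vec W'\subseteq\vec W$ and $\vec Z'\subseteq\vec Z\setminus\{X_v\}$, and intervene with $X_v\leftarrow 1$, $\vec W'\leftarrow\vec w$ and $\vec Z'\leftarrow\vec z^*$. An inductive comparison shows that every node then has value at least its value under $\vec u^{C}$. The unset BEs keep their actual values, which dominate those under $C$. The set $\vec Z'$ is clamped to actual values $\vec z^*$, which are at least the values under $C$ because $C\subseteq D$. The set $\vec W'$ is clamped to exactly the $C$-values. Since the root is $1$ under $\vec u^C$, it is $1$ after the intervention, and \AC{2}($b^u$) holds. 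Minimality \AC{3} is trivial for a singleton.

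\emph{Converse for tree-shaped $T$.} Suppose \ACo{} holds. Theorem~\ref{thm:maino} gives a path $v=v_0,\dots,v_n=\R$ on which all nodes have value $1$. For each AND-node $v_i$ on the path and each sibling input $s\in\ch(v_i)\setminus\{v_{i-1}\}$, the node $s$ has value $1$, so we can choose a set $S_s\subseteq D$ of failed BEs below $s$ such that $s$ fails under $S_s$. Let
\[
K=\{v\}\cup\bigcup_s S_s .
\]
Then $K\subseteq D$, and $K$ is a cut set: propagating upward along the path, each AND-node on the path has all its inputs failing, and each OR-node fails through the path input.

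Next we show that $K\setminus\{v\}$ is not a cut set. Because $T$ is a tree, the subtrees hanging off the path are disjoint. Under $K\setminus\{v\}$, every input of an OR-node on the path other than the path input contains no element of $K$, and so has value $0$. The path node $v_0=v$ is $0$, so by induction each $v_i$ is $0$; for AND-nodes this holds because one input, the path input, is already $0$. Hence the root is $0$.

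Shrinking $K$ to a minimal cut set must therefore keep $v$, which gives an MCS inside $D$ containing $v$.

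\emph{Converse for DNF $T$.} Here the path has the form $v\to a\to\R$ with $a$ an AND-gate and value $1$ at $a$, so $\ch(a)\subseteq D$. The set $\ch(a)$ is a cut set. It is minimal because of condition 4 of the DNF definition: any cut set properly inside $\ch(a)$ would have to contain $\ch(a')$ for some other AND-gate $a'$, contradicting $\ch(a')\not\subseteq\ch(a)$. So $\ch(a)$ is an MCS in $D$ containing $v$.

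We expect the main obstacle to be the two inductive monotonicity comparisons in $1\Rightarrow 2$. Each requires checking that clamping $\vec W$ to its $C$-values is simultaneously compatible with the $C\setminus\{v\}$ world, used in ($a$), and dominated by the actual world, used in ($b^u$). This is exactly where the fact that non-critical nodes take equal values under $C$ and $C\setminus\{v\}$ is used.
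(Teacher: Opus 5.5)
Your proof is correct. The $2\Rightarrow 3$ step and the DNF case match the paper's argument; in the DNF case you spell out the use of DNF condition 4 for minimality, which the paper leaves implicit. The other two parts take different routes.

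For $1\Rightarrow 2$, the paper uses the same monotone comparison with the world $\vec u^{C}$ but a simpler partition. It takes $\vec W=\BE{}\setminus\{v\}$, clamped to its $\vec u^{C}$-values, and lets $\vec Z$ be all gates together with $X_v$. Then \AC{2}(a) is literally the evaluation $\struc{T}(C\setminus\{v\},\R)=0$. The compatibility issue you flag as the main obstacle also disappears, because only basic events other than $v$ are clamped. So your split into $v$-critical and non-critical nodes is correct but unnecessary. You also never state \AC{1}, though it follows at once from $v\in C\subseteq D$ and monotonicity.

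For the tree case, the paper builds one specific MCS top-down through a labelling $g$. The labelling keeps all inputs of every AND-gate labelled $1$, and exactly one input of every OR-gate labelled $1$ (the path input when the gate lies on the path). The paper then proves minimality of that set by propagating the removal of any element up to the root. Your argument only needs a cut set $K\subseteq D$ for which removing $v$ alone kills the root; shrinking to a minimal cut set then automatically keeps $v$. This avoids a full minimality check of a particular set. It also makes the containment in $D$ explicit. The paper's construction needs that containment but does not state it: its ``arbitrary'' choice at OR-gates off the path must be restricted to inputs that fail under $\vec u$, otherwise the resulting MCS need not lie in $D$.
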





\section{Computational complexity}

Determining whether something is a cause in binary models in general is NP-complete for \ACo{}/\ACm{} \cite{ActualCausalityHalpern} and so-called $\textrm{D}_2^{\textrm{P}}$\emph{-complete} (worse than NPC) for \ACu{} \cite{AC-complexity}. For FTs, we show that this is less complex for \ACu{} and \ACm{}:

\begin{problem} \label{prob:dec}
Given a FT $T$, a status vector $\vec{u}$, an event $w$, and set of events $\vec{X}$, determine if $\vec{X} = \vec{1}$ is actual cause of $X_w = 1$ under {\ACo}/{\ACu}/{\ACm}.
\end{problem}

\begin{theorem} \label{thm:comp}
Problem \ref{prob:dec} NP-complete for \ACo{} and \ACu{}, and solvable in polynomial time for \ACm{}. 
\end{theorem}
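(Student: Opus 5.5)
The proof splits into three parts: membership in NP for \ACo{} and \ACu{}, NP-hardness for both, and a polynomial algorithm for \ACm{}. Throughout we may reduce to the root case by restricting $T$ to the sub-FT rooted at $w$, and we may treat a non-BE $X_v$ as a BE by cutting its inputs, as remarked before Theorem~\ref{thm:mainm}. Cutting the inputs changes nothing here: with $\vec{u}$ fixed, $X_v$ keeps its actual value.

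\emph{The \ACm{} case.} We use Theorem~\ref{thm:mainm}.
\begin{itemize}
\item Conditions 1 and 2 require evaluating $\struc{T}$ once and inspecting $\vec{u}$ on $C$.
\item Condition 3 is one further evaluation of $\struc{T}(D\setminus C,\R)$.
\item For condition 4 (minimality), monotonicity (Theorem~\ref{thm:mono}) reduces the check to single-element removals. If some $C'\subsetneq C$ satisfies property 3, pick $c\in C\setminus C'$. Then $D\setminus(C\setminus\{c\})\subseteq D\setminus C'$, so $\struc{T}(D\setminus(C\setminus\{c\}),\R)\le \struc{T}(D\setminus C',\R)=0$, i.e.\ $C\setminus\{c\}$ also satisfies property 3.
\end{itemize}
So it suffices to check the $|C|$ sets $C\setminus\{c\}$, each with one linear-time evaluation. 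The whole test is polynomial.

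\emph{Membership in NP for \ACo{} and \ACu{}.} By Theorems~\ref{thm:maino} and~\ref{thm:mainu}, we first check that $\vec{X}$ is a singleton $\{X_v\}$ with value $1$. A certificate is then a path $v=v_0,\dots,v_n=\R$. Conditions 2(a) and 2(b) are checked by one evaluation of $\struc{T}$ and a quadratic scan of input sets along the path. For \ACu{}, condition 2(c) is one evaluation of the model under the intervention $\vec{D}\leftarrow\vec{0}$. All of this is polynomial.

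\emph{NP-hardness.} This is the main obstacle, since the path must be chosen to avoid "mixed" shared inputs (condition 2(b)), and for \ACu{} also to survive zeroing the OR-siblings (condition 2(c)). I would reduce from a path-avoiding-forbidden-pairs problem, or directly from 3-SAT, as follows.
\begin{itemize}
\item \emph{Chain.} Build a chain of "choice gadgets" from $v$ to $\R$, one per variable $x_i$. Each gadget offers two parallel branches, a "true" branch and a "false" branch, and all nodes are set to value $1$ by the context so that 2(a) holds on every route.
\item \emph{Clause encoding.} For each clause and each literal in it, the branch node falsifying that literal gets a shared BE input with an auxiliary clause node. Pairs of branch nodes that conflict are given opposite gate types and a common input, so any path through both violates 2(b).
\item \emph{Target correspondence.} The aim is that admissible paths correspond exactly to consistent assignments, and that a path exists iff the formula is satisfiable.
\item \emph{Simpler alternative.} An easier encoding may be to reduce from \emph{path avoiding forbidden pairs}, which is NP-complete even in DAGs. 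Each forbidden pair $\{a,b\}$ is realised by giving $a$ and $b$ opposite gate labels and a fresh common BE input of value $1$; every other node gets private inputs. The delicate points are that all values on candidate paths remain $1$ (OR gates of value $1$ help; AND gates need all inputs equal to $1$), and that the extra shared inputs create no unintended conflicts.
\item \emph{\ACu{} extension.} Make every node on the chain an AND-gate except those whose label is forced to OR by a forbidden pair. This keeps $\vec{D}$ small. Also give $\R$ a redundant structure so that setting $\vec{D}\leftarrow\vec{0}$ never falsifies the root, ensuring condition 2(c) holds automatically and hardness transfers to \ACu{}.
\end{itemize}
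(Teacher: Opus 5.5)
The \ACm{} part and the NP-membership argument are correct and essentially the paper's. Your monotonicity argument that minimality needs only the $|C|$ single removals is more careful than the paper, which merely asserts it. The genuine gap is NP-hardness, which is still a plan, and your first sketch would not work as written. Your chain from $v$ to $\R$ consists only of variable gadgets, with the clause nodes ``auxiliary''. But condition 2(b) of Theorem~\ref{thm:maino} only constrains pairs of nodes \emph{on the path}. A conflict between a branch node and an off-path clause node is therefore invisible, every choice of branches gives an admissible path, and the formula is not encoded at all.

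The missing idea, which is the core of the paper's proof, is to force the path through the clauses as well. The variable chain consists of OR-gates $t_k,f_k$ with BE inputs $a_k,b_k$, both fed by $t_{k-1},f_{k-1}$ (or by $e$ if $k=1$). After it, the path passes through $v_0=\tOR(t_m,f_m)$. Then, for each clause $g_i$, it passes through exactly one AND-gate $w_{i,j}$, whose inputs are $v_{i-1}$ and one BE, into the OR-gate $v_i$; the root is $v_n$. The gate $w_{i,j}$ takes $b_k$ if $\ell_{i,j}=x_k$ and $a_k$ if $\ell_{i,j}=\neg x_k$, so it conflicts exactly with the variable gate falsifying its literal. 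Under the all-ones context, paths obeying 2(b) correspond to a satisfying assignment together with a choice of true literal per clause.

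Your path-avoiding-forbidden-pairs alternative has its own unaddressed obstruction. Each node carries a single label, so a node in several forbidden pairs must be labelled opposite to all its partners. The forbidden-pair graph must therefore be bipartite, which general instances are not. You would also have to neutralise the extra conflicts created by shared in-neighbours in the DAG.

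The \ACu{} transfer also fails as proposed. In condition 2(c) of Theorem~\ref{thm:mainu}, $\vec D$ contains every non-path input of every OR-gate on the path. So if $\R$ is an OR-gate its redundant inputs are zeroed, and an AND root needs its path input anyway. Inductively, 2(c) holds iff every $v_i$ keeps value $1$ after $\vec D\leftarrow\vec 0$, so no redundancy at the root can make 2(c) automatic. Defaulting path nodes to AND-gates creates the opposite hazard. Their non-path inputs are not zeroed but must stay $1$, even though they may lie above zeroed nodes, such as the shared BE of a conflict whose OR-endpoint is on the path.

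The paper instead verifies 2(c) directly on its construction:
\begin{itemize}
\item A zeroed $a_k$ (so $t_k$ is on the path) feeds, besides $t_k$, only gates $w_{i,j}$ with $\ell_{i,j}=\neg x_k$. These are off the path by 2(b), and their sole output $v_i$ is an OR-gate that keeps its $1$-valued path input. The case of $b_k$ is symmetric.
\item The zeroed one of $t_k,f_k$ feeds only the next two chain gates (or $v_0$), and the one of those on the path keeps its $1$-valued path input.
\end{itemize}
Hence every path satisfying 2(b) satisfies 2(c), and the \ACo{} reduction carries over verbatim.

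A minor point: cutting the inputs of a non-BE member of $C$ does not ``change nothing'' when $C$ contains a gate together with one of its inputs. Take $g=\tOR(a)$, $\R=\tOR(g,a)$, $u_a=1$ and $C=\{a,g\}$. Then $C$ is not an \ACm{} cause in $T$, because $\{a\}$ already satisfies \AC{2}$^m$, yet it is one after cutting $g$. The paper makes the same move, and the fix is easy. By monotonicity, \AC{2}$^m$ for a set $S$ of $1$-valued nodes holds in $T$ itself iff $[X_S\leftarrow\vec 0]X_w=0$ (take $W=\varnothing$). This property is upward closed in $S$, so your single-removal check works without cutting anything.
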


A full computational study, where these complexity results are accompanied by algorithms that scale well in practice, is left for future work.

\section{Algorithms} \label{sec:algo}

\begin{algorithm}[t]
\SetAlgoLined
\KwData{Fault tree $T = (V,E,\gamma)$, status vector $\vec{u}$}
\KwResult{Set of all $v \in V$ for which $X_v = 1$ satisfies \ACo{} for $X_{\R} = 1$}
Compute $P_{T,\vec{u}}$ and $\mathcal{F}(P_{T,\vec{u}})$\;
\eIf{$\R \notin P_{T,\vec{u}}$}
{
\Return $\varnothing$
}
{
\eIf{there is an $l \in V$ such that $(\R,l) \in \mathcal{F}(P_{T,\vec{u}})$}
{
$\mathcal{S}(\R) \leftarrow \{\{\R\}\}$\;
}
{
$\mathcal{S}(\R) \leftarrow \{\varnothing\}$\;
}
\For{$v \in V_{P_{T,\vec{u}}}$ in reverse topological order}
{
$\mathcal{S}(v) \leftarrow \varnothing$\;
\For{$w \in \outp(v)$}
{
\For{$H \in \mathcal{S}(w)$}
{
\If{there is no $h \in H$ such that $(h,v) \in \mathcal{F}(P_{T,\vec{u}})$}
{\eIf{there is a $l \in V$ such that $(v,l) \in \mathcal{F}(P_{T,\vec{u}})$}
{
$\mathcal{S}(v) \leftarrow \mathcal{S}(v) \cup \{H \cup \{v\}\}$\;
}
{
$\mathcal{S}(v) \leftarrow \mathcal{S}(v) \cup \{H\}$\;
}
}
}
}
}
\Return $\{\R\} \cup \{v \in \ch(w) \mid w \in V\colon \mathcal{S}(w) \neq \varnothing\}$
}
\caption{Computing \ACo{} for fault trees} \label{alg:aco}
\end{algorithm}

In previous sections we discussed how actual causes in FTs are characterized mathematically. In this section, we discuss how these characterizations can be used algorithmically: we describe algorithms for finding actual causes under \ACo{}, \ACu{}, and \ACm{}. 

\subsection{Algorithms for \ACo{}}

Using Theorem \ref{thm:maino}, we can construct a naive algorithm that answer whether $X_v =1$ satisfies \ACo{} for $X_{\R} = 1$, by first checking whether $(M_T,\vec{u}) \models (X_v = 1) \wedge (X_{\R} = 1)$, and then considering all paths $v \rightarrow \R$ and checking whether one of them satisfies properties (2a) and (2b). The fundamental issue with this approach is that the number of paths between two vertices in a DAG with $n$ vertices is $\mathcal{O}(2^n)$, hence this approach scales badly for large $T$. Theorem \ref{thm:comp} shows that deciding \ACo{} in FTs is NP-complete, hence worst-case exponential behaviour is likely to be unavoidable; nevertheless, we can improve on this naive approach.

More precisely, one may note that the impediment for a path to show \ACo{} is the presence of an AND-gate and an OR-gate on it with a shared child. As we discuss in Section \ref{ssec:ACo}, this typically leads to irrelevant BEs in the FT, and for this reason such pairs of gates rarely occur in real-world FTs. Therefore, it makes sense to find the set of such \emph{forbidden pairs}, i.e., the set
\[
\mathcal{F}(T) = \left\{(h,l) \in V^2 \ \middle|\ \substack{\exists \text{ path } l \rightarrow h \textrm{ and } \gamma(l) \neq \gamma(h)\\ \textrm{and } \ch(h) \cap \ch(l) \neq \varnothing}\right\}.
\]
Note that each forbidden pair consists of a `high' element $h$ closer to $
\R$, and a `low' element $l$ (see Fig.~\ref{fig:algexample}).

The set $\mathcal{F}(T)$ can be used as follows. If one were just interested in nodes $v$ that have a path $v \rightarrow \R$ containing only 1s, a simple algorithm would suffice: starting from a set  $Q = \{\R\}$ of vertices from which $\R$ can be reached, we keep adding to $Q$ vertices that evaluate to 1 under $\vec{u}$ and that have an edge into $Q$, until no more vertices can be added. This finds all relevant vertices in linear time. 

For our problem, we only want such vertices whose paths to $\R$ do not contain two elements of a forbidden pair. That means that instead of storing binary information at each vertex (part of $Q$ yes/no), we must store which `high' elements we have encountered so far, to signal that we cannot add the corresponding `low' elements. Since every node can have multiple paths to the root containing different high elements, we need to store a set of sets of high elements at each vertex.

This idea is the basis for Algorithm \ref{alg:aco}, which finds \emph{all} vertices $v$ such that $X_v = 1$ satisfies \ACo{} for $X_{\R} = 1$. Since we only care about nodes that evaluate to 1, we formalize this as the \emph{positive part} of $T$ (line 1 of Algorithm \ref{alg:aco}):

\begin{definition}
Let $T$ be a FT, and let $\vec{u}$ a status vector of $T$. Then the \emph{positive part of $T$ under $\vec{u}$} is the sub-FT $P_{T,\vec{u}}$ of $T$ of all $v$ such that $\struc{T}(\vec{u},v) = 1$.
\end{definition}

Of course, if $X_{\R} = 0$, then $X_{\R} = 1$ cannot have any cause as it is not true (line 3). Otherwise, we will define, for each node $v$, a set of sets of high elements $\mathcal{S}(v)$, where $H \in \mathcal{S}(v)$ means that there is a path $v \rightarrow \R$ which contains precisely the high elements in $H$, and none of their corresponding low elements. Thus, we initialize $\mathcal{S}(\R)$ as either $\{\{\R\}\}$ or $\{\varnothing\}$, depending on whether $\R$ is itself a high element in a forbidden pair or not (lines 5-9). Each other $\mathcal{S}(v)$ is initialized as $\varnothing$ (line 11), since we have not found a path to $\R$ yet.

Next, we address nodes one by one, starting at $\R$ and working our way down (line 10). At $v$, we look at all $H$ in all $\mathcal{S}(w)$. If $v$ is not the low element of any high element in $H$ (line 14), then we add either $H$ or $H \cup \{v\}$ to $\mathcal{S}(v)$, depending on whether $v$ is a high element itself or not (lines 15-19).

At the end, we return the set of \emph{inputs} of all vertices $w$ with $\mathcal{S}(w) \neq \varnothing$. This is because if $v \in \ch(w)$ with $\mathcal{S}(v) = \varnothing$ and $\mathcal{S}(w) \neq \varnothing$, this means that $v$ is a low element corresponding to a high element in every set in $\mathcal{S}(w)$. However, by Theorem \ref{thm:maino}, the first element of a path is allowed to be part of a forbidden pair.

Essentially, this algorithm visits every edge once, so for fixed $k = |\mathcal{F}(P_{T,\vec{u}})|$ the complexity is $\mathcal{O}(|E|)$ (see Remark \ref{rmk:ptu} for why we use $\mathcal{F}(P_{T,\vec{u}})$ rather than $\mathcal{F}(T)$ here and in Algorithm \ref{alg:aco}). The information transmitted over each edge is an element of $\mathcal{S}(w) \in \mathcal{P}(\mathcal{P}(\mathcal{F}(P_{T,\vec{u}})))$, which has size $\leq 2^k$ ($\mathcal{P}$ stands for powerset). 

To find all forbidden pairs, we have to look at each of the $\mathcal{O}(|E|^2)$ pair of edges to see whether they have the same source and their targets have different gate types, and to see whether there is a path between one of the targets and the other ($\mathcal{O}(|V|)$ steps per pair). As a result, we get the following theorem:

\begin{theorem}\label{thm:algo}
Let $k = |\mathcal{F}(P_{T,\vec{u}})|$. Algorithm \ref{alg:aco} finds all actual causes of $X_{\R} = 1$ with time complexity $\mathcal{O}(|E|^2|V| + 2^k|E|)$.
\end{theorem}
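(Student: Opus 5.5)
The argument has two parts: a correctness argument via Theorem~\ref{thm:maino}, and a count of the work done, split into preprocessing and the main loop.

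\emph{Correctness.} The central claim is a loop invariant. After $v$ is processed in reverse topological order, $H \in \mathcal{S}(v)$ holds iff there is a path $v=v_0,\ldots,v_n=\R$ inside $P_{T,\vec{u}}$ (so condition 2a of Theorem~\ref{thm:maino} holds) such that
\begin{itemize}
\item $H$ is exactly the set of high elements of $\mathcal{F}(P_{T,\vec{u}})$ among $v_0,\ldots,v_n$, and
\item no $v_i$ with $i\geq 0$ is a low element paired with some $v_j\in H$.
\end{itemize}
I prove this by induction along the reverse topological order; $\R$ is the base case, by lines 5--9. The inductive step follows because every path from $v$ starts with an edge $v\to w$ with $w\in\outp(v)$. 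Since $w$ precedes $v$ in reverse topological order, $\mathcal{S}(w)$ is already final when $v$ is processed. Line 14 rejects exactly the extensions that would place a low element below its high partner.

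Next I relate this path property to condition 2b. Suppose $i,j>0$, $\ch(v_i)\cap\ch(v_j)\neq\varnothing$ and $\gamma(v_i)\neq\gamma(v_j)$. The nodes $v_i,v_j$ lie on a common path, so one reaches the other. Hence the pair, ordered as (higher, lower), lies in $\mathcal{F}(T)$. A gap remains: the shared child need not evaluate to $1$, so it may lie outside $P_{T,\vec{u}}$. I will therefore check Remark~\ref{rmk:ptu}, or argue directly, that $\mathcal{F}(P_{T,\vec{u}})$ is the right set. I expect that $P_{T,\vec{u}}$ keeps all inputs of the gates it contains, so that $\ch$ is unchanged.

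Finally, condition 2b only constrains $i,j>0$. So $u=v_0$ may itself be a low element, and a node $u$ is a cause iff $u\in\ch(w)$ for some $w$ with $\mathcal{S}(w)\neq\varnothing$, with $u$ evaluating to $1$. The node $\R$ is a cause via the trivial path. This matches the returned set; condition 1 holds automatically because only singletons $X_u=1$ are ever returned.

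\emph{Complexity.} Evaluating the structure function and building $P_{T,\vec{u}}$ takes $\mathcal{O}(|E|)$. Computing the forbidden pairs means checking, for each of the $\mathcal{O}(|E|^2)$ pairs of edges with a common source, whether the two targets have different gate types and whether one target reaches the other. Each reachability test is a DFS, which I will bound by $\mathcal{O}(|V|)$; the cleaner alternative is to precompute reachability once. Either way this gives the $\mathcal{O}(|E|^2|V|)$ term. In the main loop each edge $(v,w)$ is handled once and carries $\mathcal{S}(w)$. Every $H$ is a set of high elements, so the number of distinct $H$ is at most the number of subsets of the high elements, which is $\leq 2^k$. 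Each membership test on line 14 costs at most $\mathcal{O}(k)$, and I plan to absorb this into the constant, as the statement appears to assume.

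\emph{Main obstacle.} I expect the difficulty to lie in the correctness invariant, specifically in showing that the high/low bookkeeping matches condition 2b exactly. This includes the fact that the endpoint $v_0$ is exempt, and the question of whether the pairs should be taken from $T$ or from $P_{T,\vec{u}}$.
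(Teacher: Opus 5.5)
Your plan follows the paper's proof. It uses the same top-down invariant on $\mathcal{S}(v)$, which you state more carefully than the paper: its displayed version has $\pi\cap\mathcal{H}=\varnothing$ where $\pi\cap\mathcal{H}=H$ is intended. It also uses the same exemption of the endpoint $v_0$ from condition 2b of Theorem~\ref{thm:maino}, to explain why the algorithm returns inputs of nodes with $\mathcal{S}(w)\neq\varnothing$, and the same three-part cost count.

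\textbf{The open step.} You propose to close the one step you leave open with a false premise. $P_{T,\vec{u}}$ does \emph{not} keep all inputs of its gates: an OR-gate with value $1$ may have $0$-valued inputs, and those are deleted. The idea you are missing is the content of Remark~\ref{rmk:ptu}, which the paper cites at exactly this point. A violation of condition 2b involves gates $v_i,v_j$ of different types, so one of them is an AND-gate. That gate evaluates to $1$, so all its inputs evaluate to $1$ and lie in $P_{T,\vec{u}}$; in particular the shared input does. The subpath between $v_i$ and $v_j$ also lies in $P_{T,\vec{u}}$, so the pair belongs to $\mathcal{F}(P_{T,\vec{u}})$. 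Conversely, a pair of $\mathcal{F}(P_{T,\vec{u}})$ on the path is a violation in $T$, since $P_{T,\vec{u}}$ is a subgraph with the same gate labels. Your worry that the shared child might evaluate to $0$ therefore never arises in the relevant case. With this argument the correctness part is complete.

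\textbf{Complexity.} Absorbing the $\mathcal{O}(k)$ cost of the test on line 14 into the constant is not a valid step, and the same goes for the cost of inserting $H$ or $H\cup\{v\}$ into $\mathcal{S}(v)$. Here $k$ is a parameter of the bound, so your count honestly yields $\mathcal{O}(k2^k|E|)$ for the main loop. The paper's proof makes the same simplification without comment, so this is a looseness in the stated bound rather than a deviation from the paper.

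Likewise, a single DFS costs $\mathcal{O}(|V|+|E|)$, not $\mathcal{O}(|V|)$. Your alternative of precomputing reachability once takes $\mathcal{O}(|V||E|)$ time, using $|E|\geq|V|-1$. That is the version that actually fits inside the $\mathcal{O}(|E|^2|V|)$ term.
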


\begin{remark} \label{rmk:ptu}
In Algorithm \ref{alg:aco}, we could use $\mathcal{F}(T)$ rather than $\mathcal{F}(P_{T,\vec{u}})$, provided that we add some checks to see whether nodes evaluate to 1. Using $\mathcal{F}(P_{T,\vec{u}})$ works as well: The only relevant forbidden pairs (i.e., those that could potentially be part of a path proving \ACo{}) are of the form $(h,l)$ with both $h$ and $l$ evaluating to 1. Since one of these is an AND-gate, this means that their shared child must also evaluate to 1; thus $(h,l) \in \mathcal{F}(P_{T,\vec{u}})$. In other words, it is enough to just consider the forbidden pair of $P_{T,\vec{u}}$ in Algorithm \ref{alg:aco}; we chose to use this to not clutter the algorithm with checks for evaluation to 1. With regards to complexity, we have 
\[
|\mathcal{F}(T)| = |\mathcal{F}(P_{T,\vec{1}})| = \max_{\vec{u}} |\mathcal{F}(P_{T,\vec{u}})|,
\]
so using $P_{T,\vec{u}}$ could lead to better performance if $\vec{u}$ has fewer elements that evaluate to 1.
\end{remark}

\begin{example}
We consider the FT of Fig.~\ref{fig:algexample}, and apply Algorithm \ref{alg:aco}. First, only $c,e$ evaluate to 0, so $P_{T,\vec{u}} = T-\{c,e\}$ and we no longer consider these nodes. Removing these BEs does not change the set of forbidden pairs, so $\mathcal{F}(P_{T,\vec{u}}) = \{(i,g),(j,h)\}$. Next, we compute $\mathcal{S}(v)$ top-down, for each $v$:
\begin{itemize}
\item Since $\R = k$ is not in a forbidden pair: $\mathcal{S}(k) = \{\varnothing\}$.
\item At $i$, we only consider $H = \varnothing$ in line 13. This does not contain any pairs of which $i$ is the low element (line 14). However, $i$ is itself a high element (line 15), so we get $\mathcal{S}(i) = \{\{i\}\}$.
\item At $g$, we only consider $H = \{i\}$. Since $(i,g) \in \mathcal{F}(P_{T,\vec{u}})$, we do not add this $H$ to $\mathcal{S}(g)$, and $\mathcal{S}(g) = \varnothing$.
\item At $a$, there are no $H$ to consider, so $\mathcal{S}(a) = \varnothing$.
\item Analogously we get $\mathcal{S}(j) = \{\{j\}\}$ and $\mathcal{S}(h) = \mathcal{S}(d) = \varnothing$.
\item At $f$, we consider $H = \{i\}$ and $H = \{j\}$. The node $f$ is not a low element corresponding to these high elements, and $f$ is not a high element itself, so $\mathcal{S}(f) = \{\{i\},\{j\}\}$. We also get $\mathcal{S}(b) = \{\{i\},\{j\}\}$.
\end{itemize}
Finally, the actual causes are $k$ and inputs of nodes with nonzero $\mathcal{S}$: $i,j,g,h,f,b$.
\end{example}

\subsection{Algorithms for \ACu{}}

\begin{wrapfigure}[17]{r}{5cm}
    \centering
    \vspace{-5.7em}
    \includegraphics[width=\linewidth]{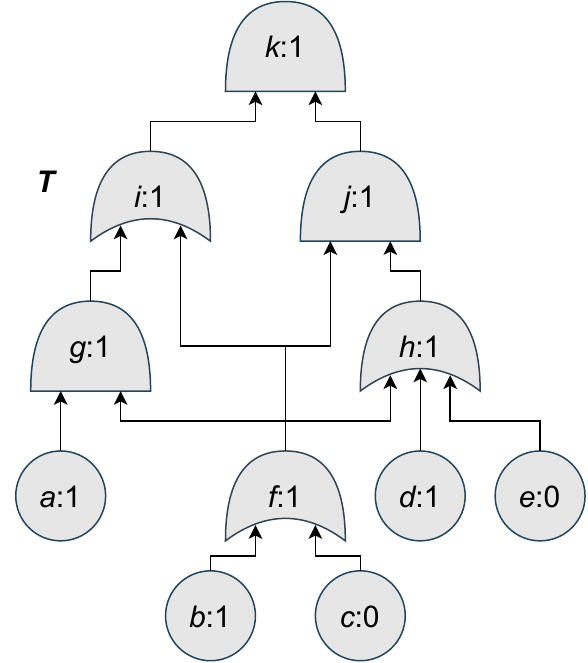}
    \caption{An example FT; node names and structure function values are inscribed. We have $\mathcal{F}(T) = \{(i,g),(j,h)\}$, where $i,j$ are high elements and $g,h$ are low elements.}
    \label{fig:algexample}
\end{wrapfigure}

Unfortunately, an approach such as Algorithm \ref{alg:aco} appears not to work: by Theorem \ref{thm:mainu}, we also need to be able to set the children of all OR-gates on the path to 0 without affecting $\R$. Thus, at every node we cannot just store a set of sets of high elements; instead, we need a set of sets of [high elements and OR-gates]. Thus the stored information is exponential in $|V|$. At that rate, one might as well just check all paths, of which there are also $\mathcal{O}(2^{|V|})$ many. Including some polynomial factors to account for the checks that need to be done at each path, this leads to the following, somewhat unsatisfying, conclusion:

\begin{theorem} \label{thm:algu}
There exists an algorithm that finds all causes of $X_{\R} = 1$ under \ACu{} in time complexity $\mathcal{O}(2^{|V|}|V|^2|E|^2)$.
\end{theorem}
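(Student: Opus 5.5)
By Theorem \ref{thm:mainu}, every \ACu{} cause of $X_{\R}=1$ is a singleton $X_v=1$ witnessed by a path $v=v_0,\ldots,v_n=\R$ satisfying conditions 2a--2c. The algorithm is therefore a brute-force enumeration over candidate paths, checking the three conditions for each. The plan is to bound the number of paths and the cost of checking one path.

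First, compute the actual values $\struc{T}(\vec u,w)$ for all $w\in V$. This is a single bottom-up pass in $\mathcal{O}(|V|+|E|)$. If $X_{\R}=0$, return $\varnothing$. Otherwise, for each node $v$ enumerate all directed paths from $v$ to $\R$. A path in a DAG is determined by its vertex set, since a topological order fixes the sequence. Hence there are at most $2^{|V|}$ paths overall across all start nodes (each path is determined by its vertex set), and they can be generated by DFS with polynomial overhead per path.

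For a fixed path, check the three conditions:
\begin{itemize}
\item Condition 2a needs $\mathcal{O}(|V|)$ lookups.
\item Condition 2b compares all pairs $i,j$ of path nodes and tests whether $\ch(v_i)\cap\ch(v_j)\neq\varnothing$. Naively this costs $\mathcal{O}(|V|^2)$ pairs, each intersection costing $\mathcal{O}(|E|)$, for $\mathcal{O}(|V|^2|E|)$ in total.
\item Condition 2c builds $\vec D$ in $\mathcal{O}(|E|)$, then recomputes the model under the intervention $\vec D\leftarrow\vec 0$ by one more bottom-up pass in $\mathcal{O}(|V|+|E|)$.
\end{itemize}
The per-path cost is thus within $\mathcal{O}(|V|^2|E|^2)$, a deliberately generous bound that also absorbs the DFS bookkeeping. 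Multiplying by $2^{|V|}$ paths gives $\mathcal{O}(2^{|V|}|V|^2|E|^2)$. Output $v$ whenever some path from it passes all checks.

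Correctness is immediate from the ``if and only if'' in Theorem \ref{thm:mainu}: condition 1 holds by construction because we only test singletons with value $1$, and conditions 2a--2c are exactly what is checked. The only point needing care is the path count, namely that enumerating paths (rather than walks) in a DAG is bounded by $2^{|V|}$ and can be done with polynomial delay. I expect this to be routine. The main work is simply bookkeeping the polynomial factors so that they fit under the stated $|V|^2|E|^2$.
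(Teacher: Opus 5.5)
Your proposal is correct and follows essentially the same brute-force route as the paper: enumerate all paths $v\to\R$ and check the conditions of Theorem~\ref{thm:mainu} on each one, recomputing the structure function under $\vec{D}\leftarrow\vec{0}$ for condition 2c. The only differences are bookkeeping, and both versions fit within the stated bound: the paper checks 2b against a precomputed set of forbidden pairs and counts $\mathcal{O}(2^{|V|})$ paths per start vertex, whereas you test 2b directly (restrict that test to $i,j>0$, as in the theorem) and count paths globally via their vertex sets.
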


Finding a more efficient algorithm to compute \ACu{}, that circumvents the need to enumerate all paths, is left for future work.

\subsection{Algorithms for \ACm{}}

\begin{figure}
\centering
\includegraphics[width=10cm]{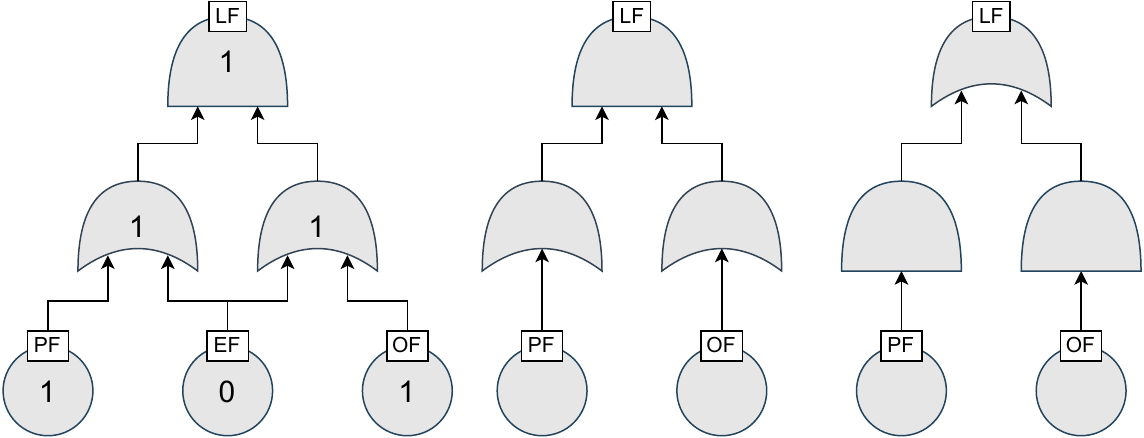}
\caption{From left to right: a fault tree $T$ with context $\vec{u} = (1,0,1)$; its positive part $P_{T,\vec{u}}$; the dual of its positive part $\check{P}_{T,\vec{u}}$.} \label{fig:MPS}
\end{figure}

As in Section \ref{ssec:acm}, we only look at causes consisting of basic events, mostly for notational convenience. Theorem \ref{thm:comp} was concerned with checking whether a given set of variables is a cause; now our task is to find all actual causes. this brings its own complications, since we are now looking for sets of events rather than just single events. At the same time, the fact that the problem is now purely about Boolean functions, rather than the graph structure, allows us to make use of existing methods for fault trees. In order to explain this, we first introduce the notion of \emph{minimal path sets}, which are minimal sets of basic events whose nonfailure ensures continued system functioning:

\begin{definition}
Let $T$ be a FT. A \emph{path set} is a set of basic events $C \subseteq \BE{T}$ such that $\struc{T}(\BE{T}\setminus C,\R) = 0$. A \emph{minimal path set} (MPS) furthermore satisfies $\struc{T}(\BE{T}\setminus C,\R) = 1$ for all $C' \subset C$.
\end{definition}

MPS is the dual notion of MCS: the MPSs of $T$ are precisely the MCSs of the dual FT $\check{T}$ obtained by exchanging all AND-gates into OR-gates and vice versa (see Fig.~\ref{fig:MPS}). With this terminology and the notion of the positive part $P_{T,\vec{u}}$, the following follows directly from Theorem \ref{thm:mainm}:

\begin{lemma} \label{lem:mps}
Let $T$ be a FT, and let $\vec{u}$ be a status vector of $T$. Let $\vec{X}$ be a set of variables in $M_T$, corresponding to a set of BEs $C \subseteq \BE{T}$, and let $\vec{x}$ be a possible value for $\vec{X}$. Then $\vec{X} = \vec{x}$ satisfies \ACm{} for $X_{\R} = 1$ if and only if the following are satisfied:
\begin{enumerate}
\item $(M_T,\vec{u}) \models X_{\R}=  1$;
\item $\vec{x} = \vec{1}$, and $u_v = 1$ for all $v \in C$;
\item $C$ is a minimal path set in $P_{T,\vec{u}}$.
\end{enumerate}
\end{lemma}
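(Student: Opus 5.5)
The plan is to derive the lemma directly from Theorem~\ref{thm:mainm}. Conditions 1 and 2 are identical in both statements, so it suffices to show the following. Assume conditions 1 and 2 hold, and let $D=\{v\in\BE{T}\mid u_v=1\}$. Then conditions 3 and 4 of Theorem~\ref{thm:mainm} together hold if and only if $C$ is a minimal path set of $P_{T,\vec{u}}$.

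First we identify the basic events of $P_{T,\vec{u}}$. A basic event $v$ satisfies $\struc{T}(\vec{u},v)=u_v$, so $\BE{P_{T,\vec{u}}}$ consists of those elements of $D$ that lie in the positive part. We need a small care point here: $P_{T,\vec{u}}$ as a sub-FT consists of the $1$-valued nodes, presumably keeping the edges among them. Every $1$-valued node can reach $\R$ through $1$-valued nodes (its successors in the paper's sense need not all be $1$, but the root is $1$ by condition 1). We will restrict attention to nodes below $\R$ in this induced DAG, and argue that $D$ is exactly the leaf set, allowing possibly irrelevant leaves. Gates with all inputs removed cannot occur. An AND gate with value $1$ has all inputs $1$, and an OR gate with value $1$ has at least one $1$-input, so every gate in $P_{T,\vec{u}}$ keeps at least one input.

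The key claim is that for every $S\subseteq D$ and every node $w$ in $P_{T,\vec{u}}$,
\[
\struc{P_{T,\vec{u}}}(S,w)=\struc{T}(S,w).
\]
We prove this by induction over the DAG in topological order.
\begin{itemize}
\item For BEs it is immediate.
\item For an AND gate $w$ with value $1$, all inputs of $w$ in $T$ are $1$-valued, so they all survive in $P_{T,\vec{u}}$ and the conjunctions agree.
\item For an OR gate $w$, an input $w'$ removed from $T$ has $\struc{T}(D,w')=0$. By monotonicity (coherence), $\struc{T}(S,w')=0$ for every $S\subseteq D$. So the removed disjuncts contribute nothing and the disjunctions agree.
\end{itemize}

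With this claim, $C\subseteq D$ is a path set of $P_{T,\vec{u}}$ exactly when $\struc{P_{T,\vec{u}}}(D\setminus C,\R)=0$, which by the claim is equivalent to $\struc{T}(D\setminus C,\R)=0$, i.e.\ property 3. Minimality among subsets of $C$ matches as well, since all such subsets lie in $D$. Hence property 4, minimality with respect to property 3, is precisely the MPS minimality condition. This completes the equivalence.

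The main obstacle is the bookkeeping in the last step of the key claim and the definition of $P_{T,\vec{u}}$ as an FT. One must ensure that it has a unique root $\R$, which requires condition 1, and that its leaves are exactly the BEs in $D$ reachable to $\R$. If some element of $D$ does not appear in $P_{T,\vec{u}}$, it cannot be in a minimal path set, and it also cannot be in a $C$ that is minimal for property 3, since removing it from $C$ does not change the structure value. So the two notions still coincide. We would remark on this briefly rather than belabor it.
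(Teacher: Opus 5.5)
Your proposal is correct and takes the paper's route: the paper only says the lemma follows directly from Theorem~\ref{thm:mainm}, and you supply the missing step, namely that $\struc{P_{T,\vec{u}}}(S,\R)=\struc{T}(S,\R)$ for all $S\subseteq D$, so that path sets of $P_{T,\vec{u}}$ are exactly the sets satisfying property~3. The step holds because $1$-valued AND-gates keep all their inputs, and $1$-valued OR-gates lose only $0$-valued inputs, which stay $0$ on every subset of $D$ by coherence.

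One correction: your claim that every $1$-valued node reaches $\R$ through $1$-valued nodes is false. Take $\R=\tOR(a,g)$ and $g=\tAND(b,c)$ with $u_a=u_b=1$ and $u_c=0$; then $b$ is $1$-valued but its only successor $g$ is $0$. Hence $D$ need not be the leaf set of the root-restricted positive part. Your closing remark still repairs this: such elements of $D$ belong neither to a minimal path set nor to a $C$ that is minimal for property~3. So the argument stands.
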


Thus, finding all actual causes of $X_{\R} = 1$ comes down to finding all MPS of $P_{T,\vec{u}}$, which in turn comes down on finding all MCSs of $\check{P}_{T,\vec{u}}$. Finding all MCSs is a core problem in FT analysis; an overview of existing approaches is given in \cite{ruijters2015fault}. These are typically worst-case exponential in the size of the FT, and differ in how well they perform on `typical' FTs.

\begin{theorem} \label{thm:algm}
There exists an algorithm that finds all actual causes consisting of BEs of $X_{\R} = 1$ with time complexity $\mathcal{O}(2^{|\BE{T}|}|E|)$.
\end{theorem}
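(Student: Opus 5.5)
The algorithm comes straight out of Lemma~\ref{lem:mps}: it evaluates the tree, builds the positive part, and enumerates its minimal path sets by brute force.

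\textbf{Step 1: evaluate.} Compute $\struc{T}(\vec{u},v)$ for every node by one bottom-up pass in topological order. This costs $\mathcal{O}(|V|+|E|)$. If $X_{\R}=0$, return $\varnothing$; by condition~1 of Lemma~\ref{lem:mps} there are no causes.

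\textbf{Step 2: positive part.} Otherwise, build $P_{T,\vec{u}}$ in $\mathcal{O}(|E|)$, keeping the nodes with value $1$ and the edges among them. Let $B$ be its set of basic events, i.e.\ the $v \in \BE{T}$ with $u_v=1$. By conditions~2 and~3 of Lemma~\ref{lem:mps}, the actual causes consisting of BEs are exactly the statements $\vec{X}=\vec{1}$ whose BE set $C$ is a minimal path set of $P_{T,\vec{u}}$. So it remains to list all MPSs of $P_{T,\vec{u}}$.

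\textbf{Step 3: enumerate path sets.} For each $C\subseteq B$, evaluate $\struc{T}(B\setminus C,\R)$ bottom-up in $\mathcal{O}(|E|)$; $C$ is a path set iff this value is $0$. There are at most $2^{|B|}\le 2^{|\BE{T}|}$ subsets, so this stage costs $\mathcal{O}(2^{|\BE{T}|}|E|)$. Here I would note that evaluating $\struc{T}$ on a set of BEs contained in $B$ gives the same value at $\R$ as evaluating the structure function of $P_{T,\vec{u}}$. The reason is that a node with value $0$ under $\vec{u}$ still has value $0$ under any subset of $B$, by monotonicity (Theorem~\ref{thm:mono}).

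\textbf{Step 4: extract minimal ones (main obstacle).} The delicate step is minimality, because naively comparing every pair of path sets costs $4^{|\BE{T}|}$. To stay within the bound, I would use coherence. Since $\struc{T}$ is monotone, the family of path sets is upward closed: if $C$ is a path set and $C \subseteq C'$, then $B\setminus C' \subseteq B\setminus C$, so $\struc{T}(B\setminus C',\R)=0$. Hence $C$ is minimal iff $C$ is a path set and no $C\setminus\{e\}$ with $e\in C$ is a path set.

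Record the Step~3 results in a Boolean table indexed by bitmask. Then minimality of each $C$ takes at most $|\BE{T}|$ table lookups, for $\mathcal{O}(2^{|\BE{T}|}|\BE{T}|)$ in total. Every BE has an outgoing edge (the root, being unique, is a gate unless the tree is a single BE, which is trivial), so $|\BE{T}|\le|E|$ and this cost is also $\mathcal{O}(2^{|\BE{T}|}|E|)$.

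Output each minimal $C$ as the cause $\vec{X}=\vec{1}$. Summing Steps~1–4, the total running time is $\mathcal{O}(2^{|\BE{T}|}|E|)$, and correctness follows from Lemma~\ref{lem:mps}.
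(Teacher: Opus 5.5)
Your proof is correct, but it takes a different route from the paper's. The paper dualizes the positive part, forming $\check{P}_{T,\vec{u}}$, and lists its minimal cut sets with the bottom-up MICSUP algorithm. It bounds the cost by noting that at most $2^{|\BE{T}|}$ sets are sent over each edge. You avoid dualization entirely. In fact you do not need $P_{T,\vec{u}}$ either, since condition~3 of Theorem~\ref{thm:mainm} is already the statement $\struc{T}(D\setminus C,\R)=0$ on $T$ itself. Instead you tabulate $C\mapsto\struc{T}(B\setminus C,\R)$ with $2^{|B|}$ linear-time evaluations, and use upward closure of path sets to reduce minimality to $|C|$ single-element deletions.

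What your approach buys is a fully accounted bound. The paper charges only for transmitting families along edges, not for combining them at a gate. At a dual AND-gate, forming all pairwise unions and then discarding non-minimal sets can naively cost time quadratic in the family sizes, which can exceed $2^{|\BE{T}|}$ per edge. Every step of yours is explicitly costed, including the observation that the unique root forces $|\BE{T}|\le|E|$.

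What the paper's route buys is a direct link to standard MCS machinery. That machinery manipulates only the sets that actually arise, and is typically much faster on real fault trees. Your truth-table enumeration, by contrast, always spends $\Theta(2^{|B|}|E|)$ time, regardless of how many causes exist.
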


\begin{example}
Consider the FT $T$ from Fig.~\ref{fig:MPS}, which is $T_2$ of Fig.~\ref{fig:aco-example}, with the context $\vec{u} = (1,0,1)$. Since $X_{\mathsf{LF}} = 1$, we can look for causes for this event. We get its positive part $P_{T,\vec{u}}$ by removing its 0-valued nodes, which is just $\mathsf{EF}$. To find the minimal path sets of this FT, we consider its dual $\hat{P}_{T,\vec{u}}$. This has structure function $X_{\mathsf{PF}} \vee X_{\mathsf{OF}}$, so its MCSs are $\{\mathsf{PF}\}$ and $\{\mathsf{OF}\}$. These are then also the MPSs of $P_{T,\vec{u}}$. We conclude that under this context, $X_{\mathsf{PF}} = 1$ and $X_{\mathsf{OF}} = 1$ are the actual causes under \ACm{} of $X_{\mathsf{LF}} = 1$. 
\end{example}

\section{Related Work}\label{sec:rel-work}
Causal reasoning has gained significant traction in recent years, particularly in fields such as artificial intelligence, science, and engineering. Many approaches build on counterfactual notions of causality inspired by Lewis~\cite{Lewis1973-LEWC-2}. As in~\cite{Lewis1973-LEWC-2}, determining whether $A$ caused $B$, involves evaluating a hypothetical scenario in which $A$ did not occur and observing whether $B$ would still take place. However, this counterfactual approach is often too weak \GCmar{@Milan: we are missing something here}
to capture causality in robust systems where multiple failures are needed to cause system failure.

To address this issue, Halpern and Pearl introduced a more nuanced framework~\cite{HalpernP01} based on structural models and Boolean equations. These, along with a set of formal conditions (commonly referred to as the AC conditions), characterise a cause of an outcome. This AC definition was later refined to account for richer causal scenarios~\cite{ActualCausalityHalpern}. These are particularly relevant to our investigation into the relationship between AC and causality in FTs.

Various other contributions build on AC to define liability frameworks for assessing responsibility, blame~\cite{DBLP:conf/sum/Halpern11}, and harm~\cite{DBLP:journals/mima/BeckersCH24}. Frameworks such as~\cite{Fenton-Glynn2017-FENAPP-2}, for instance, illustrate how AC can naturally be extended to address probabilistic causation as well. The work in~\cite{DBLP:conf/uai/Pearl12} formalises interventions (and counterfactuals) based on the so-called do-operator. This captures interventions by modifying the structural model: it overrides specific functional dependencies with fixed values, while leaving the remainder of the model unaffected.

Beyond the philosophical community, AC has been connected in various ways to models of computation in more technical fields.  In~\cite{DBLP:conf/vmcai/Leitner-FischerL13,DBLP:conf/tase/BonsangueCFT22,DBLP:journals/fuin/CaltaisMS20},
for instance, the AC framework is applied to transition systems and trace-based models used to represent concurrent system behaviour. The work in~\cite{DBLP:journals/fmsd/BeerBCOT12} follows AC to define the causes responsible for the specification failure observed in counterexample traces. In~\cite{DBLP:conf/icse/DubslaffWBA22}, the authors propose an AC-inspired notion of causality in configurable systems, to determine those features (e.g., execution time) that cause a given undesired system behaviour. In~\cite{TNO-causality}, the do-operator~\cite{DBLP:conf/uai/Pearl12} is used to define counterfactual-based tests for systems modeled as Bayesian Networks (BNs). This helped with speeding up the selection of the next diagnostic step in the root cause analysis of such systems. 
Moreover, BNs generalise FTs and can also be interpreted as structural causal models. However, computing conditional probabilities for formulas involving interventions, in the style of AC, presents significant challenges. Recent work, such as~\cite{DBLP:conf/nips/GalhotraH24}, explores a range of assumptions to enable the estimation of probabilities for interventional formulas in BNs.

In~\cite{DBLP:conf/gramsec/IbrahimRSAP20}, 
causal models are extracted from attack trees~\cite{mauw2005foundations} to support causal inference. While that work focuses on translating attack trees to CMs, we go beyond that by giving a full characterisation of AC as in \cite{ActualCausalityHalpern} within FTs, and investigating the relation between AC and minimal cut sets, the main ``causal'' tool of FT analysis.
An AC-based approach to extracting FTs encoding system failures was proposed in~\cite{DBLP:journals/ijccbs/Leitner-FischerL13}. These trees are, in essence, a more compact way of encoding a set of counterexample traces witnessing the violation of a specification. In~\cite{DBLP:journals/ijccbs/Leitner-FischerL13}, such traces are considered causal based on an adoption of AC to the setting of state-based systems.

\section{Conclusions}\label{sec:conclusions}
\MSmar{Nicely phrased!}
\MLZ{To do: rephrase conclusions to match introduction and rest of the paper.}
This paper offers a systematic study of fault trees from the perspective of the Halpern-Pearl framework of actual causality. We translate FTs to causal models, and classify actual causes in these models under the \ACo{}, \ACu{} and \ACm{} definitions. This shows how in FTs, \emph{causality} arises as a combination of their Boolean nature and their graph structure. Using the different definitions, introduced in AC to capture multi-element causes (\ACu{}) and to simplify assessments (\ACm{}), gives us degrees to emphasize the role of the graph structure, and deepens our understanding of system failures.

We also study the relation between minimal cut sets and causality: MCSs are not causes, but being an element of an MCS is a sufficient condition for being a cause. Because of the role of the graph structure, it is only a necessary condition when the FT satisfies specific assumptions.


Our results lay the groundwork for integrating causal techniques like interventions and counterfactuals into FT analysis, enhancing diagnosis and explanation within a unified framework for reliability and failure analysis.

Our study opens several promising research directions. As an immediate next step, we will explore algorithmic methods for computing actual causes in FTs, focusing on the complexity of {{\ACo}, {\ACu} and {\ACm} in practical cases.
Second, a natural extension is to integrate probabilistic reasoning. Since Bayesian Networks generalise FTs and link closely to CMs, we aim to investigate how probabilistic fault analysis can benefit from our causality-based approach.
Finally, our work paves the way for design-time causality analysis. Moving beyond post hoc diagnosis, we plan to explore how causality and responsibility can guide design decisions to reduce failures. 

\bibliographystyle{splncs04}
\bibliography{biblio}

\appendix

\section{Proofs}

In this appendix we prove the mathematical results of this paper. Before moving to the proofs proper, we reformulate causal models as \emph{graph causal models}, the notation of which will be more suited for the mathematical proofs.

\subsection{Graph causal models}

For proving our results, it will be convenient to reformulate the definition of CM, as intricate mathematical reasoning can be tricky in the notation proposed in the main text. For example, for two variables $X$ and $Y$ the statement $X=Y$ could mean that $X$ and $Y$ have the same value; that $F_X(Y) = Y$; or that $X$ and $Y$ are the same element of the set $\inodes \cup \cnodes$. We therefore give a new definition which takes the underlying graph as the primordial object.

\begin{definition}
A \emph{graph causal model} is a tuple $G = (\cV,\iV,\E,\mathcal{R},F)$ consisting of:
\begin{itemize}
\item Two disjoint sets $\cV$ and $\iV$, whose union is denoted $\V$;
\item A set $\E \subset \V \times \V$ such that $(\V,\E)$ is a directed acyclic graph, and every node in $\cV$ is a source;
\item A set $\mathcal{R}(v)$ for each $v \in \V$;
\item A function $F_v\colon \prod_{w \in \ch(v)}\mathcal{R}(w) \rightarrow \mathcal{R}(v)$ for each $v \in \iV$.
\end{itemize}
\end{definition}

Each $v \in \V$ is to be thought of as corresponding to a variable $X_v$, which takes values in $\mathcal{R}(v)$ (in the main text, we take $\mathcal{R}(v) = \BB$ throughout; we call such models \emph{binary graph causal models}). The tuple of variables corresponding to a subset $W \subseteq \V$ is denoted $X_W$, which takes values of the form $\mathcal{R}(W) := \prod_{w \in W} \mathcal{R}(w)$. Thus $X_W = x_W$ denotes the fact that for each $w \in W$, the variable $X_w$ takes the value $x_w$. Thus a \emph{context} is an element $\vec{u} \in \mathcal{R}(\cV)$ (for consistency with the main text, we keep contexts as $\vec{u}$; its coefficients are written $u_v$). 

To a graph causal model we associate a causal model as follows:

\begin{definition}
Let $G = (\cV,\iV,\E,\mathcal{R},F)$ be a binary graph causal model. Then its associated causal model $M_G = (\cnodes,\inodes,F)$ is defined as follows:
\begin{align*}
\cnodes &= \{X_v \mid v \in \cV\},\\
\inodes &= \{X_v \mid v \in \iV\},\\
F_{X_v}(\vec{Y}) &= F_v(X_{\ch(v)}),
\end{align*}
where $\vec{Y}$ are all variables in $\cnodes \cup \inodes \setminus \{X_v\}$.
\end{definition}


The transformation from graph causal models to causal models allows us to define the entire causality machinery on the level of graph causal models. For instance, if $G = (\cV,\iV,\E,\mathcal{R},F)$ is a graph causal model, $W \subseteq \iV$ is a set of internal nodes and $x_W \in \mathcal{R}(W)$, then we define the intervention $G_{X_W \leftarrow x_w}$ to be the graph causal model $({\iV}',{\cV}',\E',\mathcal{R}',F')$ defined as
\begin{align*}
{\iV}' &= \iV,\\
{\cV}' &= \cV,\\
\E' &= \E \setminus (\V \times W),\\
\mathcal{R}'(v) &= \mathcal{R}(v) \textrm{ for all $v \in \V$},\\
F'_v &\equiv x_v \textrm{ if $v \in W$,}\\
F'_v &= F_v \textrm{ otherwise.}
\end{align*}
It is straightforward to check that this is the same as intervention on causal models:

\begin{lemma} \label{lem:obv1}
Let $G = (\cV,\iV,\E,\mathcal{R},F)$ be a binary graph causal model, let $W \subseteq \iV$ and let $x_w \in \mathcal{R}(W) = \BB^W$. Then $M_{G_{X_W \leftarrow x_w}} = (M_G)_{X_W \leftarrow x_w}$. \qed
\end{lemma}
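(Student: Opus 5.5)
The claim is that $M_{G_{X_W \leftarrow x_W}}$ and $(M_G)_{X_W \leftarrow x_W}$ coincide as causal models. I would prove this by unfolding both sides from the definitions and comparing them component by component.

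The signature is the easy part. On the left, $G_{X_W\leftarrow x_W}$ has the same node sets $\cV$ and $\iV$ as $G$ (note that the definition lists ${\iV}'$ and ${\cV}'$ in swapped order in the tuple; I read it as intended, with exogenous and internal nodes unchanged). Hence $M_{G_{X_W\leftarrow x_W}}$ has exogenous variables $\{X_v \mid v\in\cV\}$ and endogenous variables $\{X_v\mid v\in\iV\}$. An intervention on a causal model never changes its signature, so $(M_G)_{X_W\leftarrow x_W}$ has the same variable sets.

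For the structural equations I would fix $v\in\iV$ and split into two cases.

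\emph{Case $v\in W$.} On the right, intervention replaces $F_{X_v}$ by the constant $x_v$. On the left, $F'_v\equiv x_v$, and $\ch'(v)=\varnothing$ because $\E'$ removes every edge into $W$. So $F'_{X_v}$ is the constant $x_v$ on both sides.

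\emph{Case $v\notin W$.} On the right, $F_{X_v}$ is unchanged, namely $\vec Y\mapsto F_v(X_{\ch(v)})$. On the left, $F'_v=F_v$. The point to check is that the input set is unchanged: $\ch'(v)=\{w \mid (w,v)\in\E'\}=\ch(v)$, since only edges with target in $W$ were deleted. Therefore $F'_{X_v}(\vec Y)=F_v(X_{\ch(v)})$ as well.

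The only subtlety is this bookkeeping of input sets. In a causal model, $F_{X_v}$ formally takes all other variables as arguments and ignores those outside $\ch(v)$, so "the same function" means agreement as functions of all the other variables. Edge deletion in the first case is harmless because a constant function ignores its arguments. With that settled, the two causal models agree on every component, which proves the lemma.
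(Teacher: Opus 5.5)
Your proposal is correct, and it is essentially the paper's approach: the paper omits the proof as ``straightforward to check,'' and your component-by-component unfolding is that check. You compare the unchanged signature, the constant equations on $W$ (whose input sets become empty), and the unchanged equations and input sets off $W$; your reading of the swapped ${\iV}'$, ${\cV}'$ in the paper's tuple as a typo is also right.
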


Furthermore, it will be convenient to express $(M,\vec{u}) \models \varphi$ as a \emph{structure function}, which we define as follows:

\begin{definition}
Let $G = (\cV,\iV,\E,\mathcal{R},F)$ be a graph causal model, and let $\vec{u} \in \mathcal{R}(\cV)$. Then for each $v \in \V$, define a value $\struc{G}(\vec{u},v) \in \mathcal{R}(v)$ recursively as follows:
\begin{itemize}
\item If $v \in \cV$, then $\struc{G}(\vec{u},v) = u_v$;
\item If $v \in \iV$, then let $x_w = \struc{G}(\vec{u},w)$ for all $w \in \ch(v)$. Then define $\struc{G}(\vec{u},v) = F_v(x_{\ch(v)})$.
\end{itemize}
\end{definition}

Because $(\V,\E)$ is a directed acyclic graph, this defines each $\struc{G}(\vec{u},v)$ uniquely. Intuitively, $\struc{G}(\vec{u},v)$ is the value $X_v$ obtains in the causal model $M_G$ under context $\vec{u}$:

\begin{lemma} \label{lem:obv2}
Let $G = (\cV,\iV,\E,\mathcal{R},F)$ be a binary graph causal model. Then for all $v \in V$ and all $\vec{u} \in \mathcal{R}(\cV) = \BB^{\cV}$ we have $(M_G,\vec{u}) \models X_v = \struc{G}(\vec{u},v)$. \qed
\end{lemma}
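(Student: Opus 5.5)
The statement to prove is Lemma~\ref{lem:obv2}: in a binary graph causal model $G$, each variable $X_v$ of $M_G$ takes the value $\struc{G}(\vec{u},v)$ under context $\vec{u}$. I will argue by well-founded induction over the directed acyclic graph $(\V,\E)$. First fix a topological order of $\V$, in which every node comes after all of its inputs. Then show, along this order, that the assignment $v \mapsto \struc{G}(\vec{u},v)$ is a solution of the structural equations of $M_G$ under $\vec{u}$. By the definition of satisfaction, $(M_G,\vec{u}) \models X_v = x$ holds iff $X_v$ takes value $x$ in the unique solution of these equations. So it suffices to check two things: that our assignment solves the equations, and that the solution is unique.

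\emph{Existence.} For $v \in \cV$, the variable $X_v$ is exogenous and is fixed to $u_v$ by the context, which equals $\struc{G}(\vec{u},v)$ by definition. For $v \in \iV$, the structural equation reads $X_v = F_{X_v}(\vec{Y}) = F_v(X_{\ch(v)})$. Substitute $x_w = \struc{G}(\vec{u},w)$ for each $w \in \ch(v)$. The right-hand side then becomes $F_v(x_{\ch(v)})$, which by the recursive definition is exactly $\struc{G}(\vec{u},v)$. So every equation is satisfied.

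\emph{Uniqueness.} Let $s$ be any solution. I will show $s_v = \struc{G}(\vec{u},v)$ by induction along the topological order. Sources in $\cV$ are forced by the context. For internal $v$, the induction hypothesis gives $s_w = \struc{G}(\vec{u},w)$ for all $w \in \ch(v)$, and the equation for $v$ then forces $s_v = F_v(s_{\ch(v)}) = \struc{G}(\vec{u},v)$.

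The only step needing care is a bookkeeping subtlety: a node $v \in \iV$ that is a source. There $\ch(v) = \varnothing$, so $F_v$ is a constant function on the empty product, and the recursion still defines its value. I will also note that $F_{X_v}$ formally takes all other variables as arguments but depends only on $X_{\ch(v)}$, so the substitution above is legitimate. I expect no real obstacle: this is the standard fact that acyclic structural equations have a unique solution computed by recursion, and it is the reason the paper marks the lemma with \qed.
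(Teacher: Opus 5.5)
Your proof is correct and is the argument the paper intends. The paper writes no proof and marks the lemma \qed as immediate, relying only on the remark that acyclicity of $(\V,\E)$ makes the recursive definition of $\struc{G}$ well defined. Your existence and uniqueness induction along a topological order spells that out, including the legitimate points that $F_{X_v}$ depends only on $X_{\ch(v)}$ and that internal sources have constant $F_v$.
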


We write $\struc{G}(\vec{u},v)[X_W \leftarrow x_W]$ for $\struc{G[X_W \leftarrow x_w]}(\vec{u},v)$. By Lemmas \ref{lem:obv1} and \ref{lem:obv2}, we have (in a somewhat confusing notation)
\[
(M_G,\vec{u}) \models [X_W \leftarrow x_w]\Big(X_v = \struc{G}(\vec{u},v)[X_W\leftarrow x_w]\Big).
\]

\subsection{Fault trees as graph causal models}

If $T = (V,E,\gamma)$ is a FT, then we define the associated graph causal model $G_T = (\cV,\iV,\E,\mathcal{R},F)$ as
\begin{align*}
\cV &= \{\tilde{v} \mid v \in \BE{}\} \textrm{ is a disjoint copy of $\BE{}$},\\
\iV &= V,\\
\E &= E \cup \{(\tilde{v},v) \mid v \in \BE{}\},\\
\mathcal{R}(v) &= \BB \textrm{ for all $v \in \iV$},\\
F_v &= \begin{cases}
X_{\tilde{v}}, & \textrm{ if $\gamma(v) = \tBE$},\\
\bigvee_{w \in \ch(v)} X_{w}, & \textrm{ if $\gamma(v) = \tOR$},\\
\bigwedge_{w \in \ch(v)} X_{w}, & \textrm{ if $\gamma(v) = \tAND$}.
\end{cases}
\end{align*}

This is related to the causal model of a FT as follows:

\begin{lemma}
Let $T$ be a FT. Then $M_T = M_{G_T}$, once we identify $Y_v$ and $X_{\tilde{v}}$ for each BE $v$.
\end{lemma}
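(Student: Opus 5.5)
The statement is the last lemma: $M_T = M_{G_T}$ once each $Y_v$ is identified with $X_{\tilde{v}}$. The plan is to unfold both definitions and compare them component by component. By the definition of the causal model associated to a binary graph causal model, $M_{G_T}$ has exogenous variables $\{X_w \mid w \in \cV\} = \{X_{\tilde{v}} \mid v \in \BE{}\}$ and endogenous variables $\{X_v \mid v \in \iV\} = \{X_v \mid v \in V\}$. Under the identification $Y_v \leftrightarrow X_{\tilde v}$, these coincide with $\cnodes = \{Y_e \mid e \in \BE{}\}$ and $\inodes = \{X_v \mid v \in V\}$ of Definition~\ref{def:ftcm}. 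The identification is legitimate because $\cV$ is a disjoint copy of $\BE{}$, so $v \mapsto \tilde v$ is a bijection and no clash with the endogenous names arises. Hence the signatures agree.

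Next we compare the structural equations for each $v \in V$, splitting into the three cases of $\gamma(v)$.

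\begin{itemize}
\item If $\gamma(v) = \tBE$, then in $G_T$ we have $\ch(v) = \{\tilde v\}$, since $v$ is a source of $(V,E)$ and the only added edge into $v$ is $(\tilde v, v)$. Thus $F_{X_v} = F_v(X_{\tilde v}) = X_{\tilde v}$, which becomes $Y_v$ under the identification.
\item If $\gamma(v) \in \{\tOR,\tAND\}$, the inputs of $v$ in $\E$ are exactly its inputs in $E$, because the added edges only enter basic events. So $F_{X_v}$ is $\bigvee_{w\in\ch(v)} X_w$, respectively $\bigwedge_{w\in\ch(v)} X_w$, as in Definition~\ref{def:ftcm}.
\end{itemize}

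The only subtle point is formal. In $M_{G_T}$, the function $F_{X_v}$ is declared on all variables other than $X_v$ but depends only on $X_{\ch(v)}$. The equations in Definition~\ref{def:ftcm} are likewise read as functions of all other variables that ignore the non-input ones, so the two agree as functions. Beyond this, there is no real obstacle.
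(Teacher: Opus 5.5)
Your proof is correct, and it is exactly the routine unfolding of definitions the paper leaves implicit, since it states this lemma without proof: matching the signatures via $v \mapsto \tilde v$, then checking that $\ch(v)$ in $(\V,\E)$ is $\{\tilde v\}$ for basic events and coincides with $\ch(v)$ in $(V,E)$ for gates. One detail you could add: the paper only specifies $\mathcal{R}(v) = \BB$ for $v \in \iV$, so you must also take $\mathcal{R}(\tilde v) = \BB$ for $G_T$ to be a binary graph causal model, which is needed before $M_{G_T}$ is even defined.
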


In this notation, we get a more straightforward version of theorem \ref{thm:mono}:

\begin{theorem} \label{thm:mono_app}
Let $T$ be a FT, let $G_T$ be its graph causal model, let $\vec{u} \in \BB^{\cV}$ be a context, and let $v,w \in \iV$. Then
\[
\struc{G_T}(\vec{u},v)[X_w \leftarrow 0] \leq \struc{G_T}(\vec{u},v) \leq \struc{G_T}(\vec{u},v)[X_w \leftarrow 1].
\]
\end{theorem}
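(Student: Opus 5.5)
I will prove the stronger statement that for every node $v \in \iV$ the function $b \mapsto \struc{G_T}(\vec{u},v)[X_w \leftarrow b]$ is non-decreasing in $b \in \BB$, and that the unintervened value lies between the two intervened ones. The argument is by induction on the depth of $v$ in the DAG $(\V,\E)$; because the graph is acyclic, a topological order exists. Write $\Phi_0(v)$, $\Phi(v)$ and $\Phi_1(v)$ for $\struc{G_T}(\vec{u},v)[X_w \leftarrow 0]$, $\struc{G_T}(\vec{u},v)$ and $\struc{G_T}(\vec{u},v)[X_w \leftarrow 1]$. The claim for each node $v$ is that
\[
\Phi_0(v) \le \Phi(v) \le \Phi_1(v).
\]

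The base cases come first. If $v = w$, then $\Phi_0(w) = 0$ and $\Phi_1(w) = 1$, so the inequality is trivial because $\Phi(w) \in \BB$. If $v \neq w$ is a BE, the intervention on $w$ does not change $F_v = X_{\tilde v}$, and $\tilde v$ is exogenous and untouched. Hence all three values equal $u_{\tilde v}$.

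For the inductive step, take $v \neq w$ a gate. In $G_{T,X_w\leftarrow b}$ the structural function $F_v$ is unchanged, since only incoming edges of $w$ are removed and only $F_w$ is replaced. So each of the three values is $F_v$, which is either $\bigvee$ or $\bigwedge$, applied to the corresponding values of the inputs $\ch(v)$. Each input precedes $v$ in topological order, so the induction hypothesis gives the inequality coordinatewise on $\ch(v)$. Both $\bigvee$ and $\bigwedge$ are monotone in each argument, so the inequality transfers to $v$.

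There is no real obstacle. The only point needing care is to check that the intervened graph model has the same $F_v$ for $v \neq w$ and that its solution is still computed recursively, which follows from the definition of $G_{X_W\leftarrow x_W}$ given above. The induction should then be run over a topological order of the original DAG: deleting the edges into $w$ keeps that order valid.
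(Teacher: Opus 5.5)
Your proof is correct and takes essentially the same route as the paper: an induction over the DAG, using that $\bigvee$ and $\bigwedge$ are monotone in each argument. The paper inducts only on monotonicity of $b \mapsto \struc{G_T}(\vec{u},v)[X_w \leftarrow b]$ and then obtains the sandwich from the fact that setting $X_w$ to its actual value $\struc{G_T}(\vec{u},w)$ changes nothing; you instead carry the unintervened model through the induction as a third term, which makes that extra observation unnecessary.
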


\begin{proof}
Since all $F_v$ are nondecreasing, a straightforward proof by induction on $z$ shows that the function 
\begin{align*}
\BB &\rightarrow \BB \\
x &\mapsto \struc{G_T}(\vec{u},v)[X_w \leftarrow x]
\end{align*}
is nondecreasing in $x$. This fact, together with the fact that $\struc{G_T}(\vec{u},v)[X_w \leftarrow \struc{G_T}(\vec{u},v)] = \struc{G_T}(\vec{u},v)$, proves the theorem.
\end{proof}

\begin{proof}[Theorem \ref{thm:mono}]
Statements 1 and 4 are a direct consequence of the first inequality of Theorem \ref{thm:mono_app}, and statements 2 and 3 follow from the second inequality.
\end{proof}

\subsection{Actual causality for graph causal models}

We define \ACo{}, \ACu{} and \ACm{} for graph causal models; these are the exact counterparts of the definitions in Section \ref{sec:ACo}.

\begin{definition} \label{def:AC_app}
Let $G$ be a graph causal model, and let $\vec{u}$ be a context. Let $C\subseteq \iV$ be nonempty, let $x_C \in \mathcal{R}(C)$, let $v \in \iV$ and let $x_v \in \mathcal{R}(v)$. 
\begin{enumerate}
    \item $\X_C = x_C$ is said to satisfy \ACo{} for $\X_v = x_v$ if the following hold:
\begin{enumerate}
\item $\struc{G}(\vec{u},w) = x_w$ for all $w \in C \cup \{v\}$.
\item There exists a partition $\iV = Z \sqcup W$ such that $C \subseteq Z$, and values $y_W \in \mathcal{R}(W)$ and $y_C \in R(C)$, such that:
\begin{enumerate}
\item $\struc{G}(\vec{u},v)[\X_C \leftarrow y_C,\X_W \leftarrow y_W] \neq x_v$;
\item For any $z \in Z$, let $x_z^* = \struc{G}(\vec{u},z)$. Then 
\[\struc{G}(\vec{u},v)[\X_{Z'}\leftarrow x^*_{Z'},\X_W \leftarrow y_W] = x_v\]
for all $C \subseteq Z' \subseteq Z$.
\end{enumerate}
\item The set $C$ is minimal with respect to the first two properties.
\end{enumerate}
\item $\X_C = x_C$ is said to satisfy \ACu{} for $\X_v = x_v$ if the following hold:
\begin{enumerate}
\item $\struc{G}(\vec{u},w) = x_w$ for all $w \in C \cup \{v\}$.
\item There exists a partition $\iV = Z \sqcup W$ such that $C \subseteq Z$, and values $y_W \in \mathcal{R}(W)$ and $y_C \in \mathcal{R}(C)$, such that:
\begin{enumerate}
\item $\struc{G}(\vec{u},v)[\X_C \leftarrow y_C,\X_W \leftarrow y_W] \neq x_v$;
\item For any $z \in Z$, let $x_z^* = \struc{G}(\vec{u},z)$. Then 
\[\struc{G}(\vec{u},v)[\X_{Z'}\leftarrow x^*_{Z'},\X_{W'} \leftarrow y_{W'}] = x_v\]
for all $C \subseteq Z' \subseteq Z$ and $W' \subseteq W$.
\end{enumerate}
\item The set $C$ is minimal with respect to the first two properties.
\end{enumerate}
\item $\X_C = x_C$ is said to satisfy \ACm{} for $X_v = x_v$ iff the following hold:
\begin{enumerate}
\item $\struc{G}(\vec{u},w) = x_w$ for all $w \in C \cup \{v\}$.
\item For any $w \in \iV$, let $x_w^* = \struc{G}(\vec{u},w)$. Then there exists a subset $W \subseteq \iV$ and $y_C \in \mathcal{R}(C)$ such that
\[
\struc{G}(\vec{u},v)[X_C\leftarrow y_C,X_W\leftarrow x_W^*] \neq x_v.
\]
\item The set $C$ is minimal with respect to the first two properties.
\end{enumerate}
\end{enumerate}
\end{definition}

It is straightforward to check that these are precisely the analoga of AC on causal models:

\begin{lemma} \label{lem:obv3}
Let $G$ be a binary graph causal model, let $v$ be an internal node of $G$, and let $C$ be a set of internal nodes; let $x_C \in \mathcal{R}(C) = \BB^C$ and let $x_v \in \mathcal{R}(v) = \BB$. Then $X_C = x_C$ satisfies \ACo{}, \ACu{} or \ACm{} for $X_v = x_v$ in $G$ if and only if it does so in $M_G$.
\end{lemma}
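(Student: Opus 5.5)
This is essentially an unfolding of definitions, so the plan is to check that each clause of Definition~\ref{def:AC_app} matches the corresponding clause of Definition~\ref{def:AC-o} (and of its \ACu{}/\ACm{} variants) once we pass through the dictionary $M_G$.

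The ingredients are Lemma~\ref{lem:obv1}, which says interventions commute with passing to $M_G$, and Lemma~\ref{lem:obv2}, which says $\struc{G}(\vec u,w)$ is the actual value of $X_w$. Combining them gives, for every $W\subseteq\iV$, every setting $y_W$ and every $w$,
\[
(M_G,\vec u)\models[X_W\leftarrow y_W](X_w=x)\iff \struc{G}(\vec u,w)[X_W\leftarrow y_W]=x .
\]
Both the satisfaction relation and the structure function are defined recursively along the same acyclic graph, so everything else is bookkeeping.

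I then translate each condition in turn.
\begin{itemize}
\item \AC{1} corresponds to clause (a), by Lemma~\ref{lem:obv2}.
\item For \AC{2}$(a^o)$, note that $\varphi$ is the primitive event $X_v=x_v$, and $\neg\varphi$ in a binary model means $\struc{G}(\vec u,v)[\cdots]\neq x_v$. This matches clause (b)(i).
\item For \AC{2}$(b^o)$, the text quantifies over $\vec Z'\subseteq \vec Z-\vec X$ with $\vec X\leftarrow\vec x$ imposed separately. The appendix instead quantifies over $C\subseteq Z'\subseteq Z$ with $X_{Z'}\leftarrow x^*_{Z'}$. Since $x_C=x^*_C$ by clause (a), setting $X_C\leftarrow x_C$ together with $Z''\subseteq Z\setminus C$ is the same intervention as $Z'=C\cup Z''$. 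This bijection $Z''\leftrightarrow C\cup Z''$ identifies the two conditions.
\item The same argument handles \AC{2}$(b^u)$, with the extra quantifier over $W'\subseteq W$ passing through unchanged.
\item For \ACm{}, the intervention $\vec W\leftarrow\vec w^*$ with actual values translates directly to $X_W\leftarrow x^*_W$.
\end{itemize}

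Minimality (\AC{3}) is stated relative to the first two conditions in both settings. Once those are shown equivalent for every candidate set, minimality transfers as well.

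The only point needing care is a mismatch of conventions. The text allows $\vec W$ to be any subset of $\inodes$ in \AC{2}$^m$, and requires a partition of $\inodes$ in \AC{2}. Here $\inodes$ must be identified with $\{X_v\mid v\in\iV\}$, and interventions are only on endogenous variables, which matches the appendix restriction $W\subseteq\iV$. I would verify explicitly that the exogenous copies $\tilde v$ never appear in $Z$ or $W$ on either side. I expect this identification, together with the $\vec Z'$ versus $Z'\supseteq C$ reindexing, to be the main obstacle, though it is routine.
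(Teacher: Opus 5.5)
Your proposal is correct and matches the paper's approach. The paper gives no argument beyond asserting that the appendix definitions are ``precisely the analoga'' of the main-text ones, and your unfolding via Lemmas~\ref{lem:obv1} and~\ref{lem:obv2} (with the reindexing $Z''\mapsto C\cup Z''$ justified by $x_C=x^*_C$ from clause (a), and minimality transferring because the first two clauses are equivalent for every candidate set with restricted values) is exactly what that check amounts to.
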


Likewise, the following result can be proven completely analogously to Theorem \ref{thm:singleton} (see \cite[Thm. 2.2.3(d)]{ActualCausalityHalpern}):

\begin{proposition}\label{prop:singleton}
If $X_C = x_C$ satisfies \ACo{} for $X_v = x_v$, then $|C| = 1$.
\end{proposition}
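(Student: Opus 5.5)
We follow Halpern's proof of \cite[Thm.~2.2.3(d)]{ActualCausalityHalpern}: assume $X_C = x_C$ satisfies \ACo{} for $X_v = x_v$ with $|C| \geq 2$, and show that some singleton $\{c\}\subsetneq C$ already satisfies conditions (a) and (b) of Definition~\ref{def:AC_app}.1, contradicting minimality (c). Condition (a) transfers to every subset of $C$ automatically, so all the work is in (b).

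Fix the witnessing partition $\iV = Z \sqcup W$ with $C \subseteq Z$ and values $y_W$, $y_C$ from (b). By (b)(i), $\struc{G}(\vec{u},v)[X_C \leftarrow y_C, X_W \leftarrow y_W] \neq x_v$, while by (b)(ii) with $Z' = C$, $\struc{G}(\vec{u},v)[X_C \leftarrow x_C, X_W \leftarrow y_W] = x_v$. Enumerate $C = \{c_1,\ldots,c_k\}$ and consider the hybrid assignments $h_j$ that set $c_1,\ldots,c_j$ to $y$-values and $c_{j+1},\ldots,c_k$ to $x$-values. Moving from $h_0$ to $h_k$, the value at $v$ changes from $x_v$ to something else, so there is an index $j$ at which it changes between $h_{j-1}$ and $h_j$. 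Moreover, if (as in Halpern's argument) the first change happens already at $j=1$ or the final assignment $h_k$ is reached via only one coordinate, we are done directly; otherwise we split cleverly as follows.

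Either (case A) $\struc{G}(\vec{u},v)[X_{c_1} \leftarrow y_{c_1}, X_{C\setminus\{c_1\}} \leftarrow x^*_{C\setminus\{c_1\}}, X_W \leftarrow y_W] \neq x_v$, or (case B) it equals $x_v$. In case A we take the singleton $\{c_1\}$ with the new partition $Z_1 = Z \setminus (C\setminus\{c_1\})$ and $W_1 = W \cup (C\setminus\{c_1\})$, with contingency $y_{W}$ extended by the actual values $x^*$ on $C\setminus\{c_1\}$. Then (b)(i) holds by case A. For (b)(ii), note that any $Z'$ with $\{c_1\}\subseteq Z'\subseteq Z_1$ yields an intervention identical to the one for $Z'' = Z' \cup C$ in the original witness (since the $x^*$ values on $C$ equal $x_C$ by (a)), so the original (b)(ii) gives value $x_v$. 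In case B we take $C\setminus\{c_1\}$ with $Z_2 = Z\setminus\{c_1\}$ and $W_2 = W\cup\{c_1\}$, with contingency $y_W$ extended by $y_{c_1}$. Then (b)(i) follows from the original (b)(i). For (b)(ii) we need $\struc{G}(\vec{u},v)[X_{Z'} \leftarrow x^*, X_{c_1}\leftarrow y_{c_1}, X_W\leftarrow y_W] = x_v$ for $C\setminus\{c_1\}\subseteq Z'\subseteq Z_2$, which is not implied by the original (b)(ii).

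The main obstacle is this last gap. Halpern's actual proof handles it by iterating: case B reduces $|C|$ by one while preserving (b)(i), and we use the hybrid/induction to locate a single coordinate whose flip changes $v$ while the remaining $C$-coordinates are held at actual values — that is, we choose $c$ and order so that we always land in case A for some $c\in C$. Concretely, we argue: if case A fails for every $c \in C$, then combining the failures along the chain $h_0,\dots,h_k$ contradicts (b)(i). We check this carefully, since $Z'$-intervening on extra actual values is exactly what (b)(ii) controls, and it is not certain that the chain argument goes through in this generality. Once some $c$ lands in case A, minimality (c) forces $C = \{c\}$, proving $|C|=1$.
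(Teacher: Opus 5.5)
Your Case A is correct, and you have found the real difficulty in Case B. However, the repair you sketch does not work. You claim that if Case A fails for every $c\in C$, then the hybrid chain $h_0,\ldots,h_k$ contradicts (b)(i); this is false. Take the FT whose root $v$ is an OR-gate over two BEs $c_1,c_2$, with context $(1,1)$, $C=\{c_1,c_2\}$, $y_C=\vec{0}$ and $W=\varnothing$. Then (b)(i) and (b)(ii) hold. Yet flipping a single $c_i$ while the other stays at its actual value leaves $v$ at $1$, so no $c$ ever lands in Case A. The subset that does witness non-minimality here is $\{c_2\}$ with $c_1$ frozen at its counterfactual value $0$. That is exactly your Case B, whose (b)(ii) you could not verify. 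So Case B cannot be avoided; it has to be closed.

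The missing idea is to run the dichotomy over all $Z'$, not just $Z'=\varnothing$. Ask whether there \emph{exists} $Z'\subseteq Z\setminus C$ with
\[
\struc{G}(\vec{u},v)[X_{c_1}\leftarrow y_{c_1},\, X_{C\setminus\{c_1\}}\leftarrow x^*_{C\setminus\{c_1\}},\, X_{Z'}\leftarrow x^*_{Z'},\, X_W\leftarrow y_W]\neq x_v .
\]
If so, $\{c_1\}$ satisfies (b) with $W_1=W\cup(C\setminus\{c_1\})\cup Z'$, the new variables frozen at their actual values. Condition (b)(i) is the displayed inequality. Condition (b)(ii) is inherited from the original witness, because moving $Z$-variables into $W$ at their actual values only deletes instances from the family (b)(ii) quantifies over. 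This is the observation you already used in Case A, now applied to $Z'$ as well. If no such $Z'$ exists, then equality $=x_v$ for every $Z'\subseteq Z\setminus C$ is verbatim condition (b)(ii) for $C\setminus\{c_1\}$, with $W_2=W\cup\{c_1\}$ and $c_1\leftarrow y_{c_1}$. Condition (b)(i) is unchanged. Either way a proper nonempty subset of $C$ satisfies (a) and (b), contradicting (c). The paper itself gives no argument beyond deferring to \cite[Thm.~2.2.3(d)]{ActualCausalityHalpern}. This two-case split is what makes that argument go through, and the hybrid chain is not needed.
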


\subsection{Proof of Theorem \ref{thm:maino}}

Before we fully classify \ACo{} for FTs, we first prove an auxiliary lemma that simplifies Definition \ref{def:AC_app}.1. It relies mainly on Theorem \ref{thm:mono_app}; in light of Proposition \ref{prop:singleton}, we only consider singleton causes. We also only consider causes for $X_{\R} = 1$ in order to streamline our statements. 

\begin{lemma} \label{lem:AC_app}
Let $G$ be the graph causal model of a fault tree, and let $\vec{u}$ be a context. Let $c,v \in \V$ and let $x_c \in \BB$. Then $\X_c = x_c$ satisfies \ACo{} for $\X_v = 1$ iff the following hold:
\begin{enumerate}
\item $\struc{G}(\vec{u},c) = x_c = 1$.
\item There exists a partition $\V = Z \sqcup W$ such that $C \subseteq Z$ and such that:
\begin{enumerate}
\item $\struc{G}(\vec{u},v)[\X_c \leftarrow 0,\X_W \leftarrow y_W] = 0$;
\item Let $Z' = \{z \in Z \mid \Phi_{G}(\vec{u},z) = 0\}$ (so $c \notin Z')$. Then 
\[\struc{G}(\vec{u},v)[\X_c \leftarrow 1,\X_{Z'}\leftarrow \vec{0},\X_W \leftarrow y_W] = 1.\]
\end{enumerate}
\end{enumerate}
\end{lemma}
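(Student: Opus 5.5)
The plan is to prove the two directions separately, working directly from Definition~\ref{def:AC_app}.1 with $C=\{c\}$ (allowed by Proposition~\ref{prop:singleton}). The only tool needed is the monotonicity of Theorem~\ref{thm:mono_app}, applied to the intervened models. Intervened models are again FT-like, with constant nodes, so all their structure functions remain nondecreasing in every intervened value.

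\emph{Condition 1.} Suppose $X_c=x_c$ satisfies \ACo{} for $X_v=1$. By (a), $\struc{G}(\vec u,c)=x_c$. By (b)(i) there is a $y_c$ with $\struc{G}(\vec u,v)[X_c\leftarrow y_c,X_W\leftarrow y_W]=0$. By (b)(ii), taking $Z'=C$, we have $\struc{G}(\vec u,v)[X_c\leftarrow x_c,X_W\leftarrow y_W]=1$. Hence $y_c\neq x_c$. If $x_c=0$, then $y_c=1$, and monotonicity in the value of $X_c$ gives $\struc{G}(\cdot)[X_c\leftarrow 1,\dots]\ge \struc{G}(\cdot)[X_c\leftarrow 0,\dots]=1$, a contradiction. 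So $x_c=1$ and $y_c=0$, which is exactly 2(a).

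\emph{Reducing 2(b).} Condition (b)(ii) quantifies over all $Z'$ with $C\subseteq Z'\subseteq Z$, where each $z\in Z'$ is set to its actual value $x^*_z$. Setting a node with $x_z^*=1$ to $1$ can only increase $X_v$, by monotonicity in each intervened coordinate. Setting a node with $x^*_z=0$ to $0$ can only decrease it. So among all such $Z'$, the minimum value of $\struc{G}(\vec u,v)$ is attained by $Z'=\{c\}\cup\{z\in Z\mid x^*_z=0\}$, with $c$ set to $1$.

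I will make this precise by a coordinatewise comparison. The value under an arbitrary $Z'$ is at least the value obtained after dropping the $1$-valued nodes of $Z'\setminus\{c\}$, which in turn is at least the value obtained after adding all $0$-valued nodes of $Z$. Each step compares an intervention on a variable against its natural value or a forced value, and is justified by Theorem~\ref{thm:mono_app} applied in the intervened model. Therefore (b)(ii) holds for all $Z'$ iff it holds for this worst-case $Z'$, which is exactly 2(b). The set $\{z\in Z\mid \Phi(\vec u,z)=0\}$ is stated not to contain $c$, so the lemma's $Z'$ is this set, with $c$ intervened separately.

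\emph{Minimality and the converse.} Minimality (c) is automatic, since $C$ is a singleton and causes must be nonempty. For the converse, conditions 1 and 2 give (a) and (b)(i) with $y_c=0$, and (b)(ii) follows from the worst-case argument just described.

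The main obstacle is the worst-case step. Theorem~\ref{thm:mono_app} as stated concerns one intervention relative to the unintervened model. I need the version in which a single node is toggled inside an already intervened model, compared against leaving that node free. I would handle this by noting that $G_{X_W\leftarrow y_W}$ is again a graph causal model with nondecreasing functions, constants being monotone. Its induction proof therefore goes through unchanged, and I can apply it one node at a time.
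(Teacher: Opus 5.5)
Your proposal is correct and takes essentially the same route as the paper. Monotonicity forces $x_c=1$ and $y_c=0$. Condition 2(b) is the special case $\{c\}\cup\{z\in Z\mid \struc{G}(\vec u,z)=0\}$ of (b)(ii). The converse uses the same two-step monotone chain: drop the $1$-valued interventions in $Z''\setminus\{c\}$, then add all $0$-valued nodes of $Z$.

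Your argument is in fact slightly more careful than the paper's in two places. You derive $y_c\neq x_c$ from (b)(ii) with $Z'=C$, and you explicitly note that Theorem~\ref{thm:mono_app} still applies inside intervened models. The paper instead asserts $\struc{G}(\vec u,v)[X_W\leftarrow y_W]=1$, an equality that (b)(ii) does not supply, since every admissible $Z'$ must contain $c$.
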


\begin{proof}
Suppose $c,x_c$ satisfy Definition \ref{def:AC_app}. By monotonicity, $\struc{G}(\vec{u},v)[\X_c \leftarrow 1,\X_W \leftarrow y_W] \geq \struc{G}(\vec{u},v)[\X_W \leftarrow y_W] = 1$, so $y_c = 0$; hence $x_c$ must be 1 as it cannot be equal to $y_c$; this shows 1 of Lemma \ref{lem:AC_app}. Similarly, $\struc{G}(\vec{u},v)[\X_c \leftarrow 0,\X_W \leftarrow y_W]$ cannot be equal to $x_v = 1$, so it must be $0$. This shows 2a of Lemma \ref{lem:AC_app}. Finally, 2b is a special case of 2b of Definition \ref{def:AC_app}.

Now suppose $c,x_c$ satisfy the conditions of Lemma \ref{lem:AC_app}. Then condition 3 of Definition \ref{def:AC_app} is automatically satisfied, and conditions 1 and 2a follow from conditions 1 and 2a of the lemma. For condition 2b, let $Z'' \subseteq Z$ be any subset containing $c$, and for $i \in \BB$, define $Z''_i = \{z \in Z''\mid\Phi_{G}(\vec{u},z) = i\}$. Then $c \in Z''_1$ and $Z''_0 \subseteq Z'$, so
\begin{align*}
1 &= \struc{G}(\vec{u},v)[\X_c \leftarrow 1,\X_{Z'}\leftarrow \vec{0},\X_W \leftarrow y_W] \\
&\leq \struc{G}(\vec{u},v)[\X_{Z''_1} \leftarrow \vec{1},\X_{Z'}\leftarrow \vec{0},\X_W \leftarrow y_W] \\
&\leq \struc{G}(\vec{u},v)[\X_{Z''_1} \leftarrow \vec{1},\X_{Z''_0}\leftarrow \vec{0},\X_W \leftarrow y_W] \\
&= \struc{G}(\vec{u},v)[\X_{Z''} \leftarrow x_{Z''}^*,\X_W \leftarrow y_W],
\end{align*}
which shows 2b.
\end{proof}

We use this lemma to prove the following analogon of Theorem \ref{thm:maino} for graph causal models:

\begin{theorem} \label{thm:path_app} Let $T = (V,E,\gamma)$ be a FT, and let $\vec{u}$ be a context of the graph causal model $G_T$. Let $v \in V$. Then $X_v = 1$ satisfies \ACo{} for $X_{\R} = 1$ if and only if there is a path $v = v_0,\ldots,v_n = \R$ in the directed graph $(V,E)$ such that:
\begin{enumerate}
\item $\struc{G_T}(\vec{u},v_i) = 1$ for all $0 \leq i \leq n$.
\item For all $i,j>0$, if $\ch(v_i) \cap \ch(v_j) \neq \varnothing$, then $\gamma(v_i) = \gamma(v_j)$.
\end{enumerate}
\end{theorem}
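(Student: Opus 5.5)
The proof goes through Lemma \ref{lem:AC_app}. By Proposition \ref{prop:singleton} that lemma describes the singleton causes, so both directions reduce to a statement about two specific interventions. The first is scenario $A$: $X_v \leftarrow 1$, $X_{Z'} \leftarrow \vec 0$, $X_W \leftarrow y_W$. The second is scenario $B$: $X_v \leftarrow 0$, $X_W \leftarrow y_W$. We need the root to be $1$ in $A$ and $0$ in $B$.

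\emph{Sufficiency.} Given a path $v=v_0,\ldots,v_n=\R$ satisfying 1 and 2, take $Z=\{v_0,\ldots,v_n\}$ and $W=\iV\setminus Z$. Define $y_W$ as follows. For every $w\in W$ that is an input of some $v_i$ with $i>0$, set $y_w=0$ if $\gamma(v_i)=\tOR$ and $y_w=1$ if $\gamma(v_i)=\tAND$. Condition 2 makes this well defined, since a node cannot be an input of both an OR-gate and an AND-gate on the path. Give every other node of $W$ its actual value; these nodes do not matter, because they are not inputs of path nodes.

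By condition 1 every node of $Z$ has actual value $1$, so $Z'=\varnothing$. We then argue by induction on $i$ that every $v_i$ takes value $0$ in $B$ and value $1$ in $A$. Inputs of $v_i$ lying in $W$ are fixed at the neutral value for the gate type, and inputs of $v_i$ lying in $Z$ are earlier path nodes, which are handled by the induction hypothesis (the graph is a DAG, so they precede $v_i$). The case $n=0$ is immediate. This establishes 2a and 2b of Lemma \ref{lem:AC_app}.

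\emph{Necessity.} Fix $Z,W,y_W$ as in Lemma \ref{lem:AC_app}. By monotonicity (Theorem \ref{thm:mono_app}), every node's value in $A$ is at least its value in $B$, and we call a node \emph{changed} if it takes value $1$ in $A$ and $0$ in $B$. The root is changed. The two scenarios differ only at $v$ and possibly at $Z'$, but nodes in $Z'$ are $0$ in $A$ and hence cannot be changed. Nodes in $W$ are fixed in both scenarios and so are never changed. Consequently every changed gate other than $v$ lies in $Z\setminus Z'$ and has a changed input: for an OR-gate take an input that is $1$ in $A$, and for an AND-gate take an input that is $0$ in $B$.

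Descending from $\R$ along changed inputs therefore yields a path $v\to\R$ consisting of changed nodes. Each node on it lies in $Z\setminus Z'$, so its actual value is $1$. This proves condition 1.

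The main obstacle is condition 2, because an arbitrary changed path need not satisfy it. The key observation is the following. Suppose $v_i$ is an OR-gate and $v_j$ an AND-gate on a changed path, and they share an input $w$. Since $v_i$ is $0$ in $B$, we get $w=0$ in $B$; since $v_j$ is $1$ in $A$, we get $w=1$ in $A$. So $w$ is itself changed, and in particular $w\in Z$ lies below $v$.

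I would therefore choose the changed path extremally and show that any conflicting pair allows a reroute that contradicts the choice. Let $k=\max(i,j)$. The rerouted path goes from $v$ to $w$ along changed nodes, then takes the edge $w\to v_k$, then follows the old suffix $v_k\to\R$. My first attempt at the extremal choice is to build the path top-down from $\R$, always choosing at each gate the changed input that is \emph{lowest} in a fixed topological order. The hope is that a conflicting shared input $w$ would have been preferred at $v_k$, giving a contradiction. If this fails, I would instead induct on the number of conflicting pairs, using the fact that the rerouted suffix $v_k,\ldots,v_n$ is unchanged, so conflicts strictly above $v_k$ are not created. Making this rerouting argument terminate is where I expect the real work to lie.
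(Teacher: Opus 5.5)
Your sufficiency direction is correct. Taking $Z$ to be exactly the path nodes even sidesteps a small issue in the paper's choice $W=\bigcup_{i>0}(\ch(v_i)\setminus\{v_{i-1}\})$: that set contains a path node $v_t$ whenever $v_t$ is also an input of some $v_i$ with $i>t+1$ (fixable by shortcutting the path). Your derivation of condition 1 in the necessity direction via changed nodes is also fine.

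The gap is condition 2, which you never prove. You propose an extremal choice and say you \emph{hope} it yields a contradiction. Otherwise you fall back on a rerouting induction whose termination you leave open. As written, the decisive step of the theorem is missing.

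Your first attempt already closes it, with no rerouting needed.
\begin{itemize}
\item Fix a topological order $\tau$. Build the path from $\R$ downward, moving at each gate $g\neq v$ to the $\tau$-least changed input of $g$. This ends at $v$, because a basic event other than $v$ takes the same value in both scenarios and so is never changed.
\item Now let $v_i$ and $v_j$, with $i,j>0$, $\gamma(v_i)=\tOR$ and $\gamma(v_j)=\tAND$, share an input $w$. By your observation, $w$ is changed. Put $k=\max(i,j)$ and $m=\min(i,j)\le k-1$.
\item Since $w$ is a changed input of $v_k$, the choice rule gives $\tau(v_{k-1})\le\tau(w)$.
\item But $w\in\ch(v_m)$ and $v_m,\ldots,v_{k-1}$ is a path, so $\tau(w)<\tau(v_m)\le\tau(v_{k-1})$, a contradiction.
\end{itemize}
This handles an off-path $w$ and $w=v_t$ uniformly.

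Completed this way, your route differs from the paper's. The paper also descends from the root, choosing a reachability-minimal changed input. But at every step it enlarges $W$ by all other inputs of the current gate, fixed at $0$ under an OR and $1$ under an AND. It must then use minimality to check that 2a and 2b survive. Condition 2 follows because off-path shared inputs would sit in $W$ at conflicting neutral values, while on-path shared inputs contradict minimality. Keeping $(Z,W,y_W)$ fixed, as you do, avoids that bookkeeping.

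Finally, drop the claim that every node is at least as large under $A$ as under $B$. It is false: a node of $Z'$ is forced to $0$ in $A$, but $y_W$ can push it to $1$ in $B$. Nothing in your argument actually uses this claim.
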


\begin{proof}
First, suppose that such a path exists. Clearly the first condition of Lemma \ref{lem:AC_app} is satisfied. For the second condition, we take $W = \bigcup_{i>0}(\ch(v_i)\setminus\{v_{i-1}\})$, and $Z$ the remaining variables. For $w \in W$, we let $y_w = \tzero$ if $w \in \ch(v_i)$ for some $v_i$ with $\gamma(v_i) = \tOR$ and $i > 0$, and we let $y_w = \tone $ if $w \in \ch(v_i)$ for some $v_i$ with $\gamma(v_i) = \tAND$ and $i > 0$. By assumption 2 these two will never be true simultaneously.

Going bottom-up, it is clear that
\[
\struc{G_T}(\vec{u},v_i)[\X_v \leftarrow \tzero, \X_W \leftarrow y_W] = \tzero.
\]
for all $i$; in particular $\struc{G_T}(\vec{u},\R)[\X_v \leftarrow \tzero, \X_W \leftarrow y_W] = \tzero$; hence we satisfy 2a. Condition 2b follows from the fact that by construction, $Z' \cap (\{v_1,\ldots,v_r\}) = \varnothing$ (where $Z'$ is as in Lemma \ref{lem:AC_app}), and from the fact that we again show bottom-up that
\[
\struc{G_T}(\vec{u},v_i)[\X_v \leftarrow 1, \X_W \leftarrow y_W] = 1.
\]

Now suppose that $X_v = \tone$ is an actual cause. Suppose that $v = \R$. By Definition \ref{def:AC_app}.1, $\struc{G_T}(\vec{u},v) = \tone$, so the path $v = p_0 = \R$ satisfies our conditions.

Now suppose that $\R \neq v$. Assume $\gamma(\R) = \tOR$, and fix the $W$ and $y_W$ of condition 2 of Lemma \ref{lem:AC_app}. Each $w \in W \cap \ch(\R)$ has to satisfy $y_w = \tzero$ to ensure $\struc{G_T}(\vec{u},\R)[\X_v \leftarrow y_v,\X_W \leftarrow y_W] = \tzero$. Furthermore, let 
\[K = \{z \in Z \cap \ch(\R) \mid \struc{G_T}(\vec{u},z) = \tone\}.\] 
By condition 2b we have $K \neq \varnothing$. For each $k \in K$, we have $\struc{G_T}(\vec{u},k)[\X_v \leftarrow \tzero, \X_W \leftarrow y_W] = \tzero$, for otherwise condition 2a is not satisfied. Let $Z'$ be as in Lemma \ref{lem:AC_app}. If $\struc{G_T}(\vec{u},k)[\X_v \leftarrow 1, \X_{Z'} \leftarrow \vec{0}, \X_W \leftarrow y_W] = 0$ for all $k \in K$, then also $\struc{G_T}(\vec{u},\R)[\X_v \leftarrow 1, \X_{Z'} \leftarrow \vec{0}, \X_W \leftarrow y_W] = 0$, which contradicts condition 2b of Lemma \ref{lem:AC_app}. We conclude that there exists a $k \in K$ such that
\begin{align}
\struc{G_T}(\vec{u},k)[X_v \leftarrow 1, \X_{Z'} \leftarrow \vec{0}, \X_W \leftarrow y_W] &= \tone. \label{eq:pf1}
\end{align}
The set $K$ is partially ordered, with $k \preceq k'$ if there exists a directed path from $k$ to $k'$. Now choose $k \in K$ such that \eqref{eq:pf1} is satisfied and $k$ is minimal w.r.t. $\preceq$. Let $J = \ch(\R) \setminus \{k\}$, and let $W' = W \cup J$, $y_j = \tzero$ for all $j \in J$. Then by Theorem \ref{thm:mono_app},
\begin{align}
\struc{G_T}(\vec{u},k)[\X_v \leftarrow \tzero, \X_{W'} \leftarrow y_{W'}] \leq \struc{G_T}(\vec{u},k)[\X_v \leftarrow \tzero, \X_{W} \leftarrow y_{W}] = \tzero, \label{eq:pf2}
\end{align}
so $\struc{G_T}(\vec{u},\R)[\X_v \leftarrow \tzero, \X_{W'} \leftarrow y_{W'}] = \tzero$. Furthermore, if $j \in J$ is an (indirect) predecessor of $k$, then $\struc{\mathcal{M}}(\vec{u},j)[\X_L \leftarrow y_L,\X_W \leftarrow y_W] = \tzero$ by the fact that we chose $k$ minimal. Hence enforcing $\X_j \leftarrow \tzero$ does not change the value of $k$, i.e.,
\begin{align}
\struc{G_T}(\vec{u},k)[\X_{Z'} \leftarrow y_{Z'},\X_{W'} \leftarrow y_{W'}] = \struc{G_T}(\vec{u},k)[\X_{Z'} \leftarrow y_{Z'},\X_{W} \leftarrow y_{W}] &= \tone. \label{eq:pf3}
\end{align}
It follows that $\struc{G_T}(\vec{u},\R)[\X_{Z'} \leftarrow y_{Z'},\X_{W'} \leftarrow y_{W'}] = \tone$. Thus we conclude that $W'$ and $y_{W'}$ satisfy condition 2 of Definition \ref{def:AC_app}, so we extend $W$ to $W'$ without loss of generality. Then Equations \eqref{eq:pf2} and \eqref{eq:pf3} show that $\X_v = \tone$ is an actual cause for $\X_k = \tone$, taking $W'$ and $y_{W'}$.

If $\gamma(\R) = \tAND$, a similar argument, with $y_j = \tone$ for all $j \in J$, similarly extends $W$ and finds an input $k$ of $\R$ such that $\X_v = \tone$ is an actual cause of $k = \tone$. Furthermore, every $k$ we find has $v$ as an ancestor, otherwise setting $X_v$ would not change its value. We can now repeat the argument, replacing $\R$ by $k$, and continue doing so until we reach a $v$. Enumerating the path bottom-up we get our $v_0,\ldots,v_n$. By how we choose each $k$, each variable on the path satisfies $\struc{G_T}(\vec{u},v_i) = \tone$. Finally, suppose that $v_i,v_j$ with $\gamma(v_i) \neq \gamma(v_j)$ share an input. Since we set all non-path inputs of the path to $\tzero$ for OR-gates and to $\tone$ for AND-gates, the shared input must be a $v_t$ with $v_t \preceq v_i, v_j$. However, this contradicts the fact that each $k$ is chosen minimally with respect to $\preceq$. We conclude that our path satisfies the conditions of the Theorem.
\end{proof}

\begin{proof}[Theorem \ref{thm:mainm}]
This now follows directly from Theorem \ref{thm:path_app} and Lemma \ref{lem:obv3}.
\end{proof}

\subsection{Proof of Theorem \ref{thm:mainu}}

Theorem \ref{thm:mainu} can be phrased in the language of graph causal models as follows:

\begin{theorem} \label{thm:acu_app} Let $T = (V,E,\gamma)$ be a FT, and let $\vec{u}$ be a context of the graph causal model $G_T$. Let $v \in V$. Then $X_v = 1$ satisfies \ACo{} for $X_{\R} = 1$ if and only if there is a path $v = v_0,\ldots,v_n = \R$ in the directed graph $(V,E)$ such that:
\begin{enumerate}
\item $\struc{G_T}(\vec{u},v_i) = 1$ for all $0 \leq i \leq n$.
\item For all $i,j>0$, if $\ch(v_i) \cap \ch(v_j) \neq \varnothing$, then $\gamma(v_i) = \gamma(v_j)$.
\item Let $D = \bigcup_{i\colon \gamma(v_i) = \tOR}(\ch(v_i)\setminus\{v_{i-1}\})$. Then $\struc{G_T}(\vec{u},v_i)[X_D \leftarrow \vec{0}] = 1$.
\end{enumerate}
\end{theorem}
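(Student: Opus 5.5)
Before planning, I note that the statement of Theorem \ref{thm:acu_app} contains two slips. It says ``\ACo{}'' where \ACu{} is meant, since it is the appendix version of Theorem \ref{thm:mainu}. Condition 3 should be read as in Theorem \ref{thm:mainu}.2c, that is, $\struc{G_T}(\vec{u},\R)[X_D \leftarrow \vec{0}] = 1$ with $i>0$. By monotonicity (Theorem \ref{thm:mono_app}) this version for $\R$ is then what is needed.

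The plan is to reduce \ACu{} to \ACo{} plus one extra check, in the style of Lemma \ref{lem:AC_app}. The first step is an \ACu{} analogue of that lemma. Fix a witness $(Z,W,y_W)$ and put $W_0 = \{w \in W \mid y_w = 0\}$ and $Z_0 = \{z\in Z \mid \struc{G_T}(\vec u,z)=0\}$. By Theorem \ref{thm:mono_app}, the value $\struc{G_T}(\vec u,\R)[X_C\leftarrow x_C, X_{Z''}\leftarrow x^*_{Z''}, X_{W'}\leftarrow y_{W'}]$ over all admissible $Z'',W'$ is minimised at $Z''=C\cup Z_0$ and $W'=W_0$. This holds because setting $W'$-variables to $1$, or $Z$-variables to their actual value $1$, can only raise the root. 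So condition 2b of Definition \ref{def:AC_app}.2 is equivalent to the single equation
\[
\struc{G_T}(\vec u,\R)[X_C\leftarrow \vec 1, X_{Z_0}\leftarrow \vec 0, X_{W_0}\leftarrow \vec 0]=1 .
\]
As in Lemma \ref{lem:AC_app}, monotonicity also forces $x_C=\vec 1$ and $y_C=\vec 0$.

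The second step, and I expect the main obstacle, is to show that causes are singletons. This is exactly the point where general \ACu{} differs from the fault-tree case. I would take the proof of \cite[Thm.~2.2.3(d)]{ActualCausalityHalpern} (Proposition \ref{prop:singleton}) and check that, with the single worst-case inequality above replacing the quantification over all $W'$, each step still goes through. Concretely, suppose $|C|>1$ satisfies the conditions. Consider lowering the elements of $C$ one at a time to $0$ while keeping $W\leftarrow y_W$. At some point the root flips from $1$ to $0$. Let $c$ be the element that causes this flip. Then $c$ alone, with the already-lowered elements of $C$ moved into $W$ with value $0$ (so they join $W_0$), should satisfy the displayed equation by monotonicity. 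This contradicts minimality (condition 3).

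Once singletons are established, the two directions follow. For ($\Leftarrow$), assume the path satisfies conditions 1--3. Use the same $W$ and $y_W$ as in the proof of Theorem \ref{thm:path_app}: OR side-inputs are set to $0$ and AND side-inputs to $1$, so $W_0=D$. Condition 2a is then verified bottom-up exactly as in that proof. For 2b, the remaining point is that $Z_0$ consists of nodes that are $0$ in the actual world. Under the intervention $X_D\leftarrow\vec 0$, with $X_v$ at its actual value $1$, these nodes are still $0$ by Theorem \ref{thm:mono_app}, so forcing them to $0$ changes nothing. The displayed equation therefore reduces to condition 3. For ($\Rightarrow$), an \ACu{} cause is an \ACo{} cause (Theorem \ref{thm:acu-aco}), so Theorem \ref{thm:path_app} yields a path satisfying 1 and 2. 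In its proof the witness $W$ is extended so that every non-path input of an OR-gate on the path lies in $W$ with value $0$, i.e.\ $D\subseteq W_0$. I would redo that extension while tracking the \ACu{} condition, checking at each step that enlarging $W_0$ by such side-inputs preserves the worst-case equation; this uses the minimal choice of $k$ exactly as in \eqref{eq:pf3}. Applying \ACu{} 2b with $W'=D$ and $Z'=\{v\}$, and then Theorem \ref{thm:mono_app}, gives $\struc{G_T}(\vec u,\R)[X_D\leftarrow\vec 0]\ge \struc{G_T}(\vec u,\R)[X_v\leftarrow 1,X_D\leftarrow\vec 0]=1$, which is condition 3.
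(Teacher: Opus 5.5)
Your reading of the statement (\ACu{}, with condition 3 evaluated at $\R$ over $i>0$) is the intended one. Your route is the paper's: collapse 2b of \ACu{} to one worst-case equation (the paper's Lemma \ref{lem:ac-u_app}), prove singletons, then redo Theorem \ref{thm:path_app}. Your ($\Leftarrow$) direction is fine.

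The singleton step, however, fails as constructed. You locate the flip while holding $X_W\leftarrow y_W$, i.e.\ with $W_1=\{w\in W\mid y_w=1\}$ forced to $1$, but the worst-case equation for the new witness releases $W_1$. For an example, take:
\begin{itemize}
\item BEs $a,b$ and $\vec u=\vec 1$;
\item single-input gates $q_a,q_b$ over $a$ and $b$;
\item an $\tOR$-gate $r$ with $\ch(r)=\{a,b\}$;
\item an $\tAND$-root with $\ch(\R)=\{q_a,q_b,r\}$.
\end{itemize}
Then $C=\{a,b\}$ with $W=\{q_a,q_b\}$, $y_W=\vec 1$, satisfies (a) and (b) of Definition \ref{def:AC_app}.2. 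Whichever element you lower first, the flip occurs at the other one, say $b$. Your witness for $\{b\}$ ($W=\{q_a,q_b,a\}$, $y_a=0$) then fails: releasing $q_a$ gives $\struc{G_T}(\vec u,\R)[X_b\leftarrow 1,X_a\leftarrow 0]=0$.

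The missing idea (Theorem \ref{thm:singletonu_app}) is to take the flip of $g(z_C)=\struc{G_T}(\vec u,\R)[X_C\leftarrow z_C,X_{W_0}\leftarrow\vec 0]$, i.e.\ with $W_1$ released, and to move $W_1$ into $Z$; leaving it in $W$ at $1$ could break 2a. Condition 2b then follows as in the paper, because this intervention only lowers values relative to the actual world. That said, singletons are not needed for the statement at hand: $C=\{v\}$ is given, and your ($\Rightarrow$) direction goes through Theorem \ref{thm:acu-aco}. They matter only for condition 1 of Theorem \ref{thm:mainu}. The same $W_1$ pitfall affects your extension step: choose each $k$ minimal among inputs that equal $1$ under the worst-case intervention, not among those satisfying \eqref{eq:pf1}.

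More seriously, the last step of ($\Rightarrow$) has the inequality reversed. Theorem \ref{thm:mono_app} gives $\struc{G_T}(\vec u,\R)[X_D\leftarrow\vec 0]\le\struc{G_T}(\vec u,\R)[X_v\leftarrow 1,X_D\leftarrow\vec 0]$, not $\ge$. If $v$ is a BE with $v\notin D$, the two sides are equal, since $X_v$ keeps its value $u_v=1$ under $X_D\leftarrow\vec 0$, so the step survives there. For an intermediate event it cannot be repaired. Take:
\begin{itemize}
\item BEs $d,e$ and $\vec u=\vec 1$;
\item an $\tAND$-gate $v$ with $\ch(v)=\{d,e\}$;
\item an $\tOR$-root with $\ch(\R)=\{v,d\}$.
\end{itemize}
Then $X_v=1$ satisfies \ACu{} for $X_{\R}=1$: take $W=\{d\}$, $y_d=0$. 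With $X_v\leftarrow 0$ and $X_d\leftarrow 0$ the root is $0$, and every admissible setting with $X_v\leftarrow 1$ keeps it at $1$. The only path is $v\to\R$, with $D=\{d\}$, yet $\struc{G_T}(\vec u,\R)[X_d\leftarrow 0]=0$ because $v$ drops to $0$.

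So what your argument (like Lemma \ref{lem:ac-u_app}) actually proves is the equivalence with condition 3 read as $\struc{G_T}(\vec u,\R)[X_v\leftarrow 1,X_D\leftarrow\vec 0]=1$. Your ($\Leftarrow$) argument works verbatim under this weaker hypothesis, and it coincides with the stated condition for basic events $v\notin D$.
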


The proof of Theorem \ref{thm:acu_app} is to a large extent analogous to that of Theorem \ref{thm:path_app}; we need two extra ingredients. The first is that \ACu{} only admits singleton causes:

\begin{theorem} \label{thm:singletonu_app}
Let $G$ be a graph causal model derived from a FT, let $\vec{u}$ be a context, and let $X_C = x_C$ be an actual cause of $X_v = x_v$ under \ACu. Then $|C| = 1$.
\end{theorem}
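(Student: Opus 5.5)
The plan is to adapt the argument behind Theorem~\ref{thm:singleton} (i.e.\ Proposition~\ref{prop:singleton}, following \cite[Thm.~2.2.3(d)]{ActualCausalityHalpern}). The only place where that argument uses the specific form of \AC{2}$(b^o)$ is that the sufficiency condition has to be checked for the single contingency $\X_W \leftarrow y_W$. We therefore first reduce \ACu{} in FTs to a condition of that same shape, using the monotonicity of Theorem~\ref{thm:mono_app}.

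\emph{Step 1 (reduction).} As in the proof of Lemma~\ref{lem:AC_app}, monotonicity forces $x_v = 1$, $x_C = \vec{1}$ and $y_C = \vec{0}$; we treat only $x_v = 1$, and $x_v = 0$ follows dually. Write $W_0 = \{w \in W \mid y_w = 0\}$ and $Z_0 = \{z \in Z \mid x^*_z = 0\}$. We claim that condition 2(b) of Definition~\ref{def:AC_app}.2 holds for all $W' \subseteq W$ and $C \subseteq Z' \subseteq Z$ if and only if
\[
\struc{G}(\vec{u},v)[\X_C \leftarrow \vec{1}, \X_{Z_0}\leftarrow \vec{0}, \X_{W_0} \leftarrow \vec{0}] = 1 .
\]
The proof is the chain of inequalities used in Lemma~\ref{lem:AC_app}. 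For a given $W'$, replacing the intervention $\X_{W'} \leftarrow y_{W'}$ by $\X_{W_0} \leftarrow \vec{0}$ (dropping the $1$-settings and adding all $0$-settings in $W_0$) can only decrease the value. Likewise, replacing $\X_{Z'} \leftarrow x^*_{Z'}$ by $\X_C\leftarrow \vec 1,\X_{Z_0}\leftarrow \vec 0$ can only decrease it. So the worst case is the displayed configuration. One subtlety must be handled: interventions on $W' \setminus W_0$ set to $1$ are removed rather than overridden. This is fine, because removing an intervention $X_w \leftarrow 1$ and letting $X_w$ take its natural value is again bounded below by forcing it to $0$ in the monotone order.

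\emph{Step 2 (shrinking $W$).} We next show that without loss of generality $W = W_0$, i.e.\ $y_W = \vec{0}$ after moving $W\setminus W_0$ into $Z$. This step is \emph{not} free, because 2(a) may need the $1$-settings. So we keep $y_W$ as is, but we note that the \ACu{} condition now implies both the \ACo{} condition (with the same $W, y_W$) and the extra inequality of Step 1 with $\X_{W_0} \leftarrow \vec 0$ alone.

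\emph{Step 3 (singleton argument).} Suppose $|C| \geq 2$. By Proposition~\ref{prop:singleton} applied to \ACo{} together with Theorem~\ref{thm:acu-aco}, each $c\in C$ alone satisfies \ACo{}, but we need \ACu{} for a proper subset in order to contradict minimality. Following Halpern's proof, we pick $c \in C$ and consider setting $\X_C \leftarrow \vec 0$ one coordinate at a time, from $\vec 1$ to $\vec 0$. Some step flips $X_v$ from $1$ to $0$; let $c$ be the flipped coordinate and $C_1$ the coordinates already set to $0$. We then move $C_1$ into $W$ with value $0$, i.e.\ the new contingency is $W\cup C_1$ with $y_{C_1}=\vec 0$. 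Then 2(a) holds for the cause $\{c\}$. For 2(b), the new $W_0$ is $W_0 \cup C_1$, so the worst-case value of Step 1 becomes $\struc{G}(\vec{u},v)[\X_c\leftarrow 1, \X_{C\setminus(C_1\cup\{c\})}\leftarrow \vec 1, \X_{Z_0}\leftarrow\vec 0,\X_{W_0\cup C_1}\leftarrow \vec 0]$. The elements of $C\setminus(C_1\cup\{c\})$ now lie in $Z$ with actual value $1$, so they are not forced.

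The main obstacle is exactly this last inequality. In general models it fails, which is why \ACu{} admits non-singletons there, so coherence has to be used essentially. We would first try to use the choice of the flip point: at that point, $\X_{C_1}\leftarrow\vec 0$ together with $\X_{C\setminus C_1}\leftarrow\vec 1$ under $y_W$ gives $1$. We then need to transfer this to the configuration with $W$ replaced by $W_0$ and without forcing $C\setminus(C_1\cup\{c\})$. If that transfer fails, we would instead choose the ordering of $C$ adversarially, taking $C_1$ maximal such that the worst-case value is still $1$, and show via monotonicity that the next element then yields the required flip in 2(a).
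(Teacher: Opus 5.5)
Your Step~1 reduction of 2(b) to a single worst case is correct, but Step~3 never closes. Both of your attempts fail for the same reason: you test 2(a) under the full contingency $y_W$, whereas the worst case of 2(b) leaves $\X_{W_1}$ unset. Write $h(z_C)=\struc{G}(\vec u,v)[\X_C\leftarrow z_C,\X_W\leftarrow y_W]$ and $g(z_C)=\struc{G}(\vec u,v)[\X_C\leftarrow z_C,\X_{W_0}\leftarrow\vec 0]$. Monotonicity only gives $g\le h$. So the upper end of a flip of $h$ (your first attempt) need not have $g=1$. And the lower end of a flip of $g$ (your fallback, which keeps $\X_{W_1}\leftarrow\vec 1$ in 2(a)) need not have $h=0$.

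The missing idea is the one you dismissed in Step~2: 2(a) never needs the $1$-settings, since removing $\X_w\leftarrow 1$ can only lower $X_v$. So move $W_1$ into $Z$ and work with $g$ alone, which is what the paper does. First, $g(\vec 1)=1$ is 2(b) with $Z'=C$, $W'=W_0$, and $g(\vec 0)\le h(\vec 0)=0$. Choose $c$ and $z_C$ with $z_c=0$, $g(z_C)=0$ and $g(\hat z_C)=1$, where $\hat z_C$ raises coordinate $c$ to $1$. Use the contingency $W_0\cup(C\setminus\{c\})$ with values $\vec 0$ and $z_{C\setminus\{c\}}$. Then 2(a) is literally $g(z_C)=0$.

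For 2(b), consider the intervention $[\X_c\leftarrow 1,\X_{W_0}\leftarrow\vec 0,\X_{C\setminus\{c\}}\leftarrow z_{C\setminus\{c\}}]$, whose outcome is $g(\hat z_C)=1$. It sets each variable at or below its actual value, so every node stays at or below its actual value. Hence forcing $Z$-nodes to $x^*$ and dropping $0$-settings can only raise $X_v$; in particular your $\X_{Z_0}\leftarrow\vec 0$ is vacuous. So every instance of 2(b) is at least $1$, provided that dropping the settings $\X_w\leftarrow 1$ for $w\in C\setminus\{c\}$ changes nothing.

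That proviso is the second gap. It, and likewise your Step~3 with $C\setminus(C_1\cup\{c\})$ left unforced in $Z$, needs those elements to take value $1$ on their own. This holds for BEs, whose natural value is their context value, but not for gates. The paper first reduces to $C\subseteq\BE{}$ and you never do. Moreover, for intermediate events that reduction does not preserve minimality, and the statement for arbitrary $C$ is false.

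For a counterexample, take BEs $s,t,e$ with $u_s=u_t=u_e=1$, and AND-gates $c_1$ (inputs $s,e$), $c_2$ (inputs $t,e$) and $m$ (inputs $c_1,c_2$). Add an OR-gate $\R$ with inputs $m,s,t$. Then $X_{c_1}=1\wedge X_{c_2}=1$ satisfies \AC{1} and \AC{2} of \ACu{} for $X_{\R}=1$ with $W=\{s,t\}$, $y_W=\vec 0$. No singleton does. For $\{c_1\}$, \AC{2} forces $\R\in Z$ and $s,t\in W$ with value $0$. Then 2(b) with $Z'=\{c_1\}$, $W'=\{s,t\}$ gives $X_{c_2}=X_m=X_{\R}=0$, and symmetrically for $\{c_2\}$. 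So this is a two-element \ACu{} cause, and the repaired argument proves the statement only for $C\subseteq\BE{}$.

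A smaller point: for $|C|\ge 2$, $x_C=\vec 1$ is not forced by monotonicity as in Lemma~\ref{lem:AC_app}. You need minimality, moving any coordinate with $x_c=0$ into $W$ with value $y_c$.
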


\begin{proof}
First, if $c \in C$, we may replace $c$ with a BE, with context value $\struc{G}(\vec{u},c)$. This does not affect \ACu, so without loss of generality $C \subseteq \BE{}$. We also assume $x_v = 1$; the case that $x_v = 0$ is completely analogous.

Next, suppose that there is a $c$ such that $x_c = y_c$. In that case, let $C' = C \setminus \{c\}$; then since $c$ is a basic event, both conditions (b)i. and (b)ii. of Definition \ref{def:AC_app} are unaffected by replacing $C$ by $C'$. Hence $C$ is not minimal, and $x_c \neq y_c$ in every actual cause.

Now, suppose that there is a $c$ such that $x_c = 0$, $y_c = 1$. Let $C = C' \setminus \{c\}$. Then
\[
\struc{G}(\vec{u},v)[X_{C'}\leftarrow x_{C'}, X_W \leftarrow x_W] \leq \struc{G}(\vec{u},v)[X_{C'}\leftarrow x_{C'},x_C \leftarrow 1, X_W \leftarrow x_W] = 0,
\]
and for all $C' \subseteq Z' \subseteq Z$ and $W' \subseteq W$,
\[
\struc{G}(\vec{u},v)[X_{Z'}\leftarrow x^*_{Z'}, X_W \leftarrow x_W] \leq \struc{G}(\vec{u},v)[X_{Z'}\leftarrow x^*_{Z'},x_C \leftarrow 0, X_W \leftarrow x_W] = 1.
\]
We conclude that $C'$ satisfies Condition (b)i+ii. as well, so $C$ is not minimal. Hence, we can conclude that $x_C = \vec{1}$ and $y_{C} = \vec{0}$.

Let $W_0 = \{w \in W \mid y_w  = 0\}$ and define $W_1$ likewise. Define the function $g\colon \BB^C \rightarrow \BB$ by
\[
g(z_C) = \Phi_G(\vec{u},v)[X_C \leftarrow z_C,X_{W_0} \leftarrow \vec{0}].
\]
By the above, $g(\vec{1}) = 1$, and
\begin{align*}
g(\vec{0}) &= \struc{G}(\vec{u},v)[X_C \leftarrow y_C,X_{W_0}\leftarrow \vec{0}]\\
&\leq \struc{G}(\vec{u},v)[X_C \leftarrow y_C,X_{W_0}\leftarrow \vec{0},W_1 \leftarrow \vec{1}]\\
&= \struc{G}(\vec{u},v)[X_C \leftarrow y_C,X_W \leftarrow W]\\
&= 0.
\end{align*}

Furthermore, by Theorem \ref{thm:mono_app}, $g$ is nondecreasing. Thus, there exists a $c \in C$ and a $z_C \in \BB^C$ with $z_c = 0$ such that $g(z_C) = 0$ and $g(\hat{z}_C) = 1$, where $\hat{z}_{c} = 1$ and $\hat{z}_{c'} = z_{c'}$ for all $c' \neq c$. We claim that $X_c = 1$ satisfies \textbf{AC2}. We take the partitioning $V = Z' \sqcup W'$, with $Z' = (Z \setminus C) \cup \{c\} \cup W_1$ and $W' = W_0 \cup C \setminus \{c\}$ and
\[
y'_w = \begin{cases}
    0, \textrm{ if $w \in W_0$},\\
    z_w, \textrm{ if $w \in C\setminus \{c\}$.}
\end{cases}
\]
For (b)i., note that
\begin{align*}
\struc{G}(\vec{u},v)[X_c \leftarrow 0, X_{W'} \leftarrow y'_{W'}] &= \struc{G}(\vec{u},v)[X_C \leftarrow z_C,X_{W_0}\leftarrow y_{W_0}]\\
&= g(z_C)\\
&= 0.
\end{align*}
For (b)ii., first observe that
\begin{align*}
\struc{G}(\vec{u},v)[X_{c}\leftarrow x^*_{c},X_{W'} \leftarrow y'_{W'}] &= \struc{G}(\vec{u},v)[X_C \leftarrow \hat{z}_C,X_{W_0} \leftarrow y_{W_0}]\\
&= g(\hat{z}_C)\\
&= 1.
\end{align*}
To get to the general statement of 2b, we should be able to remove the setting of variables corresponding to elements of $W_0$ and $C \setminus \{c\}$, and add the setting of variables corresponding to elements of $Z$, without affecting the outcome. Elements of $W_0$ are currently set to 0, so removing them will not decrease the outcome. Elements of $w \in C \setminus \{c\}$ are basic events with $u_w = 1$, which is the value they will take when not set. By monotonicity, removing them will not decrease the outcome. Finally, consider elements of $w \in Z$. Because the reasoning above can also be applied to $w$ instead of $v$, we have
\[
\struc{G}(\vec{u},w)[X_c \leftarrow x_c^*,X_{W''} \leftarrow y_{W''}] \leq x^*_w.
\]
Thus setting $X_w$ to $x_w^*$ will not decrease the outcome. This proves 2b.
\end{proof}

The second ingredient is the following analogon to Lem \ref{lem:AC_app}. The proof is completely analogous, so we omit it.

\begin{lemma} \label{lem:ac-u_app}
Let $G$ be a graph causal model of a fault tree, and let $\vec{u}$ be a context. Let $c,v \in V$, and let $x_v \in \BB$. Then $\X_c = x_c$ is an updated actual cause of $X_v = 1$ iff the following hold:
\begin{enumerate}
\item $\struc{G}(\vec{u},c) = x_c = 1$.
\item There exists a partition $V = Z \sqcup W$ such that $C \subseteq Z$, and values $y_W \in \BB^W$ such that:
\begin{enumerate}
\item $\struc{G}(\vec{u},v)[\X_c \leftarrow 0,\X_W \leftarrow y_W]  = 0$;
\item Let $Z' = \{z \in Z \mid \struc{G}(\vec{u},z) = 0\}$ and $W' = \{w \in W \mid y_w = 0\}$. Then
\[\struc{G}(\vec{u},v)[X_c \leftarrow 1, \X_{Z'}\leftarrow \vec{0},\X_{W'} \leftarrow \vec{0}] = 1.\]
\end{enumerate}
\end{enumerate}
\end{lemma}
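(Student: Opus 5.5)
The plan is to follow the proof of Lemma~\ref{lem:AC_app} almost verbatim; the one new ingredient is that condition 2b of the \ACu{} definition quantifies over all $W'' \subseteq W$, which must be reduced to a single worst case. Since Theorem~\ref{thm:singletonu_app} already gives singleton causes, we work with $C=\{c\}$, and minimality (condition 3) is automatic.

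\emph{Necessity.} Assume $X_c = x_c$ satisfies \ACu{} for $X_v=1$ with witnesses $Z,W,y_W,y_c$. Condition 2b for $Z'=\{c\}$ and $W''=\varnothing$ says that forcing $X_c$ to its actual value gives $X_v = 1$. As in Lemma~\ref{lem:AC_app}, Theorem~\ref{thm:mono_app} shows that if $x_c = 0$, then setting $X_c\leftarrow 1$ with $X_W\leftarrow y_W$ cannot produce $0$. Hence $x_c = 1$ and $y_c = 0$, which gives conditions 1 and 2a. Condition 2b of the lemma is the special instance $Z' = \{c\}\cup\{z\in Z\mid \struc{G}(\vec u,z)=0\}$, $W'' = \{w\in W\mid y_w=0\}$, of the definition's 2b.

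\emph{Sufficiency.} Assume conditions 1, 2a and 2b of the lemma hold. Conditions 1 and 2a of Definition~\ref{def:AC_app}.2 follow at once, with $y_c=0$. For 2b, fix any $\{c\}\subseteq Z''\subseteq Z$ and $W''\subseteq W$, and split $Z''=Z''_0\sqcup Z''_1$ and $W''=W''_0\sqcup W''_1$ according to actual value, respectively $y$-value. Starting from the equality in 2b of the lemma, we apply Theorem~\ref{thm:mono_app} repeatedly:
\begin{align*}
1 &= \struc{G}(\vec u,v)[X_c\leftarrow 1, X_{Z'}\leftarrow\vec 0, X_{W'}\leftarrow \vec 0]\\
&\le \struc{G}(\vec u,v)[X_{Z''_1}\leftarrow\vec 1, X_{Z'}\leftarrow\vec 0, X_{W'}\leftarrow \vec 0, X_{W''_1}\leftarrow\vec 1]\\
&\le \struc{G}(\vec u,v)[X_{Z''_1}\leftarrow\vec 1, X_{Z''_0}\leftarrow\vec 0, X_{W''_0}\leftarrow \vec 0, X_{W''_1}\leftarrow\vec 1].
\end{align*}
The first step adds interventions to $1$; the second removes interventions to $0$, namely those on $Z'\setminus Z''_0$ and $W'\setminus W''_0$.

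The main obstacle is justifying these two moves as single-variable applications of Theorem~\ref{thm:mono_app} inside an already-intervened model. The theorem is stated for $G_T$, but an intervened FT model is again a monotone graph causal model: each intervened node simply becomes a constant source. The nondecreasing-in-$x$ induction therefore carries over, and we apply it one variable at a time. We also check that dropping an intervention $X_z\leftarrow 0$ leaves $X_z$ at least $0$, and that adding an intervention $X_z\leftarrow 1$ leaves $X_z$ at most $1$, both trivially true. This yields the \ACu{} condition 2b for every $Z''$ and $W''$, so the last expression above is the required inequality and completes the proof.
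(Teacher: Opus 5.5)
Your proposal is correct and follows the route the paper intends: the paper omits this proof as completely analogous to Lemma~\ref{lem:AC_app}, i.e.\ monotonicity plus specialising 2b for necessity, and the same monotonicity chain for sufficiency, now also splitting $W''=W''_0\sqcup W''_1$ as you do. One slip needs fixing. In the necessity direction the instance you need is $Z''=\{c\}$, $W''=W$, which gives $\struc{G}(\vec u,v)[X_c\leftarrow x_c,X_W\leftarrow y_W]=1$; together with (b)i and monotonicity this forces $x_c\neq y_c$ and excludes $(x_c,y_c)=(0,1)$, hence $x_c=1$, $y_c=0$. Your instance $W''=\varnothing$ only restates $\struc{G}(\vec u,v)=1$. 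That fact is what you need in the sufficiency direction, for the effect part of condition (a), and it follows from your chain with $Z''=\{c\}$, $W''=\varnothing$ rather than ``at once''.
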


Theorem \ref{thm:acu_app} is now proven completely analogous to Theorem \ref{thm:path_app}. 

\begin{proof}[Theorem \ref{thm:mainu}]
This now follows directly from Theorem \ref{thm:acu_app} and Lemma \ref{lem:obv3}.
\end{proof}

\subsection{Proof of Theorem \ref{thm:mainm}}

In the graph causal model terminology, Theorem \ref{thm:mainm} becomes the following:

\begin{theorem}\label{thm:acm_app}
Let $T$ be a FT, let $G_T$ be its graph causal model, and let $\vec{u}$ be a context. Let $C$ be a set of events, and let $x_C \in \BB^C$. Then $X_C = x_C$ is an actual cause under {\ACm} for $X_{\R} = 1$ if and only if the following are satisfied:
\begin{enumerate}
\item $\struc{T}(\vec{u},\R) = 1$;
\item $x_C = \vec{1}$, and $u_v = 1$ for all $v \in C$;
\item If $D = \{v \in V \mid u_v = 1\}$, then $\struc{T}(D \setminus C,\R) = 0$;
\item $C$ is minimal w.r.t. property 3.
\end{enumerate}
\end{theorem}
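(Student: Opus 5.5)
We reduce Definition \ref{def:AC_app}.3 to a statement about the structure function, using monotonicity (Theorem \ref{thm:mono_app}) throughout. Here $C$ consists of basic events, and $D$ denotes the set of failed BEs under $\vec{u}$.

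\emph{Step 1: conditions 1 and 2.} Condition (a) of Definition \ref{def:AC_app}.3 says $\struc{G_T}(\vec{u},\R)=1$ and $x_c=\struc{G_T}(\vec{u},c)=u_c$ for all $c\in C$. This gives condition 1 and the identity $x_C=u_C$. Next suppose some $c\in C$ has $u_c=0$. Then any witness $(W,y_C)$ for condition (b) yields a witness for $C\setminus\{c\}$: replacing $y_c$ by the actual value $0$ can only lower the root by monotonicity, and the root must already be $0$. So $C$ would not be minimal, contradicting (c). The same argument shows that any $c$ with $y_c=x_c$ can be dropped. Hence in a minimal cause $x_C=\vec{1}$ and $y_C=\vec{0}$, which is condition 2.

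\emph{Step 2: the counterfactual becomes condition 3.} With $x_C=\vec1$ and $y_C=\vec0$, condition (b) asks for a set $W$ with
\[\struc{G_T}(\vec{u},\R)[X_C\leftarrow\vec0,X_W\leftarrow x^*_W]=0.\]
Take $W=\varnothing$; this is the best choice. Indeed, each $w\in W$ with $x^*_w=0$ is a node that is $0$ under the actual context. Since all remaining gates are monotone, setting $X_C\leftarrow\vec0$ can only lower such nodes further, so fixing them at $0$ changes nothing. Each $w$ with $x^*_w=1$ is pinned to $1$, which by monotonicity can only raise the root. Formally, one shows by bottom-up induction that
\[\struc{}(\cdot)[X_C\leftarrow\vec0,X_W\leftarrow x^*_W]\ \ge\ \struc{}(\cdot)[X_C\leftarrow\vec0]\]
at every node. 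Clamping to $0$ does not affect the value because the intervened value is at most $x^*$. So (b) holds iff $\struc{G_T}(\vec{u},\R)[X_C\leftarrow\vec0]=0$. Since $C$ consists of BEs, this equals $\struc{T}(D\setminus C,\R)=0$, which is condition 3.

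\emph{Step 3: minimality.} By Step 1, every subset $C'\subseteq C$ automatically satisfies condition (a), since its members are failed BEs and the root has value 1. By Step 2, $C'$ then satisfies (a)+(b) iff $\struc{T}(D\setminus C',\R)=0$. So minimality (c) is exactly condition 4.

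For the converse direction, conditions 1–4 give (a) directly, give (b) with $W=\varnothing$ and $y_C=\vec0$, and give (c) by condition 4 together with the equivalence just established.

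The main obstacle is the inequality in Step 2. The subtle point is that $x^*_w$ is the value under the \emph{unintervened} context. The key observation is that the intervened value is always $\le x^*_w$, so clamping a $0$-valued node is harmless. I would prove this carefully by induction over a topological order.
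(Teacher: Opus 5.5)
Your proposal is correct and follows essentially the paper's own argument: $W=\varnothing$ for the converse, monotonicity to discard members of $C$ with $x_c=0$, and the inequality $\struc{G_T}(\vec{u},w)[X_C\leftarrow\vec{0}]\le\struc{G_T}(\vec{u},w)[X_C\leftarrow\vec{0},X_W\leftarrow x^*_W]$ at every node, which reduces the counterfactual condition to $\struc{T}(D\setminus C,\R)=0$. Your Step 3 applies this equivalence to every subset $C'\subseteq C$ to match minimality (c) with condition 4 in both directions, which spells out what the paper leaves implicit; the only case worth an explicit word is $C'=\varnothing$, which never satisfies (b) because intervening with actual values reproduces $X_{\R}=1$.
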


\begin{proof}
If $C$ satisfies these conditions, then taking $W = \varnothing$ shows that $C$ satisfies Definition \ref{def:AC_app}.3. Conversely, suppose that $\X_{C} = x_C$ satisfies \ACm{} for $\X{\R} = \tone$. \ref{def:AC_app}.3(a) implies \ref{thm:acm_app}.1, and \ref{def:AC_app}.3(c) implies \ref{thm:acm_app}.4. Furthermore, all $v \in C$ with $x_v = 0$ can be removed by monotonicity, hence \ref{thm:acm_app}.2 holds. Finally, by monotonicity we have
\[
\struc{G}(\vec{u},w)[\X_C \leftarrow \vec{\tzero}] \leq \struc{G}(\vec{u},w)
\]
for all $w \in V$; hence
\[
\struc{G_T}(\vec{u},w)[\X_C \leftarrow \vec{\tzero}] \leq \struc{G_T}(\vec{u},w)[\X_C \leftarrow \vec{\tzero},\X_{W} \leftarrow x_W^*]
\]
for all $w \in V$. In particular, we get $\struc{G_T}(\vec{u},\R)[\X_C \leftarrow \vec{\tzero}] = \tzero$, which we translate back to the FT level as $\struc{T}(D\setminus C,\R) = \tzero$.
\end{proof}

\begin{proof}[Theorem \ref{thm:mainm}]
This follows from Theorem \ref{thm:mainm} and Lemma \ref{lem:obv3}.
\end{proof}

\subsection{Proof of Theorem \ref{thm:mcs}}

In terms of graph causal models, Theorem \ref{thm:mcs} can be rephrased as follows: 

\begin{theorem} \label{thm:mcs_app}
Let $T$ be a FT, and let $v$ be a basic event. 
\begin{enumerate}
    \item If there is an MCS $D$ such that $v \in D$, then $\vec{u}^D$ is a context of $G_T$ under which $X_v = 1$ satisfies {\ACo}, {\ACu}, and {\ACm} for $X_{\R} = 1$. 
    \item If there exists a context under which $X_v = 1$ satisfies \ACm{} for $X_{\R} = 1$, then $v$ is relevant.
\end{enumerate}
\end{theorem}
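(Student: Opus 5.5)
For part 1, I will use the \ACm{} classification (Theorem \ref{thm:acm_app}) as the entry point and then transfer the conclusion via Theorem \ref{thm:acum-acu-aco}. Let $D$ be an MCS with $v\in D$, and take the context $\vec{u}=\vec{u}^D$, so the set of failed BEs is exactly $D$. I check the four conditions of Theorem \ref{thm:acm_app} for $C=\{v\}$ and $x_C=1$:
\begin{itemize}
\item Condition 1 holds because $D$ is a cut set, so $\struc{T}(D,\R)=1$.
\item Condition 2 holds since $u_v=1$.
\item Condition 3 asks that $\struc{T}(D\setminus\{v\},\R)=0$, which is exactly minimality of $D$ applied to the proper subset $D\setminus\{v\}$.
\item Condition 4, minimality of $C$ with respect to property 3, is automatic: the only proper subset of $\{v\}$ is $\varnothing$, and $\struc{T}(D,\R)=1\neq 0$.
\end{itemize}
Hence $X_v=1$ satisfies \ACm{} for $X_{\R}=1$ under $\vec{u}^D$.

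Since $\{v\}$ is itself the \ACm{} cause, $X_v=1$ is in particular ``part of'' an \ACm{} cause. Theorem \ref{thm:acum-acu-aco}(a) then gives that it is an \ACo{} cause, and Theorem \ref{thm:acum-acu-aco}(b) that it is an \ACu{} cause. This transfer is the only point needing care. Halpern's theorem is stated for general causal models, and Lemma \ref{lem:obv3} guarantees that the graph causal model notions agree with those of $M_T$, so I will apply it in $M_T$ and translate back.

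For a more self-contained route, one could instead verify \ACo{} and \ACu{} directly. For \ACo{} I would find a path from $v$ to $\R$ through $1$-valued nodes and check Theorem \ref{thm:maino}.2b, but ruling out forbidden pairs is awkward; I would only fall back on this if the citation were unacceptable.

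For part 2, suppose some context $\vec{u}$ makes $X_v=1$ an \ACm{} cause for $X_{\R}=1$. Let $D=\{w\mid u_w=1\}$. Conditions 1–3 of Theorem \ref{thm:acm_app} give $v\in D$, $\struc{T}(D,\R)=1$ and $\struc{T}(D\setminus\{v\},\R)=0$. Put $\vec{u}'=\vec{u}^{D\setminus\{v\}}$. Then $\struc{T}(\vec{u}')=0$ and $\struc{T}(\vec{u}'[v\leftarrow 1])=\struc{T}(D,\R)=1$, so $v$ is critical for $\vec{u}'$ and therefore relevant. I expect no real obstacle here; the main subtlety of the whole proof is the justified transfer from \ACm{} to \ACo{} and \ACu{} in part 1.
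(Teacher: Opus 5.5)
Your proof is correct, but in part 1 you take a different route from the paper. The paper checks \ACo{} and \ACu{} directly from Definition \ref{def:AC_app} with $W=\varnothing$. Then \AC{2}(a) is exactly $\struc{T}(D\setminus\{v\},\R)=0$, i.e.\ minimality of $D$. Both sufficiency clauses hold trivially, because $W'\subseteq\varnothing$ and resetting $X_v$ and any $Z'$ to their actual values changes nothing. \ACm{} is then checked through Theorem \ref{thm:acm_app}, exactly as you do. So the ``self-contained route'' you set aside as awkward is a one-line but-for argument; it never needs the path conditions of Theorem \ref{thm:maino}.

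You instead obtain \ACo{} and \ACu{} from \ACm{} via Theorem \ref{thm:acum-acu-aco} and Lemma \ref{lem:obv3}. This is valid as the paper states that result, but it makes an elementary fact depend on an external theorem. Moreover, if that result is used in a form that only guarantees $X_v=1$ is \emph{part of} an \ACu{} cause, you would additionally need Theorem \ref{thm:singletonu_app} to conclude that $X_v=1$ is itself one.

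For part 2, your argument follows the same idea as the paper's but is cleaner. The paper only notes that $D$ is a cut set and says to ``take $D$ minimal''. This silently relies on condition 3 to ensure that the minimal cut set obtained still contains $v$. You use condition 3 explicitly to exhibit the critical vector $\vec{u}^{D\setminus\{v\}}$, which is precisely the definition of relevance.
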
 

\begin{proof}
\begin{enumerate}
\item $\struc{G_T}(\vec{u}^D,v) = \struc{G_T}(\vec{u}^D,\R) = 1$ by definition, so we satisfy Definition \ref{def:AC_app}.1(a). As for 1(b), take $W = \varnothing$. Since $D$ is a minimal cut set we get $\struc{G_T}(\vec{u}^D,\R)[\X_v \leftarrow \tzero] = \tzero$. Furthermore, since $W$ is empty, setting variables in $Z$ to their actual value does nothing, so $\struc{G_T}(\vec{u}^D,\R)[Z' \leftarrow \vec{0}] = \struc{G_T}(\vec{u},\R) = \tone$, so we also satisfy 1(b); this proves \ACo{}, and \ACu{} is proven analogously. It is easy to see that this same $\vec{u}^D$ satisfies Theorem \ref{thm:acm_app}, which proves \ACm{}.
\item Suppose that there exists a $D \subseteq \BE{T}$ such that $X_c = 1$ satisfies \ACm{} for $X_{\R} = 1$ under context $\vec{u}^D$. By Theorem \ref{thm:acm_app}.1, $D$ is a cut set. If we take $D$ to be minimal, then $D$ is a minimal cut set, hence $c$ is relevant as $c \in D$. 
\end{enumerate}
\end{proof}

\begin{proof}[Theorem \ref{thm:mcs}]
This follows from Theorem \ref{thm:mcs_app} and Lemma \ref{lem:obv3}.
\end{proof}

\subsection{Proof of Theorem \ref{thm:mcs2}}

In terms of graph causal models, Theorem \ref{thm:mcs2} can be rephrased as follows:

\begin{theorem} \label{thm:mcs->ac-o2_app}

Let $T$ be a FT, and let $\vec{u}$ be a context of $G_T$. Let $v$ be a BE. Consider the following two statements:
\begin{enumerate}
\item $\{v \in \BE{T} \mid u_v = 1\}$ contains an MCS containing $v$;
\item $X_v = 1$ satisfies {\ACu} for $X_{\R} = 1$;
\item $X_v = 1$ satisfies {\ACo} for $X_{\R} = 1$.
\end{enumerate}
Then 1 implies 2, and 2 implies 3. If $T$ is tree-shaped or if $T$ is of disjunctive normal form, then all three are equivalent.
\end{theorem}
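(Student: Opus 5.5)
The plan is to prove the implications in order, relying on the path classifications (Theorems \ref{thm:path_app} and \ref{thm:acu_app}) and the monotonicity result, Theorem \ref{thm:mono_app}.

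\textbf{Step 1: 2 implies 3.} This is immediate from Theorem \ref{thm:acu-aco}. Alternatively, the conditions of Theorem \ref{thm:acu_app} are those of Theorem \ref{thm:path_app} plus condition 3.

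\textbf{Step 2: 1 implies 2.} Let $C \subseteq D := \{w \mid u_w = 1\}$ be an MCS with $v \in C$. I will argue directly through Lemma \ref{lem:ac-u_app} rather than building a path by hand. Take $W = \BE{T} \setminus C$ with $y_W = \vec{0}$, and let $Z$ be everything else.
\begin{itemize}
\item For 2(a): setting $X_v \leftarrow 0$ and $X_W \leftarrow \vec 0$ evaluates the FT on $C \setminus \{v\}$. This gives $0$ because $C$ is minimal.
\item For 2(b): $X_v \leftarrow 1$ together with $X_{W} \leftarrow \vec 0$ evaluates on $C$, which gives $1$. Additionally forcing $X_{Z'} \leftarrow \vec 0$ for the $z \in Z$ with actual value $0$ changes nothing. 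Indeed, under the actual context every such $z$ already evaluates to $0$. By monotonicity (the actual context dominates $\vec u^C$), it also evaluates to $0$ on $\vec u^C$, so the forcing is vacuous.
\item Since $v$ is a basic event in $C \subseteq D$, condition 1 of Lemma \ref{lem:ac-u_app} holds.
\end{itemize}
Minimality is automatic for singletons, so $X_v=1$ satisfies \ACu{}.

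\textbf{Step 3: 3 implies 1 for special shapes.} Suppose $X_v = 1$ satisfies \ACo{}. Theorem \ref{thm:path_app} then gives a path $v = v_0,\ldots,v_n = \R$ of $1$-valued nodes. I treat the two shapes separately.
\begin{itemize}
\item \emph{Tree-shaped $T$.} Build a set $C \subseteq D$ top-down. At each path AND-gate $v_i$, include, for every off-path input $w$, a minimal set of failed BEs in the subtree of $w$ that makes $w$ fail. Such a set exists since $X_w = 1$. At OR-gates, take nothing off-path. Add $v$ itself. Because $T$ is a tree, these subtrees are disjoint and do not contain $v$. Hence $C$ is a cut set, and removing $v$ kills the path, since every OR-gate on it has only the path input in $C$'s support. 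So $v$ belongs to every cut set contained in $C$, and any MCS $C' \subseteq C$ contains $v$.
\item \emph{DNF $T$.} The path is $v \to a \to \R$ with $a$ an AND-gate whose inputs are all BEs that evaluate to $1$. Then $\ch(a) \subseteq D$ is a cut set containing $v$. By condition 4 of the DNF definition, $\ch(a)$ is in fact an MCS. The point to check is that no other AND-gate's input set is a proper subset of it, which is exactly what condition 4 forbids.
\end{itemize}

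\textbf{Main obstacle.} The hardest part is the tree case. There I must show carefully that $C \setminus \{v\}$ is not a cut set, i.e.\ that no alternative route to $\R$ arises from the chosen off-path sets. The tree structure is what guarantees this, because the subtrees hanging off the path are disjoint from each other and from $v$.
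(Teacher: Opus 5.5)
Your proof is correct. Its skeleton is the paper's: $2\Rightarrow 3$ from Theorem~\ref{thm:acu-aco}; $1\Rightarrow 2$ by an explicit contingency under which the basic events take the values of $\vec u^C$; and $3\Rightarrow 1$ via the path of Theorem~\ref{thm:path_app}. The DNF case is read off the middle AND-gate exactly as in the paper, and you also spell out why condition~4 makes $\ch(a)$ minimal.

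Two steps differ in detail.

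\textbf{The contingency in $1\Rightarrow 2$.} The paper takes $W=\BE{}\setminus\{v\}$ with $y_w=1$ iff $w\in C$. It must then argue that dropping the $1$-settings on $C\setminus\{v\}$ is harmless, because those events already fail under $\vec u$. Your choice $W=\BE{}\setminus C$, $y_W=\vec 0$ moves $C\setminus\{v\}$ into $Z$. Then $W'=W$ in Lemma~\ref{lem:ac-u_app}, and only the vacuity of the $Z'$-forcing needs checking. It is the same idea with less bookkeeping.

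\textbf{The tree case.} The paper builds a fresh context top-down: all inputs at AND-gates, one input at OR-gates, the path input on the path. It then argues that the resulting failed set is itself an MCS. You instead assemble a cut set $C\subseteq D$ from minimal failed sets below the off-path inputs of path AND-gates. You show only that $C\setminus\{v\}$ is not a cut set, and then extract an MCS from $C$.

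Your version has the advantage that containment in $\{w\mid u_w=1\}$ is built in. The paper's ``arbitrarily choose one $w'\in\ch(w)$'' at an off-path OR-gate may select an input that does not fail under $\vec u$. Its MCS therefore needs the extra restriction to failing inputs before it witnesses statement~1. You also never need to prove that $C$ itself is minimal.
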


\begin{proof}
Note that $2 \Rightarrow 3$ is immediate by Lemma \ref{lem:obv3} and Theorem \ref{thm:acum-acu-aco}; we first prove $1 \Rightarrow 2$. As before, we only need to show that condition (b) of Definition \ref{def:AC_app}.2 holds. For this, consider an MCS $C$ as in the theorem. Now let $W = \BE{} \setminus \{v\}$, with $y_w = \tone$ if and only if $y \in C$. Since $C$ is a minimal cut set, we have 
\[\struc{G_T}(\vec{u},\R)[\X_v \leftarrow \tzero, \X_W \leftarrow y_W] = \tzero.\] 
Furthermore, since $\struc{G_T}(\vec{u},v) = \tone$, we get
\[\struc{G_T}(\vec{u},\R)[\X_W \leftarrow y_W] = \tone.\] 
For any BE $w$ we have 
\[\struc{G_T}(\vec{u},w)[\X_W \leftarrow y_W] \leq u_w.\] 
By monotonicity, it follows via induction that 
\[\struc{G_T}(\vec{u},k)[\X_W \leftarrow y_W] \leq \struc{G_T}(\vec{u},k)\]
for all $k \in V$. In particular $\struc{G_T}(\vec{u},z)[\X_W \leftarrow y_W] \leq y_z^*$ for all $z \in Z$. By monotonicity, this means that forcibly setting $\X_z \leftarrow x_z^*$ has no impact on the fact that $\struc{G_T}(\vec{u},\R)[\X_W \leftarrow y_W] = \tone$. Furthermore, since $C$ happens in $\vec{u}$, we have $y_w \leq \struc{G_T}(\vec{u},w)$ for all $w \in W$. In particular, not setting $X_w \leftarrow y_w$ for some $w$ does not affect $\struc{G_T}(\vec{u},\R)[\X_W \leftarrow y_W] = \tone$ either. This proves condition (b).

For $3 \Rightarrow 1$, we start with a tree-shaped FT. Suppose that $X_v = \tone$ satisfies \ACo{} for $X_{\R} = \tone$. By Theorem \ref{thm:path_app} the unique path from $X_v$ to $X_{\R}$ contains only value $\tone$. We now create a new context $\vec{u}'$, by defining, for each $w \in V$, a value $g(w) \in \BB$ recursively from the root:
\begin{itemize}
\item $g(\R) = \tone$;
\item If $g(w) = \tzero$ and $\gamma(w) \neq \tBE$, then $g(w') = \tzero$ for all $w' \in \ch(w)$;
\item If $g(w) = \tone$ and $\gamma(w) = \tAND$, then $g(w') = \tone$ for all $w' \in \ch(w)$;
\item If $g(w) = \tone$ and $\gamma(w) = \tOR$, then arbitrarily choose one $w' \in \ch(w)$, with the restriction that if $w$ lies on the path from $v$ to $\R$, then $w'$ has to be the previous node on this path (compared to $w$). Then, set $g(w') = \tone$ and $g(w'') = \tzero$ for all $w'' \in \ch(w) \setminus \{w'\}$;
\item If $\gamma(w) = \tBE$, and $w' \in U$ is the corresponding vertex, set $u'_{w'}$ to $g(w)$.
\end{itemize}
Since $T$ is treelike, this defines each $g(w)$ uniquely. Also, by induction bottom-up it is easily shown that $g(w) = \struc{G_T}(\vec{u}',w)$ for all $w$. In particular $\struc{G_T}(\vec{u}',\R) = \tone$, so $C = \{v \in \BE{} \mid u'_v = 1\}$ is a cut set. Furthermore, it can be seen that $g(v) = \tone$, so $v \in C$. It remains to show that $C$ is minimal. To see this, note that by construction every $w$ with $\struc{G_T}(\vec{u}',w) = \tone$ has its minimal number of inputs sets to $\tone$. Thus, if we remove one element $w$ from $C$ to get $\vec{u}''$, it can be shown by induction that every node $w'$ from the path from $w$ to $\R$ satisfies $\struc{G_T}(\vec{u}'',w') = \tzero$. In particular, $\struc{G_T}(\vec{u}'',\R) = \tzero$. Since $w$ was chosen arbitrary, this shows that $C$ is an MCS.

We now turn towards a $T$ in disjunctive normal form. Suppose that $\X_v = \tone$ is an actual cause for $\X_{\R} = \tone$. By Theorem \ref{thm:path_app} there exists a path from $v$ to $\R$ on which all variables evaluate to $\tone$. The middle node on this path represents an MCS containing $v$, all of whose elements are set to 1 in $\vec{u}$.
\end{proof}

\begin{proof}[Theorem \ref{thm:mcs2}]
This follows directly from Theorem \ref{thm:mcs->ac-o2_app} and Lemma \ref{lem:obv3}.
\end{proof}

\subsection{Proof of Theorem \ref{thm:comp}}

We prove the statements for \ACo{}, \ACu{} and \ACm{} separately.

\begin{figure}[t]
\centering
\includegraphics[width=\textwidth]{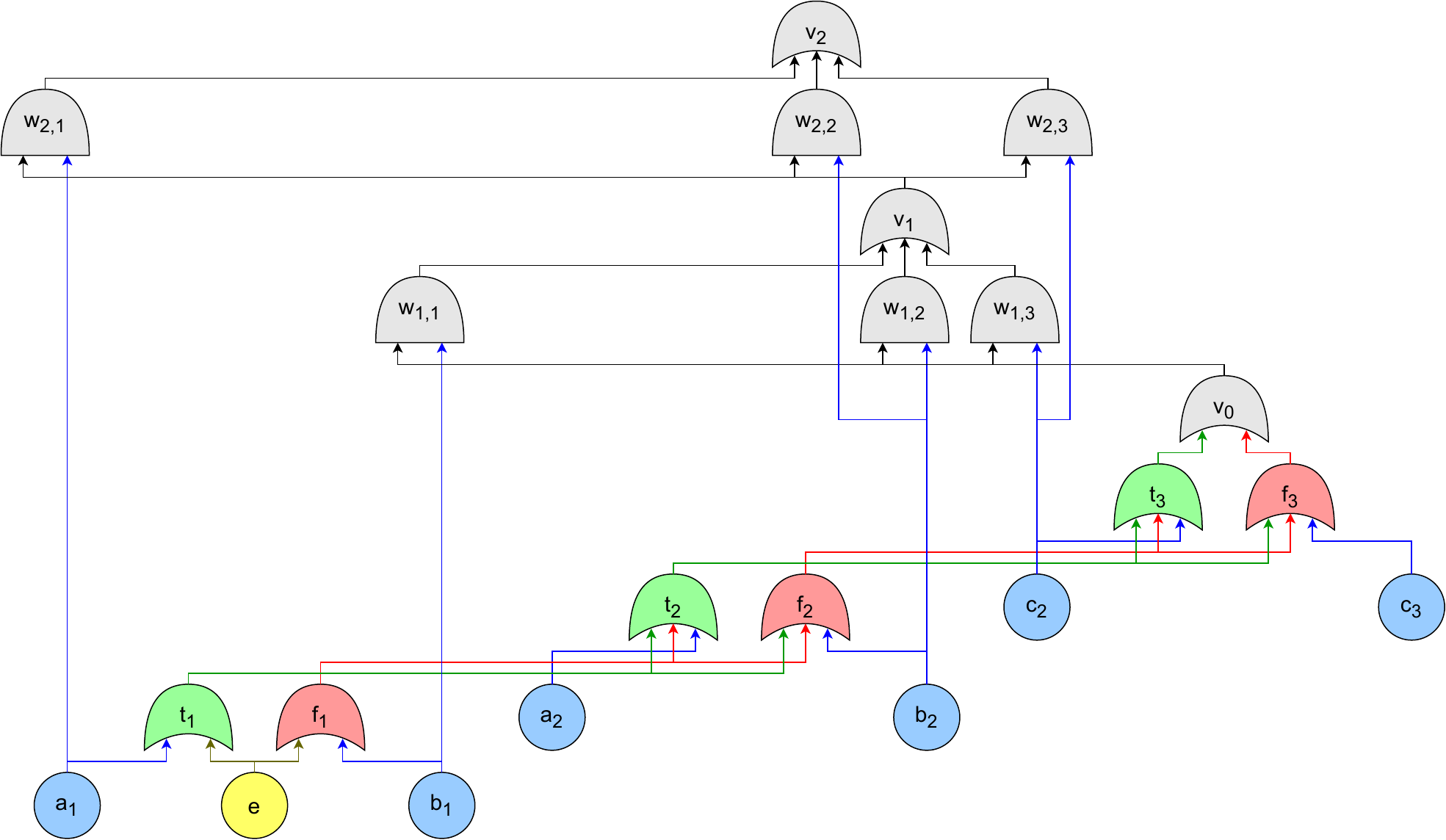}
\caption{An example of the construction of the proof of Theorem \ref{thm:comp1}, for $\varphi = (x_1 \vee x_2 \vee \neg x_3) \wedge (\neg x_1 \vee x_2 \vee \neg x_3)$.} \label{fig:ACoNPC}
\end{figure}

\begin{theorem} \label{thm:comp1}
Problem \ref{prob:dec} is NP-complete for \ACo{}. 
\end{theorem}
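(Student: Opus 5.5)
Membership in NP and NP-hardness are handled separately. For membership we use Theorem \ref{thm:path_app}, together with Lemma \ref{lem:obv3} and Proposition \ref{prop:singleton}. Given $T$, $\vec{u}$, $w$ and $\vec{X}$, we first check in polynomial time that $\vec{X}=\{X_v\}$ is a singleton and that $\struc{T}(\vec{u},v)=\struc{T}(\vec{u},w)=1$. The certificate is a path $v=v_0,\ldots,v_n=w$ in the sub-FT rooted at $w$; Theorem \ref{thm:path_app} applies to that sub-FT with $w$ as root. Checking condition 1 along the path needs one bottom-up evaluation of the structure function. Checking condition 2 needs $\mathcal{O}(n^2)$ comparisons of input sets and gate types. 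So the problem is in NP.

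For hardness we reduce from 3-SAT, using a layered FT in which every BE is set to $1$ by $\vec{u}$. Then every gate evaluates to $1$, condition 1 of Theorem \ref{thm:path_app} holds for every path, and only condition 2, the forbidden pairs, matters. Let $\varphi$ have variables $x_1,\ldots,x_m$ and clauses $K_1,\ldots,K_r$. The construction is as follows.
\begin{enumerate}
\item Start from a single bottom BE $v$.
\item For each variable $x_i$, add a layer with two gates $T_i$, of type $\tOR$, and $F_i$, of type $\tAND$.
\item For each clause $K_j$, add a layer with one gate per literal. A positive literal $x_i$ gives an $\tOR$-gate $P_{j,i}$; a negative literal $\neg x_i$ gives an $\tAND$-gate $N_{j,i}$.
\item Add a top gate $\R$ of type $\tOR$.
\item Connect each gate to every node of the layer directly below it. A path from $v$ to $\R$ then picks exactly one node per layer, which in the variable layers amounts to choosing a truth assignment.
\item Add a fresh BE $d_{j,i}$ as an input of both $F_i$ and $P_{j,i}$, and a fresh BE $d'_{j,i}$ as an input of both $T_i$ and $N_{j,i}$.
\end{enumerate}

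The point of the construction is that $F_i$ ($\tAND$) and $P_{j,i}$ ($\tOR$) have different types and share the child $d_{j,i}$. Likewise $T_i$ ($\tOR$) and $N_{j,i}$ ($\tAND$) have different types and share $d'_{j,i}$. Hence a path violates condition 2 exactly when it picks a literal node in some clause layer that contradicts the choice made in the corresponding variable layer. The remaining step is to check that no other pair of path nodes causes a violation.
\begin{itemize}
\item Two path nodes in non-adjacent layers share no layer-children and share no dummy BE unless they are one of the designed pairs.
\item Two path nodes in adjacent layers have disjoint layer-children, since those children lie in different layers.
\item Dummy BEs are shared only by the designed pairs.
\end{itemize}
Consequently, a path satisfying Theorem \ref{thm:path_app} corresponds to an assignment together with one true literal per clause, and conversely. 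So $X_v=1$ satisfies \ACo{} for $X_{\R}=1$ iff $\varphi$ is satisfiable, and the construction is clearly polynomial.

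The main obstacle is this bookkeeping: ruling out accidental shared inputs between differently-typed gates on the same path. In particular, the complete bipartite connections between layers must not create spurious forbidden pairs, and the node in the bottom layer is exempt from condition 2 since the theorem only constrains $i,j>0$. If a spurious pair does turn up, the fallback is to interpose a fresh single-input gate between layers. Such a gate has the same type as the gate above it and a private child, which removes every unintended sharing.
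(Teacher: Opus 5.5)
Your proof is correct and follows the paper's route. Membership in NP uses a path certificate for Theorem \ref{thm:maino}, with causes forced to be singletons. NP-hardness is a 3-SAT reduction under the all-ones context, in which the only obstruction is a forbidden pair: a BE shared between a variable gate and a literal gate of opposite type.

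Your gadget differs from the paper's only in bookkeeping. The paper makes both $t_k$ and $f_k$ OR-gates and every literal gate $w_{i,j}$ an AND-gate, chains them through OR clause gates $v_i$, and reuses the two BEs $a_k,b_k$ per variable. You use mixed gate types, complete bipartite layering and fresh per-occurrence dummy BEs instead. Your check of shared children is already complete, so the fallback of interposing extra gates is not needed.
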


\begin{proof}
It is clearly in NP: By Theorem \ref{thm:maino}, a witness for $X_v=1$ being a (singleton) actual cause for $X_w = 1$ is a path from $X \rightarrow Y$ with some properties that take only polynomial time to satisfy. By Theorem \ref{thm:singleton}, there are no non-singleton causes.

To prove NP-hardness, we show that 3-SAT reduces to it. Consider a 3-SAT formula $\varphi = g_1 \wedge \cdots \wedge g_n$, where each $g_i = \ell_{i,1} \vee \ell_{i,2} \vee \ell_{i,3}$, and each $\ell_{i,j} \in \{x_1,\ldots,x_m,\neg x_1,\ldots,\neg x_m\}$. We construct a FT $T_{\varphi}$ consisting of
\begin{itemize}
\item a BE $e$;
\item For each variable $x_k$, OR-gates $t_k$ and $f_k$, BEs $a_k$ and $b_k$, and edges $a_k \rightarrow t_k$ and $b_k \rightarrow f_k$;
\item edges $e \rightarrow t_1$, $e \rightarrow f_1$, and for each $k < m$, edges $t_k \rightarrow t_{k+1}$, $t_k \rightarrow f_{k+1}$, $f_k \rightarrow t_{k+1}$ and $f_k \rightarrow f_{k+1}$;
\item For each clause $g_i = \ell_{i,1} \vee \ell_{i,2} \vee \ell_{i,3}$:
\begin{itemize}
\item An OR-gate $v_i$;
\item An AND-gate $w_{i,j}$ for each $\ell_{i,j}$. If $\ell_{i,j} = x_k$ for some $k$, then there is an edge $b_k \rightarrow w_{i,j}$; if $\ell_{i,j} = \neg x_k$, then there is an edge $a_k \rightarrow w_{i,j}$;
\end{itemize}
\item an OR-gate $v_0$, and edges $t_m \rightarrow v_0$ and $f_m \rightarrow v_0$;
\item edges $v_i \rightarrow w_{i+1,j}$ for each $0 \leq i < n$ and $j \leq 3$.
\end{itemize}
The root of this FT is $v_n$. The resulting FT for $\varphi = (x_1 \vee x_2 \vee \neg x_3) \wedge (\neg x_1 \vee x_2 \vee \neg x_3)$ is depicted in Fig.~\ref{fig:ACoNPC}.

Now let the context $\vec{u}$ be the constant vector $1$. We claim that $X_e = 1$ is a cause for $X_{v_n} = 1$ if and only if $\varphi$ is satisfiable. First, suppose that $X_e = 1$ is a cause for $X_{v_n} = 1$. The path $\pi$ from $e$ to $v_n$ that shows causality must include either $t_k$ or $f_k$ for each $k \leq m$, each $v_i$, and one of the $w_{i,j}$ for each $i$. Given $\pi$, we get a truth assignment given by $x_k \mapsto  1$ iff $t_k$ lies on $\pi$. To show that this truth assignment satisfies $\varphi$, take a clause $g_i$, let $w_{i,j}$ be the input of $v_i$ on $\pi$. If $\ell_{i,j} = x_k$, then $w_{i,j}$ has input $b_k$. This means that $f_k$ cannot lie on $\pi$. Hence $t_k$ does and $x_k \mapsto 1$, and we conclude that $g_i \mapsto 1$. The argument for $\ell_{i,j} = \neg x_k$ is analogous. Since this holds for each $i$, we conclude that $\varphi \mapsto 1$ under this truth assignment.

If $\varphi$ is satisfiable, then we can turn a truth assignment into a path $\pi$ by having $t_k$ lie on $\pi$ if $x_k \mapsto 1$, and having $f_k$ lie in $\pi$ if $x_k \mapsto 0$. Furthermore, for each $i$, choose an $\ell_{i,j}$ that is satisfied, and have $w_{i,j}$ lie on $\pi$. By the same argument as above, this path avoids forbidden pairs (an AND-gate and an OR-gate that share a child), and shows that $X_e = 1$ is a cause for $X_{v_n} = 1$.
\end{proof}

\begin{theorem} \label{thm:comp2}
Problem \ref{prob:dec} is NP-complete for \ACu{}. 
\end{theorem}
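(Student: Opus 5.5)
Membership in NP and NP-hardness are handled separately.

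\emph{Membership in NP.} By Theorem~\ref{thm:singletonu_app}, every \ACu{} cause in a FT is a singleton. So we first check in polynomial time that $\vec{X}$ is a single node $v$ and that $v$ and $w$ evaluate to $1$ under $\vec{u}$. For a non-root target $w$ we restrict to the sub-FT rooted at $w$. The certificate is then a path $v=v_0,\ldots,v_n=w$. Given the path, conditions 2a and 2b of Theorem~\ref{thm:mainu} are checked by inspecting the nodes and pairs of nodes on the path. For condition 2c we compute $\vec{D}$, perform the intervention $\vec{D}\leftarrow\vec{0}$, and re-evaluate the structure function bottom-up in $\mathcal{O}(|E|)$ time. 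All checks are polynomial, so the problem is in NP.

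\emph{NP-hardness.} We reuse the 3-SAT reduction of Theorem~\ref{thm:comp1}: the same FT $T_\varphi$, the all-ones context, and the question whether $X_e=1$ is a cause of $X_{v_n}=1$. The \ACo{} conditions already match satisfying assignments with admissible paths, so the only new requirement is condition 2c of Theorem~\ref{thm:mainu}.

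Any path $\pi$ from $e$ to $v_n$ passes through OR-gates: the chosen $t_k$ or $f_k$, then $v_0$, then each $v_i$. The set $\vec{D}$ therefore consists of three kinds of nodes:
\begin{itemize}
\item the unchosen predecessors among $t_{k-1},f_{k-1}$;
\item the BEs $a_k$ or $b_k$ feeding the chosen gates;
\item the unchosen $w_{i,j'}$ at each $v_i$.
\end{itemize}
Setting these to $0$ must leave $X_{v_n}=1$. We show this bottom-up along $\pi$. The BE $e$ keeps value $1$, so each chosen $t_k$ or $f_k$ stays $1$ through its chosen predecessor, and hence $v_0=1$. At level $i$, the chosen $w_{i,j}$ is an AND of $v_{i-1}$ (which is $1$ by induction) and one literal BE. We need that this literal BE is not in $\vec{D}$. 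Suppose the literal is $x_k$, so the BE is $b_k$. Since the assignment satisfies $\ell_{i,j}$, we have $t_k\in\pi$. The BE placed in $\vec{D}$ at index $k$ is then $a_k$ (the input of $t_k$ other than its chain predecessor), not $b_k$. The case $\neg x_k$ is symmetric. Hence $w_{i,j}=1$ and $v_i=1$, which gives condition 2c.

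The step that needs the most care is the identification of which BE is zeroed at index $k$. If that bookkeeping fails (for example, if I have the $a_k/b_k$ roles swapped), I would swap the literal-to-BE wiring in the construction; the correspondence between forbidden pairs and satisfying assignments is symmetric, so the reduction survives. The converse direction needs no new work: an \ACu{} cause is also an \ACo{} cause by Theorem~\ref{thm:acu-aco}, and the argument of Theorem~\ref{thm:comp1} then yields a satisfying assignment.
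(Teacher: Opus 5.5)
Your proposal is correct and follows the paper's proof essentially step for step. Membership in NP uses a single-path certificate checked against Theorem~\ref{thm:mainu}. Hardness reuses the 3-SAT reduction of Theorem~\ref{thm:comp1} and checks that zeroing $\vec{D}$ keeps every node of an admissible path at $1$, with the converse direction coming from \ACu{} $\Rightarrow$ \ACo{}. Your bookkeeping is right as stated: $a_k\in\vec{D}$ when $t_k\in\pi$, and the literal BE $b_k$ of the chosen $w_{i,j}$ has $f_k$ as its only OR-parent, with $f_k\notin\pi$. So the closing hedge about swapping the wiring is unnecessary.
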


\begin{proof}
It is clear that this problem is in NP: by Theorem \ref{thm:mainu}, a witness is again a path (from a singleton cause by Theorem \ref{thm:mainu}), and the required properties can again be checked in polynomial time.

For NP-hardness, we do the same construction as in the proof of Theorem \ref{thm:comp1}. It suffices to show that for every path $\pi\colon e \rightarrow v_n$, setting all children of all OR-gates on $\pi$ to $0$ (except those children that are on $\pi$ themselves) does not change the value of any gate on $\pi$. First, suppose that $t_1$ is on $\pi$. Setting $X_{a_1} \rightarrow 0$ now means that all $w_{i,j}$ for which $\ell_{i,j} = \neg x_1$ are now also set to $0$. However, as argued above, these gates may not be on $\pi$ anyway, so this does not affect the values on $\pi$. Furthermore, the only output of such a gate is $v_i$. Since this is an OR-gate, and it still has a child on $\pi$ (which still has value 1), this is not affected. Hence we can safely set $X_{a_1} \rightarrow 0$ without affecting the gates on $\pi$. Since this is the only non-path input of $t_1$, we conclude that we can set all inputs of $t_1$ to $0$ without affecting the gates on $\pi$. The same reasoning holds for $f_1$.

For a $t_i$ with $i > 1$ on $\pi$, we have the children $a_i$, $t_{i-1}$ and $f_{i-1}$. For $a_i$ the same reasoning as for $a_1$ applies. Either $t_{i-1}$ or $t_{i+1}$ are on $\pi$; if we set the other to $0$, then this has no effects downstream (towards the root), since $f_i$ still has an input with value $1$. Hence again, setting the non-path children of $t_i$ to $0$ has no effect on the path. The same holds for $f_i$.

Any $v_i$ (which is always on $\pi$) has the property that it is the only output of all of its inputs. Therefore setting any of these inputs to $0$ has no effects downstream, since $v_i$ still has an input that still has value $1$.

Overall, we conclude that setting all children of all OR-gates on $\pi$ to $0$, except those that are on $\pi$ itself, has no impact on any of the gates on $\pi$, and in particular not on $v_n$. Hence any path that proves \ACo{} also proves \ACu{} and vice versa. Combined with the proof of Theorem \ref{thm:comp1}, this now proves Theorem \ref{thm:comp2}.
\end{proof}

\begin{theorem} \label{thm:comp3}
Problem \ref{prob:dec} can be solved in polynomial time for \ACm{}.
\end{theorem}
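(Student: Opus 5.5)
We use the characterisation of Theorem~\ref{thm:mainm}, via its graph-causal-model counterpart Theorem~\ref{thm:acm_app}. The input to Problem~\ref{prob:dec} is a FT $T$, a status vector $\vec{u}$, an event $w$ and a set $\vec{X}$ of events. First reduce to the case $w = \R$ by restricting to the sub-FT rooted at $w$, which is computable in linear time. Next replace every node of $\vec{X}$ that is not a basic event by a fresh basic event: delete its incoming edges and give it the context value $\struc{T}(\vec{u},\cdot)$ it has under $\vec{u}$. This is the reduction the paper uses before Theorem~\ref{thm:mainm} and in the proof of Theorem~\ref{thm:singletonu_app}. It preserves \ACm{}, since in \ACm{} the cause variables are only ever intervened upon or held at their actual values, so their incoming edges never matter. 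It also preserves the values of all other nodes under $\vec{u}$. The resulting instance has $C \subseteq \BE{T'}$, and the characterisation applies.

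We then check the four conditions of Theorem~\ref{thm:mainm} directly. Conditions 1 and 2 need one bottom-up evaluation of the structure function, in time $\mathcal{O}(|V|+|E|)$, plus a check that $\vec{x} = \vec{1}$ and $u_c = 1$ for all $c \in C$. Condition 3 is a further evaluation of $\struc{T'}(D\setminus C,\R)$, where $D$ is the set of failed BEs.

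The only step that might look non-polynomial is condition 4, minimality, which quantifies over all proper subsets of $C$. Monotonicity (Theorem~\ref{thm:mono}) reduces it to $|C|$ checks. Suppose $C' \subsetneq C$ satisfies property 3. Pick any $c \in C \setminus C'$. Then $D\setminus(C\setminus\{c\}) \subseteq D \setminus C'$, so coherence gives $\struc{T'}(D\setminus(C\setminus\{c\}),\R) \le \struc{T'}(D\setminus C',\R) = 0$. Hence $C\setminus\{c\}$ also satisfies property 3. Therefore $C$ is minimal iff for every $c \in C$ we have $\struc{T'}(D\setminus(C\setminus\{c\}),\R) = 1$.

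Each check is one evaluation, so the total running time is $\mathcal{O}(|C|\cdot(|V|+|E|))$, which is polynomial. Lemma~\ref{lem:obv3} transfers the result from graph causal models to $M_T$.

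The main point needing care is the correctness of the reduction from arbitrary events in $\vec{X}$ to basic events. One must verify that cutting the incoming edges of cause nodes changes neither the actual values nor the outcome of any intervention of the form $[X_C \leftarrow y_C, X_W \leftarrow x^*_W]$. This holds because the nodes of $C$ are fixed by the intervention in AC2$^m$ and take their actual values in AC1. The monotonicity argument for minimality is routine by comparison.
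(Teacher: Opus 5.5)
\textbf{Verdict: there is a genuine gap in the first step.} Your overall plan matches the paper's: check the conditions of Theorem~\ref{thm:mainm}, and reduce minimality to the $|C|$ sets $C\setminus\{c\}$ via coherence. The minimality argument itself is correct. The problem is the reduction of non-BE nodes of $\vec{X}$ to basic events. It does \emph{not} preserve \ACm{} when $C$ contains a node together with one of its (indirect) inputs.

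\textbf{Why the reduction fails.} Your justification covers AC1 and AC2$^m$ for $C$ itself. However, AC3 quantifies over proper subsets $C'\subsetneq C$. In the AC2$^m$ test for $C'$, the nodes of $C\setminus C'$ are not intervened upon. In $T$ they react to $X_{C'}\leftarrow\vec{0}$ through their inputs, whereas in $T'$ they are frozen at their actual values.

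\textbf{Counterexample.} Take BEs $a,b$, an AND-gate $c$ with inputs $a,b$, and a root OR-gate $\R$ with inputs $a,c$. Let $u_a=u_b=1$ and $\vec{X}=(X_a,X_c)$.
\begin{itemize}
\item In $T$, $X_a=1$ alone satisfies AC1 and AC2$^m$: with $W=\varnothing$, $[X_a\leftarrow 0]$ forces $X_c=0$ and $X_{\R}=0$. So $X_a=1\wedge X_c=1$ violates AC3 and is not a cause.
\item In $T'$, $X_{\R}=X_a\vee X_c$ with $a,c$ basic events. $D\setminus C$ evaluates to $0$, while removing only $a$ or only $c$ from $D$ leaves the root at $1$. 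So conditions 1--4 of Theorem~\ref{thm:mainm} hold, and your procedure wrongly answers ``yes''.
\end{itemize}
Monotonicity only gives the other direction: a cause in $T$ is a cause in $T'$. The paper's proof performs the same reduction (even setting $u_v=1$ rather than the actual value), so it has the same hole. The statement itself is nevertheless true.

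\textbf{How to repair it.} Skip the reduction and work in $G_T$ directly.
\begin{itemize}
\item For any nonempty $C'\subseteq C$, AC2$^m$ for $X_{C'}=\vec{1}$ holds iff $\struc{G_T}(\vec{u},w)[X_{C'}\leftarrow\vec{0}]=0$. Taking $y_{C'}=\vec{0}$ is optimal. Under $[X_{C'}\leftarrow\vec{0}]$ every node is at most its actual value, so additionally holding any $W$ at its actual values can only increase values. This is exactly the argument in the proof of Theorem~\ref{thm:acm_app}, which never uses $C\subseteq\BE{T}$.
\item The map $C'\mapsto\struc{G_T}(\vec{u},w)[X_{C'}\leftarrow\vec{0}]$ is antitone, so your single-removal argument carries over verbatim.
\item Hence $\vec{X}=\vec{1}$ is a cause iff all nodes of $C\cup\{w\}$ have actual value $1$, $\struc{G_T}(\vec{u},w)[X_C\leftarrow\vec{0}]=0$, and $\struc{G_T}(\vec{u},w)[X_{C\setminus\{c\}}\leftarrow\vec{0}]=1$ for every $c\in C$. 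That is $|C|+2$ linear-time evaluations of intervened models.
\end{itemize}
Alternatively, your reduction is sound when $C$ is an antichain (no element of $C$ is an ancestor of another). In that case, intervening on $C'$ does not change the values of the nodes in $C\setminus C'$.
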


\begin{proof}
Let $T$ be the given FT. Let $C$ be the set of FT nodes represented by $\vec{X}$, and let $T'$ be the FT obtained from $T$ by:
\begin{itemize}
    \item Removing all vertices that are not in the sub-FT with root $w$;
    \item turning each node $v$ in $C$ into a BE, severing the connection to all its inputs, and setting $u_v = 1$;
    \item removing the nodes that no longer have a path to the root.
\end{itemize}
Then $\vec{X} = \vec{1}$ is an actual cause of $X_w = 1$ in $T$ if and only if this is true in $T'$. In $T'$, we are in the situation of Theorem \ref{thm:mainm}, so we just need to check that properties 1-4 of this Theorem are satisfied. Property 2 is automatically satisfied, and properties 1 and 3 just involve computing the structure function bottom-up (for two different inputs), which can be done in linear time. To check property 4, we need to check 1-3 for each set of the form $C\setminus\{x\}$, of which there are only linearly many. We conclude that we can determine whether $\vec{X} = \vec{1}$ is an actual cause of $X_w = 1$ under \ACm{} in quadratic time.
\end{proof}

\subsection{Proof of Theorem \ref{thm:algo}} \label{app:algo}

We first prove correctness. In Remark \ref{rmk:ptu}, it is argued that a path of 1s that avoids all forbidden pairs of $P := P_{T,\vec{u}}$ also avoids all forbidden pairs of $T$. Therefore, $X_v = 1$ is an actual cause (under \ACo{}) for $X_{\R} = 1$ in $T$ if and only if it is so in $P_{T,\vec{u}}$. Now let
\[
\mathcal{H} = \{h \mid (h,l) \in \mathcal{F}(P)\}
\]
be the set of all high elements of forbidden pairs in $P$. A straightforward induction from $\R$ downwards shows that for all $H \subseteq \mathcal{H}$, we have
\begin{align*}
H \in \mathcal{S}(v) \Leftrightarrow& \exists \textrm{ path } \pi\colon v \rightarrow \R \textrm{ s.t. } \pi \cap \mathcal{H} = \varnothing\\
& \textrm{and $\pi$ contains no low elements corresponding to $H$.}
\end{align*}
This means that a path $v \rightarrow \R$ not containing both elements of a forbidden pair exists iff $\mathcal{S}(v) \neq \varnothing$. It is precisely the inputs of such $v$ that have a path as in Theorem \ref{thm:maino}, which shows that line 24 of Algorithm \ref{alg:aco} returns the set of all $v$ for which $X_v = 1$ satisfies \ACo{} for $X_{\R} = 1$.

Let us now consider its complexity. Computing $P_{T,\vec{u}}$ requires computing $\struc{T}(\vec{u},v)$ for all $v$, which takes $\mathcal{O}(|E|)$ time. To find $\mathcal{F}(P)$, we check every pair of edges to see whether they connect a forbidden pair to its shared child. There are $|E|^2$ pairs to check, and each check takes $\mathcal{O}(|V|)$ time since we need to see whether the high and low element are hierarchically related. Finally, to compute $\mathcal{S}(v)$ for each $v$, we pass along an element of $\mathcal{P}(\mathcal{P}(\mathcal{F}(P)))$ along each edge, and this takes $\mathcal{O}(2^{k}|E|)$ time. We conclude that this procedure as a whole has time complexity $\mathcal{O}(|E|^2|V|+2^k|E|)$.

\subsection{Proof of Theorem \ref{thm:algu}}

We first compute $P_{T,\vec{u}}$ ($\mathcal{O}(|E|)$ time per Section \ref{app:algo}), and we compute all its forbidden pairs ($\mathcal{O}(|E|^2)$ time). To determine whether a given vertex $v$ is an actual cause of $X_{\R} = 1$ under \ACu{}, we first check whether $v,\R \in P_{T,\vec{u}}$. Then, we enumerate all $\mathcal{O}(2^{|V|})$ paths $v \rightarrow \R$ in $P_{T,\vec{u}}$. For each such path, we check whether setting all children of OR-gates to $0$ changes the value of $X_{\R}$ by recomputing $P_{T,\vec{u}}$ ($\mathcal{O}(|E|)$ time). Finally, we check if the path contains some forbidden pair ($\mathcal{O}(|V|)$ time per pair for $\mathcal{O}(|E|^2)$ pairs, so $\mathcal{O}(|V||E|^2)$ time). Since we have to do this per path and per $v$, the total computation time is

\[
\mathcal{O}(|E|+|E|^2+2^{|V|}|V|(|E|+|V||E|^2|) = \mathcal{O}(2^{|V|}|V|^2|E|^2).
\]

\subsection{Proof of Theorem \ref{thm:algm}}

By Lemma \ref{lem:mps}, it suffices to construct $\check{P}_{T,\vec{u}}$ ($\mathcal{O}(|E|)$ time per Section \ref{app:algo}): if this does not contain $\R$, then $X_{\R} = 1$ has no causes. If it does, we need to find all minimal cut sets of $\check{P}_{T,\vec{u}}$. The MICSUP algorithm \cite{Vesely1981FaultTreeHandbook} computes all MCSs bottom-up, by storing a set of sets of BEs at each node (for details see \cite{lopuhaa2022efficient}). Thus at every node we have $\mathcal{O}(2^{|\BE{T}|})$ sets that are communicated upwards once over each edge, at which point they are combined; this takes $\mathcal{O}(2^{|\BE{T}|}|E|)$ time. Thus the total time complexity is $\mathcal{O}(|E|+2^{|\BE{T}|}|E|) = \mathcal{O}(2^{|\BE{T}|}|E|)$.

\end{document}